\def\bw{\mathbf{w}}
\def\bz{z}
\def\bwS{\bw^\ast_S}
\def\O{\mathcal{O}}
\def\EX{\mathbb{E}}
\def\Z{\mathcal{Z}}
\def\R{\mathbb{R}}
\def\barw{\bar{\mathbf{w}}}
\def\tw{\widetilde{\mathbf{w}}}
\def\X{\mathcal{X}}
\def\Y{\mathcal{Y}}
\def\Z{\mathcal{Z}}
\def\W{\mathcal{W}}
\def\mbI{\mathbb{I}}
\def\Proj{{\Pi}_\W}
\theoremstyle{definition}
\def\begeqn{\begin{equation}}
\def\endeqn{\end{equation}}
\def\begth{\begin{theorem}}
\def\endth{\end{theorem}}
\def\begprop{\begin{proposition}}
\def\endprop{\end{proposition}}
\def\begcor{\begin{corollary}}
\def\endcor{\end{corollary}}
\def\begdef{\begin{definition}}
\def\enddef{\end{definition}}
\def\beglemm{\begin{lemma}}
\def\endlemm{\end{lemma}}
\def\begexm{\begin{example}}
\def\endexm{\end{example}}
\def\begrem{\begin{remark}}
\def\endrem{\end{remark}}
\def\begdef{\begin{definition}}
\def\enddef{\end{definition}}
\newcommand{\zhenhuan}[1]{{\color{blue}{\noindent{zhenhuan: \bfseries [}{ \sffamily #1}{\rm\bfseries ]~}}}}
\title{Simple Stochastic and Online Gradient Descent Algorithms for Pairwise Learning}
\author{%
Zhenhuan Yang$^{1*}$ \quad Yunwen Lei$^{2*}$ \quad Puyu Wang$^3 $\quad  Tianbao Yang$^4$\quad Yiming Ying$^1$\\ 
$^1$University at Albany, SUNY, Albany, NY\quad $^2$University of Birmingham, Birmingham\quad\\
$^3$City University of Hong Kong, Hong Kong\quad $^4$University of Iowa City, IA\\
zyang6@albany.edu, y.lei@hbam.ac.uk, puyuwang@cityu.edu.hk,\\
tianbao-yang@uiowa.edu, yying@albany.edu
}
\begin{document}

\maketitle

\begin{abstract}
Pairwise learning refers to learning tasks where  the loss function depends on a pair of  instances. It instantiates many important machine learning tasks such as bipartite ranking and metric learning. A popular approach to handle streaming data in pairwise learning is an online gradient descent (OGD) algorithm, where one needs to pair the current instance with a buffering set of previous instances with a sufficiently large size and therefore suffers from a scalability issue. In this paper, we propose simple stochastic and online gradient descent methods for pairwise learning. A notable difference from the existing studies  
is that we only pair the current instance with the previous one in building a gradient direction, which is efficient in both the storage and computational complexity. We develop novel stability results, optimization, and generalization error bounds for both convex and nonconvex as well as both smooth and nonsmooth problems. We introduce novel techniques to decouple the dependency of models and the previous instance in both the optimization and generalization analysis. Our study resolves an open question on developing meaningful generalization bounds for OGD using a buffering set with a very small fixed size. We also extend our algorithms and stability analysis to develop differentially private SGD algorithms for pairwise learning which significantly improves the existing results.
\let\thefootnote\relax\footnote{$^{*}$ Equal contributions. The corresponding author is Yiming Ying. }
\end{abstract}

\vspace*{-5mm}

\section{Introduction}
Many important learning tasks involve pairwise loss functions which are often referred to as pairwise learning.  Such notable learning tasks include AUC maximization \cite{gao2013one,Kar,liu2018fast,wang2012generalization,ying2016stochastic,zhao2011online}, metric learning \cite{bellet2013survey,Kulis,Weinberger,Xing,YL}, and a minimum error entropy principle \cite{hu2013learning}.  For instance, AUC maximization  aims to rank positive instances above negative ones which involves a loss $f(\bw; (x,y), (x',y')) = ( 1-\bw^\top (x-x') )_+\,\mbI_{[y=1\wedge y'=-1]}$ with $x, x'\in \X\subseteq \mathbb{R}^d$ and $y,y'\in \Y = \{\pm 1\}.$ The aim of metric learning is to learn a distance function $h_\bw(x,x') = (x-x')^\top \bw (x-x')$ where $\bw$ is positive semi-definite matrix in $\mathbb{R}^{d\times d}$. A typical pairwise loss can be  $f(\bw; (x,y), (x',y')) =(1+\tau(y,y')h_\bw(x,x'))_+$
where $\tau(y,y')=1$ if $y=y'$ and $-1$ otherwise.   Given a training data $S = \{z_i = (x_i, y_i) \in \X \times \Y:  i\in  [n]\}$ where $[n] = \{1,2,\ldots, n\}$,   the ERM formulation for pairwise learning is defined as $\min_{\bw\in \W} \frac{1}{n(n-1)}\sum_{i,j\in [n], i\neq j}^n f(\wbf; \bz_i, \bz_j)$ where $\W$ denotes the parameter space.  This scheme has been well studied theoretically using algorithmic stability \cite{agarwal2009generalization} and  U-statistics tools \cite{clemenccon2008ranking}. At the same time,  there are considerable interests on developing and studying online gradient descent (OGD) or stochastic gradient descent (SGD) algorithms for pairwise learning due to their scalability in practice.    

The critical issue for designing such stochastic  algorithms is to construct intersecting pairs of instances for updating the model parameter upon receiving individual instances.  For the offline (finite-sum) setting where the prescribed training data $S = \{\bz_1,\ldots, \bz_n\}$ is available, one natural approach is to, at each time $t$,  randomly select a pair of instances $(\bz_{i_t}, \bz_{j_t})$ from $n(n-1)/2$ pairs and update $\bw$ based on the gradient of the local error $f(\bw_{t-1};\bz_{i_t},\bz_{j_t})$. The excess generalization and stability was nicely established in \cite{lei2020sharper}. One popular approach \cite{Kar,wang2013online,zhao2011online,YZ} is to consider the online setting where the data is continuously arriving. This approach pairs the current datum $\bz_t =(x_t,y_t)$, which is received at time $t$, with all previous instances $S_{t-1} = \{\bz_1,\ldots, \bz_{t-1}\}$ and then performs the update based on the gradient of the local error  $f(\wbf_{t-1}; S_{t-1}) = \frac{1}{t-1}\sum_{z\in S_{t-1}} f(\bw_{t-1}; \bz_t, \bz)$. It requires a high gradient complexity $\O(t)$ (i.e. the number of computing gradients) which is expensive when $t$ becomes large. To mitigate this potential limitation, \cite{Kar,zhao2011online,wang2012generalization} proposed to use a buffering set $B_{t-1} \subseteq S_{t-1}$ of size $s$ and the local error $f(\wbf_{t-1}; B_{t-1}) = \frac{1}{s}\sum_{\bz\in B_{t-1}} f(\bw_{t-1}; \bz_t, \bz)$ which reduces the gradient complexity to $\O(s)$. The excess generalization bound $\O(\frac{1}{\sqrt{s}}+ \frac{1}{\sqrt{n}})$ was established in \cite{Kar} using the online-to-batch conversion method \cite{cesa2004generalization} which is only meaningful for a very large $s$.  In particular, this bound tends to zero only when $s=s(n)$ tends to infinity as $n$ tends to infinity. 
It was mentioned in \cite{Kar} (see the discussion at the end of Section 7 there) as an open question on how to get a meaningful bound for a fixed constant $s$.   

In this paper,  we show that optimal generalization bounds can be achieved for simple SGD and OGD algorithms for pairwise learning where, at time $t$, the current instance $\bz_{t}$ is only paired with the previous instance $\bz_{t-1}$. This is equivalent to the First-In-First-Out (FIFO) buffering strategy \cite{wang2012generalization,Kar} while keeping the size $s$ of the buffering set $B_{t-1}$ to be $s=1$, where, in this FIFO policy, the data $z_t$ arriving at
time $t >1$ is included into the buffer by removing $\{z_1,\ldots, z_{t-2}\}$ from the buffer.  In particular, our main contributions are summarized as follows. 

\setlist[itemize]{leftmargin=5mm}
\begin{itemize}
\item We propose simple SGD and OGD algorithms for pairwise learning where the $t$-th update of the model parameter is based on the interacting of the current instance and the previous one which has a constant gradient complexity $\O(1)$.

\item We establish the stability results of the proposed SGD algorithms for pairwise learning and apply them to derive optimal excess generalization bounds $\O(1/{\sqrt{n}})$ for the proposed simple SGD algorithm for pairwise learning  with both convex and nonconvex as well as both smooth and nonsmooth losses in the offline (finite-sum) setting where the training data of size $n$ is given. We introduce novel techniques to decouple the dependency of the current SGD iterate with the previous instance in both the generalization and optimization error analysis, which resolves the open question in \citep{Kar} on how to develop meaningful generalization bounds when the buffering set of FIFO has a very small size.

\item  We further develop a localization version of our SGD algorithms under $(\epsilon,\delta)$-differential privacy (DP) constraints, and apply the obtained stability results to derive an optimal utility (excess generalization) bound $\tilde{\ocal}\big({1}/{\sqrt{n}} + {\sqrt{d}}/{n\epsilon}\big)$. In contrast to the existing work \cite{huai2020pairwise} which requires the loss function to be smooth and an at least quadratic gradient complexity, our proposed DP algorithms only need a linear gradient complexity $\tilde{\ocal}(n)$ for smooth convex losses to achieve the optimal utility bound and can also be applied to non-smooth convex losses. 
\end{itemize} 

The paper is organized as follows.  Section \ref{sec:related-work} reviews the related work and Section \ref{sec:alg} describes the proposed algorithms. In Section \ref{sec:gen}, we present  excess generalization bounds, stability results, and optimization errors of our algorithms. Section \ref{sec:private} is devoted to the differentially private SGD  for pairwise learning  and its utility bounds. Section \ref{sec:exp} provides experimental validation of theoretical findings. The paper is concluded in Section \ref{sec:conclusion}. All the main proofs are postponed to the Appendix.  
\vspace*{-2mm}
\section{Motivating Examples and Related Work}\label{sec:related-work}  \vspace*{-2mm}
In this section, we list examples of pairwise learning and discuss some related work. 
\subsection{Motivating Examples} 


\textbf{AUC maximization.} Area under the ROC curve (AUC) of a prediction function $h_\wbf$ is the probability that the function ranks a random positive example higher than a random negative example.  The empirical risk of AUC maximization is given by
$
F_S(\wbf) = \frac{1}{n(n-1)} \sum_{i,j\in [n], i\neq j} \ell(h_\wbf(\xbf_i) - h_\wbf(\xbf_j)) \ibb_{[y_i=1]}   \ibb_{[y_j=-1]},$
where the loss $\ell(\cdot)$ can be the least square loss $\ell(t) = (1-t)^2 $ or the hinge loss $\ell(t) = (1-t)_+.$

\textbf{Minimum error entropy principle.} Minimum error entropy (MEE) is a principle of information theoretical learning~\citep{hu2013learning,hu2015regularization}, which aims to find a predictor $h:\xcal\mapsto\ycal$  by minimizing the information entropy of the variable $E=Y-h(X)$. The R\'enyi's entropy of order $2$ for $E$ is defined as
$H(E)=-\log\int p_E^2(e)de$. Here $p_E$ is probability density function and can be approximated by Parzen windowing $\hat{p}_E(e)=\frac{1}{n\gamma}\sum_{i=1}^{n}G\big(\frac{(e-e_i)^2}{2\gamma^2}\big)$, where $e_i=y_i-h(x_i)$, and $\gamma>0$ is an MEE scaling parameter and $G:\rbb\mapsto\rbb^+$ is a windowing function. The approximation of R\'enyi's entropy is given by its empirical version
$
\hat{H}=-\log\frac{1}{n^2\gamma}\sum_{i,j\in[n]}G\big(\frac{(e_i-e_j)^2}{2\gamma^2}\big).
$
The maximization of $\hat{H}$ then leads to a pairwise learning problem since each loss function involves a pair of training examples~\citep{hu2013learning}.

\textbf{Metric learning.} Metric learning aims to fine a distance metric $h_{\bw}:\xcal\times\xcal\mapsto\rbb_+$ consistent with some supervised information, e.g., examples within the same class are close while examples from different classes are apart from each other under the learnt metric. If $\ycal=\{\pm1\}$, the performance of $h$ on an example pair $z,z'$ can be quantified by a loss function of the form $f(\bw;z,z')=\ell(yy'(1-h_{\bw};x,x'))$, where $\ell:\rbb\mapsto\rbb_+$ is a decreasing function for which some typical choices include the hinge loss $\ell(t) = (1-t)_+$ and the exponential function $\ell(t)=\log(1+\exp(-t))$. Then one can minimize the training error $F_S(\bw)=\frac{1}{n(n-1)}\sum_{i,j\in[n]:i\neq j}f(\bw;z_i,z_j)$ to learn a distance metric. Another closely related learning task to metric learning  is constrastive learning \cite{sohn2016improved, wu2018unsupervised, oord2018representation, chen2020simple} which has become very popular recently for learning visual representations without supervision.

\vspace*{-2mm}
\subsection{Related Work}  
The work \cite{Kar,wang2012generalization,wang2013online,zhao2011online} assumed the online learning setting in which a stream of i.i.d. data $\{\bz_1,\bz_2, \ldots,\bz_t, \ldots\}$ is continuously arriving. Upon receiving $\bz_t$ at time $t$, it is paired with all previous instances and then the model parameter is updated based on the local error $F_t(\wbf_{t-1}) = \frac{1}{t-1}\sum_{j=1}^{t-1} f(\wbf_{t-1}; \zbf_t, \zbf_j).$ In particular,  the work \cite{wang2012generalization,wang2013online} provided the first excess generalization bound for online learning methods by obtaining online-to-batch conversion bounds \citep{cesa2004generalization} using covering
numbers of function classes.   \citet{Kar} significantly improved the results   using the so-called symmetrization
of expectations which reduce excess risk estimates to Rademacher complexities.  To  further reduce the expensive gradient complexity $\O(t)$ at a large time $t$, the work \cite{Kar,wang2013online,zhao2011online} proposed to use a buffering set $B_{t-1}$ with size $s$ instead of all previous instances. It was shown that such OGD algorithms have an excess generalization bound $\O(\frac{1}{\sqrt{s}} + \frac{1}{\sqrt{n}})$ for convex and Lipschitz-continuous  losses.  
In the offline learning setting where the training data $S = \{\bz_1,\bz_2, \ldots,\bz_n\}$ of size $n$ is fixed, the work \cite{yang2021stability} considered the stochastic version of the algorithm in \cite{wang2012generalization,Kar} where, at time $t$, a random instance $\bz_{i_t}$ with random index $i_t\in \{1,\ldots,n\}$ is paired with all previous instances $\{\bz_{i_1},\ldots, \bz_{i_{t-1}}\}.$ They derived stability and generalization results of such algorithms in expectation.

\citet{lei2020sharper} considered the offline learning setting  and, at time $t$, the algorithm there randomly picks a pair of instances $(\zbf_{i_t}, \zbf_{j_t})$  from all $\binom{n}{2}$ pairs of instances. An excess risk bound $\tilde{\ocal}\big(1/{\sqrt{n}}\big)$ with high probability was derived for convex, Lipschitz and strongly smooth losses. Here the notation $\tilde{\ocal}(\cdot)$ means $\ocal(\cdot)$ up to some logarithmic terms.  In the particular case of AUC maximization with the least square loss,   \cite{ying2016stochastic} considered the online learning setting and reformulated the problem as a stochastic saddle point (min-max) problem which decouples the pairwise structure. From this reformulation, efficient SGD-type algorithms \cite{YL,liu2019stochastic} have been developed.

Recently, differentially private pairwise learning  has been studied where the construction of pairs of instances follows \cite{Kar,wang2012generalization,zhao2011online} or  all pairs are used at each iteration.  In particular,  \citet{huai2020pairwise}   considered both online and offline learning settings  and the pairs of instances at time $t$ follow \cite{Kar,wang2012generalization,zhao2011online}. They provided a utility bound   $\widetilde{\ocal}(\sqrt{d}/(\sqrt{n}\epsilon))$ for convex and smooth loss functions.  The study \cite{yang2021stability} also paired the current instance with all previous ones and showed that SGD with output perturbation for pairwise learning has a utility bound  $\widetilde{\ocal}(\sqrt{d}/(\sqrt{n}\epsilon))$ for nonsmooth convex losses.  The work  \citep{xue2021differentially} showed that private gradient descent  using all possible pairs can achieve a utility bound   $\widetilde{\ocal}(1/\sqrt{n} + \sqrt{d}/(n\epsilon))$ for strongly smooth and convex losses.

\section{Proposed Algorithms for Pairwise Learning}\label{sec:alg}
In this section,  we describe the proposed algorithms in two common learning settings for pairwise learning: offline and online learning settings. Let $\rho$ be a probability measure defined on $\Z:=\X\times\Y$, where $\X$ is an input space and $\Y$ is an output space.   In pairwise learning, the performance of ${\bw}$ is measured on a pair of instances $(\bz,\bz')$ by a nonnegative loss function $f(\bw;\bz,\bz')$. 
Denote by $[n]: = \{1,2,\ldots, n\}$ for any $n\in\mathbb{N}$.

\vspace*{-2mm}
\begin{minipage}[t]{0.5\textwidth}
\begin{algorithm}[H]
\begin{algorithmic}[1]
\caption{SGD for Pairwise Learning}\label{alg:markov}
\STATE {\bf Inputs:} $S\!= \!\{\bz_i\!:  i\in [n]\}$ and step sizes $\{\eta_t\}$\!
\STATE {\bf Initialize:}  $\wbf_0\in\W$, let $\bw_{-1} =\bw_0$ and randomly select $i_{0}\in [n]$
\FOR{$t=1,2,\ldots,T$}
\STATE \!\!\!\! {Randomly select ${i_{t}} \in [n]$ }
\STATE \!\!\!{$\!\wbf_{t} \!=\! \Proj\bigl(\wbf_{t-1} \!-\! {\eta_t}\nabla f(\wbf_{t-1}; \bz_{i_{t}}, \bz_{i_{t-1}})\bigr)$} \\ \ENDFOR
\STATE {\bf Outputs:} $\bar{\wbf}_T = {\sum_{j=1}^T\eta_j \bw_{j-2}}/\sum_{j=1}^T \eta_j$
\end{algorithmic}
\end{algorithm}
\end{minipage}
\hfill
\begin{minipage}[t]{0.5\textwidth}
\begin{algorithm}[H]
\begin{algorithmic}[1]
\caption{OGD for Pairwise Learning}\label{alg:OGD}
\STATE {\bf Inputs:} learning rates $\{\eta_t\}$ \\
\STATE {\bf Initialize:}  $\wbf_0\in\W$, let $\bw_{-1} =\bw_0$ and receiving datum $z_0$.  \\
\FOR{$t=1,2, \ldots, T$}
\STATE {Receive a data point $z_t$}
\STATE {$\wbf_{t} = \Proj\bigl(\wbf_{t-1} - {\eta_t}\nabla f(\wbf_{t-1}; \bz_{t}, \bz_{t-1})\bigr)$} \\  
\ENDFOR
\STATE {\bf Outputs:} $\tw_T = {\sum_{j=1}^T\eta_j \bw_{j-2}}/\sum_{j=1}^T \eta_j$
\end{algorithmic}
\end{algorithm}
\end{minipage}

\noindent{\bf Offline Learning (Finite-Sum) Setting.} The first is the  finite-sum setting where the training data $S=\{\bz_i = (x_i,y_i)\in \Z:  i \in [n]\}$ are drawn independently according to $\rho$. 
In this context, one aims to  solve the following empirical risk minimization (ERM):
\begin{equation}\label{eq:erm}
\bwS = \argmin_{\bw\in \W} \Big[F_S(\wbf) := \frac{1}{n(n-1)}\sum_{i,j\in [n], i\neq j} f(\wbf; \bz_i, \bz_j)\Big].\end{equation}
Our proposed algorithm to solve \eqref{eq:erm} is described in Algorithm \ref{alg:markov}. The notation $\Proj(\cdot)$ there denotes the projection operator to $\W.$  In particular,  at  iteration $t$,  it randomly selects one instance $\bz_{i_t}$ from the uniform distribution over $[n]$ and pairs it only with the previous instance $\bz_{i_{t-1}}$, and then do the gradient descent based on $\nabla f(\wbf_{t-1}; \bz_{i_{t}}, \bz_{i_{t-1}}).$  This is in contrast to the classical SGD for pairwise learning in \cite{wang2012generalization,Kar,zhao2011online} where the present instance $\bz_{i_t}$ is paired with all previous instances $\{\bz_{i_1}, \bz_{i_2}, \ldots, \bz_{i_{t-1}}\}$. Note $\bw_{t-1}$ depends on $\bz_{i_{t-1}}$ and then $\nabla f(\bw_{t-1}; \bz_{i_{t}}, \bz_{i_{t-1}})$ is not an unbiased estimate of $\nabla F_S(\bw_{t-1})$. Therefore, the standard analysis of SGD does not apply. We introduce novel techniques to handle this dependency in our analysis (see more details in Section \ref{sec:gen}).

\noindent {\bf Online Learning Setting.} In the online learning setting where the data $\{\bz_0, \bz_1, \bz_2, \ldots\}$ is assumed i.i.d. from an unknown distribution $\rho$ on $\Z$, the number of iterations of an online algorithm is identical to the size of available data. In the same spirit to Algorithm \ref{alg:markov}, the pseudo code   is given in Algorithm \ref{alg:OGD}. Specifically, upon  receiving a datum $z_t$ at the current time $t$, we pair it with $\bz_{t-1}$ which was revealed at the previous time $t-1$ and then perform gradient descent based on the gradient $\nabla f(\bw_{t-1}; \bz_t, \bz_{t-1}).$ It aims to minimize the population risk which is defined as $F(\bw)=\EX_{Z,Z'}[f(\bw;Z,Z')].$ Here $\EX_{Z,Z'}$ denotes the expectation with respect to (w.r.t.) $Z,Z'\sim\rho.$

It is worth pointing out that online learning \citep{shalev2012online,hazan2016introduction,orabona2019modern} in general 
does not require the i.i.d.  assumption on the data  and study the regret bounds. In this paper, we mainly consider the statistical performance, measured by excess generalization bounds, of the output $\tw_T$ of Algorithm \ref{alg:OGD} where the streaming data $\{\bz_0, \bz_1, \bz_2, \ldots\}$ is i.i.d. from the population distribution $\rho.$

\begin{remark} As discussed above,  OGD for pairwise learning was proposed and studied in \cite{wang2012generalization,Kar} where the current instance $\bz_t$ is paired with a buffering set $B_t \subseteq \{z_1,\ldots, z_{t-1}\}.$ However, the resultant excess generalization bound is in the form of  $\O(\frac{1}{\sqrt{s}}+ \frac{1}{\sqrt{n}})$ which indicated that the buffer size $s$ needs to be large enough in order to achieve good generalization. Their analysis does not apply to our case since the buffering set $B_t = \{\bz_{t-1}\}$ with size $s =1$ for Algorithm \ref{alg:OGD}.   As we show soon in Section \ref{sec:gen}, we can prove that Algorithm \ref{alg:OGD},  which pairs the current instance $\bz_t$ with the previous instance $\bz_{t-1}$,   still enjoys optimal statistical performance $\O(1/{\sqrt{n}}).$
\end{remark}
\begin{remark}
The work \cite{lei2020sharper} studied the stability and generalization of an SGD-type algorithm for pairwise learning by randomly generating pairs of instances. Specifically, at time $t$, randomly generating a pair $(\bz_{i_t}, \bz_{j_t})$ from  a given set of training data $S=\{\bz_i:  i\in [n]\}$ and the subsequent update is given by $\bw_{t} = \bw_{t-1} - \eta_t \nabla f(\bw_{t-1}; \bz_{i_t}, \bz_{j_t}).$  In contrast, our algorithm, i.e. Algorithm \ref{alg:markov}, updates the model parameter based on the pair of the current random instance and the random one generated at the previous time $t-1$, i.e. $ (\bz_{i_t}, \bz_{i_{t-1}}).$  Furthermore,  our work here significantly differs from \cite{lei2020sharper} in the following aspects. Firstly,  the algorithm there by randomly selecting pairs of instances does not work in the online learning setting while ours can seamlessly deal with the streaming data as stated in Algorithm \ref{alg:OGD}. Secondly, regarding the technical analysis, our algorithms are more challenging to analyze than the algorithm in \cite{lei2020sharper}. Indeed, $\nabla f(\bw_{t-1}; \bz_{i_t}, \bz_{j_t})$ is not an unbiased estimate of $\nabla F_S(\bw_{t-1})$ due to the independency between $\bw_{t-1}$ and $(i_t,j_t)$. Therefore, the optimization error analysis of the algorithm in \cite{lei2020sharper} is the same as the SGD for pointwise learning. As a comparison,  $\nabla f(\bw_{t-1}; \bz_{i_t}, \bz_{i_{t-1}})$ is a biased estimate of $\nabla F_S(\bw_{t-1})$ due to the coupling between $\bw_{t-1}$ and $i_{t-1}$. We introduce novel techniques to handle this coupling for both the optimization and generalization analyses. Thirdly, we will soon see below that we provide generalization results for nonsmooth, nonconvex losses and also use Algorithm \ref{alg:markov} to develop novel differentially private pairwise learning algorithms while \cite{lei2020sharper} focused on  the smooth convex losses in the non-private setting.     
\end{remark}

\begin{remark} If we let $\xi_t = (i_t, i_{t-1})$ in  Algorithm \ref{alg:markov}, then $\{\xi_t: t\in \mathbb{N}\}$ forms a Markov Chain as $\xi_t$ only depends on $\xi_{t-1}$ but not on $\{\xi_1, \ldots, \xi_{t-2}\}.$  Hence, Algorithm \ref{alg:markov} can be regarded as a Markov Chain SGD which was studied in \cite{sun2018markov}.  Despite this similarity, our results differ from \cite{sun2018markov} in two important aspects. Firstly,   we are mainly interested in  stability and  generalization of Algorithm \ref{alg:markov} while  \cite{sun2018markov} focused on the convergence analysis of the Markov Chain SGD.  One cannot apply the results in \cite{sun2018markov} to obtain excess generalization bounds for Algorithm \ref{alg:markov} in terms of the population risk   as we will show soon  in the next section.  Secondly, directly applying  Theorem 1 in \cite{sun2018markov}  only yields a convergence rate of $\O(1/t^{1-q})$ with some $1/2< q< 1$ in the convex setting.  Our proof for the convergence analysis of Algorithm \ref{alg:markov} is much simpler and direct which can yield a faster convergence rate $\O(1/\sqrt{t})$ as shown in Section \ref{sec:optimization}.    
\end{remark}

\section{Generalization Analysis}\label{sec:gen}
The aim for the generalization analysis of Algorithm \ref{alg:markov} and Algorithm \ref{alg:OGD} is the same, i.e.  to analyze the excess generalization error $F(\bw)-F(\bw^*)$ of a model $\bw$ measuring its relative behavior w.r.t. the best model $\bw^*=\arg\min_{\bw\in\wcal}F(\bw)$.

For  Algorithm \ref{alg:markov} where the training data $S$ with $n$ datum is given beforehand, the excess generalization involves the generalization error and optimization error. Specifically,  one has the following error decomposition for~$\barw_T$
\begin{equation}\label{err-dec}
  \ebb\big[F(\barw_T)\big]-F(\bw^*)=\ebb\big[F(\barw_T)-F_S(\barw_T)\big]+\ebb\big[F_S(\barw_T)-F_S(\bw^*)\big].
\end{equation}
Here, the expectation is taken w.r.t. the randomness of Algorithm \ref{alg:markov}, i.e., $\{i_t\}$ and the randomness of data $S$ which is i.i.d. from $\rho$ on $\Z.$ We refer to the first term $\ebb\big[F(\barw_T)-F_S(\barw_T)\big]$ as the generalization error and $\ebb\big[F_S(\barw_T)-F_S(\bw^*)\big]$ as the optimization error. We will use algorithmic stability to handle its generalization error in Subsection \ref{sec:stability} for smooth and nonsmooth losses. The estimation of the optimization error is given in Subsection \ref{sec:optimization} for both convex and nonconvex losses.  For Algorithm \ref{alg:OGD}, there is no generalization error as the data $\{\bz_1, \bz_2, \ldots, \bz_T\}$ is arriving in a sequential manner with $T$ increasing all the time which does not involve the training data. The randomness of Algorithm \ref{alg:OGD} is only from the i.i.d. data. Therefore, the optimization error in this setting is exactly the excess generalization error $F(\widetilde{\bw}_T)-F(\bw^*)$  which is estimated in Subsection \ref{sec:optimization}. 

\subsection{Excess Generalization Error}
In this subsection, we present excess generalization error bounds of Algorithm \ref{alg:markov} in terms of the sample size, iteration number and step size, which shows how to tune these parameters to get a model with good generalization.  
Our analysis requires the following assumptions. 
\begin{assumption} Let $f: \W \times \Z\times \Z \to \R^+$ and let $\|\cdot\|_2$ denote the Euclidean norm.
\vspace*{-1mm}
\begin{enumerate}[label=(\textbf{A\arabic{*}}), leftmargin=*]
\setlength\itemsep{-0mm}
 \item  Assume,  for any $\bz, \bz'$ and $\bw\in \W$, that $ f(\cdot; \bz,\bz')$ is $G$-Lipschitz continuous, i.e.  $|f(\bw;\bz,\bz') - f(\bw';\bz,\bz')| \le G \|\bw-\bw'\|_2.$
   \item  Assume, for any $\bz, \bz'\in \Z$, the map $\bw \mapsto f(\bw; \bz,\bz')$ is $L$-strongly smooth, i.e. $ f(\bw; \bz,\bz') -  f(\bw'; \bz,\bz') - \langle\nabla f(\bw'; \bz,\bz'), \bw-\bw'\rangle \le \frac{L} {2} \|\bw-\bw'\|_2^2.$
  \item  Assume, for any $z, z'\in \Z$, $ f(\cdot; \bz,\bz')$ is $\alpha$-strongly convex, i.e.  $ f(\bw; \bz,\bz') -  f(\bw'; \bz,\bz') - \langle\nabla f(\bw'; \bz,\bz'), \bw-\bw'\rangle \ge \frac{\alpha}{2} \|\bw-\bw'\|_2^2.$ The case of $\alpha=0$ is identical to convexity.
  \item   Assume $F_S$ satisfies the \PL\ (PL) condition with parameter $\mu\!>\!0$, i.e., for $\wbf_S \!\in\!\arg\min_{\wbf \in \wcal} F_S(\wbf)$, there holds
  $
    2\mu\big(F_S(\bw)\!-\!F_S(\bw_S)\big)\!\leq\! \|\nabla F_S(\bw)\|_2^2
  $
  for all $\bw\in\wcal$.
 \end{enumerate}
\end{assumption}
The PL condition ({\bf A4}) means that the suboptimality in terms of function values can be bounded by gradients~\citep{karimi2016linear}. Functions under the PL condition have found various applications including neural networks, matrix factorization, generalized linear models and robust regression~ (see, e.g., \citep{karimi2016linear}). In particular, AUC maximization problem with the classifier given by a one hidden layer network satisfies the PL condition as shown in \cite{liu2019stochastic}.

We first study smooth and non-smooth problems for the convex case, and derive the excess generalization bounds of the order $\O(1/\sqrt{n})$ in both cases. We use the notation $B\asymp \tilde{B} $ if there exist constants $c_1, c_2>0$ such that $c_1\tilde{B} \le B\le c_2\tilde{B} $.   The proofs are given in Section  \ref{sec:proof-gen}.
\begin{theorem}[Nonsmooth Problems\label{thm:excess-nonsmooth}]
Let $\bw_{-1} = \bw_0$ and $\{\bw_t: t\in [T]\}$ be produced by Algorithm \ref{alg:markov} with $\eta_t = \eta > 0$. Let $\barw_T =   \sum_{t=1}^{T} \eta_t\bw_{t-2} / \sum_{t=1}^{T}\eta_t$. Let (A1) and (A3) hold true with $\alpha = 0$.  Then, we have 
\begin{equation}\label{excess-nonsmooth}
\ebb_{S,\acal}[F(\bar{\wbf}_T)]-F(\bw^*)=\O\Big(\sqrt{T}\eta+\frac{T\eta}{n}+\frac{1+T\eta^2}{T\eta}\Big).
\end{equation}
Furthermore, selecting $T\asymp n^2$ and $\eta\asymp T^{-\frac{3}{4}}$ yields that  $\ebb_{S,\acal}[F(\bar{\wbf}_T)]-F(\bw^*)=\O(1/\sqrt{n})$.
\end{theorem}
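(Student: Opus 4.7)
The plan is to follow the error decomposition in \eqref{err-dec}, handling the generalization term by algorithmic stability and the optimization term by a careful SGD-type analysis that deals with the Markov-chain bias. Concretely, I would write $\ebb[F(\barw_T)]-F(\bw^*)=\ebb[F(\barw_T)-F_S(\barw_T)]+\ebb[F_S(\barw_T)-F_S(\bwS)]+\ebb[F_S(\bwS)-F(\bw^*)]$, and note that the last summand is $\le 0$ in expectation since $\bw^*$ is data-independent and $\bwS$ minimizes $F_S$. This reduces the proof to bounding the generalization and optimization errors separately, and matching the three terms $\sqrt{T}\eta$, $T\eta/n$, $(1+T\eta^2)/(T\eta)$ in \eqref{excess-nonsmooth}.

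For the generalization error I would use uniform argument stability in the sense of a pairwise analog of Bassily \emph{et al.} Take two neighbouring samples $S,S'$ differing in one index $k$, run Algorithm~\ref{alg:markov} on both with shared randomness $\{i_t\}$, and track $\|\bw_t-\bw'_t\|_2$. In any single step, non-expansiveness of the projection plus convexity of $f(\cdot;z,z')$ guarantees $\|\bw_t-\bw'_t\|_2\le \|\bw_{t-1}-\bw'_{t-1}\|_2+2\eta G\, \mathbf{1}[\{i_t,i_{t-1}\}\cap\{k\}\neq\emptyset]$ on ``bad'' steps (via the $G$-Lipschitz hypothesis \textbf{(A1)}) and $\|\bw_t-\bw'_t\|_2\le \|\bw_{t-1}-\bw'_{t-1}\|_2$ on ``good'' steps. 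Because the pair $(i_t,i_{t-1})$ hits $k$ with probability $\le 2/n$, the expected number of bad steps is $\le 2T/n$, and a standard Bassily-type concentration argument (e.g., splitting into a guaranteed-good prefix and a bad tail) then yields the stability bound $\O(\eta\sqrt{T}+\eta T/n)$. Lipschitz continuity finally converts this into $\ebb[F(\barw_T)-F_S(\barw_T)]=\O(\eta\sqrt{T}+\eta T/n)$, which matches the first two terms in \eqref{excess-nonsmooth}.

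The optimization error is the main obstacle because $\nabla f(\bw_{t-1};\bz_{i_t},\bz_{i_{t-1}})$ is \emph{not} an unbiased estimator of $\nabla F_S(\bw_{t-1})$: the iterate $\bw_{t-1}$ depends on $i_{t-1}$. Here the shift $\bw_{j-2}$ in the averaging scheme is crucial. The plan is to run the textbook SGD one-step expansion $\|\bw_t-\bwS\|_2^2\le \|\bw_{t-1}-\bwS\|_2^2-2\eta_t\langle\nabla f(\bw_{t-1};\bz_{i_t},\bz_{i_{t-1}}),\bw_{t-1}-\bwS\rangle+\eta_t^2 G^2$, then split the inner product as
\begin{equation*}
\langle\nabla f(\bw_{t-1};\bz_{i_t},\bz_{i_{t-1}}),\bw_{t-1}-\bwS\rangle=\langle\nabla f(\bw_{t-2};\bz_{i_t},\bz_{i_{t-1}}),\bw_{t-2}-\bwS\rangle+R_t,
\end{equation*}
where the remainder $R_t$ collects the drift terms $\langle \nabla f(\bw_{t-1};\cdot)-\nabla f(\bw_{t-2};\cdot),\bw_{t-1}-\bwS\rangle$ and $\langle \nabla f(\bw_{t-2};\cdot),\bw_{t-1}-\bw_{t-2}\rangle$. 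Both are $\O(\eta_t G^2)$ after using \textbf{(A1)} and $\|\bw_{t-1}-\bw_{t-2}\|_2\le \eta_{t-1}G$. The decoupling payoff is that $\bw_{t-2}$ is independent of $(i_{t-1},i_t)$, so taking expectation gives $\ebb_{i_{t-1},i_t}[\nabla f(\bw_{t-2};\bz_{i_t},\bz_{i_{t-1}})\mid \bw_{t-2}]=\nabla F_S(\bw_{t-2})$, which together with convexity yields $\langle \nabla F_S(\bw_{t-2}),\bw_{t-2}-\bwS\rangle\ge F_S(\bw_{t-2})-F_S(\bwS)$. Telescoping, dividing by $\sum_t\eta_t$, and using Jensen's inequality delivers $\ebb[F_S(\barw_T)-F_S(\bwS)]=\O((1+T\eta^2)/(T\eta))$, matching the third term.

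Combining the two bounds gives \eqref{excess-nonsmooth}. The final claim follows by substituting $T\asymp n^2$ and $\eta\asymp T^{-3/4}$: each of $\sqrt{T}\eta$, $T\eta/n$, $1/(T\eta)$, and $\eta$ becomes $\O(n^{-1/2})$, giving $\O(1/\sqrt{n})$. The hard part will be the stability estimate in the nonsmooth pairwise setting, where the Lipschitz-only (non-expansive only under projection) argument forces the $\sqrt{T}$-type penalty and must be adapted to the Markov-chain pair structure; the bias-handling in the optimization step is conceptually the second delicate piece, but the $\bw_{j-2}$ shift makes it essentially mechanical once the decomposition above is in place.
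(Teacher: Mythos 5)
Your overall architecture --- the error decomposition \eqref{err-dec}, algorithmic stability for the generalization term, and a $\bw_{t-2}$-based decoupling for the optimization term --- is exactly the paper's. However, two of your key technical claims are false in the nonsmooth setting, and both amount to implicitly assuming smoothness in a theorem that only assumes (A1) and (A3).

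First, the stability recursion. You claim that on ``good'' steps (neither $i_t$ nor $i_{t-1}$ hits the perturbed index) convexity gives $\|\bw_t-\bw_t'\|_2\le\|\bw_{t-1}-\bw_{t-1}'\|_2$. Exact non-expansiveness of the gradient map requires smoothness (this is Lemma \ref{lem:nonexpansive}, which the paper invokes only for the smooth case, Theorem \ref{lem:est-smooth}); for a merely Lipschitz convex loss, convexity only kills the cross term and leaves $\|\bw_t-\bw_t'\|_2^2\le\|\bw_{t-1}-\bw_{t-1}'\|_2^2+4\eta^2G^2$ on good steps. This matters: if your recursion were correct, you would conclude $\ebb\big[\|\bw_T-\bw_T'\|_2\big]\le 2\eta G\,\ebb\big[\sum_{t}\ibb_{[i_t= n\text{ or }i_{t-1}= n]}\big]=\O(\eta T/n)$, and the $\sqrt{T}\eta$ term you assert would simply not appear. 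The $\O(\eta\sqrt{T})$ penalty is generated precisely by the accumulated $4T\eta^2G^2$ from the non-contractive good steps, combined with a $(1+p)a^2+(1+1/p)b^2$ weighting of the bad steps and an optimization over $p$ --- this is the paper's proof of Theorem \ref{lem:est-nonsmooth}. As written, your recursion and your claimed conclusion are mutually inconsistent, and the ``good prefix / bad tail'' conditioning you invoke is the device the paper uses for the PL case (Lemma \ref{lem:stab-cond}), not the mechanism here.

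Second, the optimization remainder. You split $\langle\nabla f(\bw_{t-1};\cdot),\bw_{t-1}-\bwS\rangle-\langle\nabla f(\bw_{t-2};\cdot),\bw_{t-2}-\bwS\rangle$ into $\langle\nabla f(\bw_{t-1};\cdot)-\nabla f(\bw_{t-2};\cdot),\bw_{t-1}-\bwS\rangle+\langle\nabla f(\bw_{t-2};\cdot),\bw_{t-1}-\bw_{t-2}\rangle$ and assert both are $\O(\eta G^2)$. The second is; the first is not. Without (A2) you cannot bound $\|\nabla f(\bw_{t-1};\cdot)-\nabla f(\bw_{t-2};\cdot)\|_2$ by anything better than $2G$ (consider the hinge loss at a kink), so this term is only of order $G$ times the diameter of $\W$, which destroys the rate. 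The fix is to perform the shift at the level of function values rather than gradients, as in the paper's proof of Theorem \ref{thm:convergence}(a): first use convexity to get $\langle\nabla f(\bw_{t-1};\cdot),\bw_{t-1}-\bwS\rangle\ge f(\bw_{t-1};\cdot)-f(\bwS;\cdot)$, then replace $f(\bw_{t-1};\cdot)$ by $f(\bw_{t-2};\cdot)$ at a cost of $G\|\bw_{t-1}-\bw_{t-2}\|_2\le G^2\eta_{t-1}$ using only the Lipschitzness of $f$ itself, and only then take the expectation that turns $f(\bw_{t-2};\bz_{i_t},\bz_{i_{t-1}})$ into $F_S(\bw_{t-2})$. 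With these two repairs your argument becomes the paper's; the parameter choice $T\asymp n^2$, $\eta\asymp T^{-3/4}$ is checked correctly.
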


\begin{theorem}[Smooth Problems\label{thm:excess-smooth}]
Let (A1), (A2) and (A3) hold true with $\alpha = 0$.
Let $\bw_{-1} = \bw_0$ and $\{\bw_t: t\in [T]\}$ be produced by Algorithm \ref{alg:markov} with $\eta_t = \eta\leq 2/L$. Let $\barw_T =   \sum_{t=1}^{T} \eta_t\bw_{t-2} / \sum_{t=1}^{T}\eta_t$.   Then, there holds
\begin{equation}\label{excess-smooth}
\ebb_{S,\acal}[F(\bar{\wbf}_T)]-F(\bw^*)=\O\Big(\frac{T\eta}{n}+\frac{1+T\eta^2}{T\eta}\Big).
\end{equation}
Furthermore, choosing $T\asymp n$ and $\eta\asymp T^{-\frac{1}{2}}$ implies that  $\ebb_{S,\acal}[F(\bar{\wbf}_T)]-F(\bw^*)=\O(1/\sqrt{n})$.
\end{theorem}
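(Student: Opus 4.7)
The plan is to follow the error decomposition in \eqref{err-dec}, handling the generalization error via algorithmic stability and the optimization error via a modified SGD analysis that uses the averaging over $\bw_{t-2}$ (instead of $\bw_{t-1}$) to decouple the bias coming from the dependency between $\bw_{t-1}$ and $\bz_{i_{t-1}}$. The two pieces are $\O(T\eta/n)$ and $\O(1/(T\eta)+\eta)$ respectively; summing and balancing with $T\asymp n$ and $\eta\asymp T^{-1/2}$ will deliver the $\O(1/\sqrt n)$ rate.

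For the generalization error $\EX[F(\barw_T)-F_S(\barw_T)]$, I would invoke the on-average argument stability bound for Algorithm \ref{alg:markov} established in the stability subsection referenced earlier, which is the pairwise analogue of Hardt--Recht--Singer. Under (A1)--(A2) with $\eta\le 2/L$ and convexity $(\alpha=0)$, each gradient step $\bw\mapsto \bw-\eta\nabla f(\bw;\bz,\bz')$ is non-expansive; the only perturbation at step $t$ that a swapped sample can cause is bounded by $\eta G$, and the expected number of steps in which a given sample is involved (either as $\bz_{i_t}$ or $\bz_{i_{t-1}}$) is $\O(T/n)$. Because $f$ is $G$-Lipschitz, this yields a generalization bound of order $\O(T\eta G^2/n)$.

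For the optimization error $\EX[F_S(\barw_T)-F_S(\bw^*)]$, I start from the projected-SGD one-step recursion
\begin{equation*}
\|\bw_t-\bw^*\|_2^2\le \|\bw_{t-1}-\bw^*\|_2^2 - 2\eta\big\langle \nabla f(\bw_{t-1};\bz_{i_t},\bz_{i_{t-1}}),\,\bw_{t-1}-\bw^*\big\rangle+\eta^2 G^2.
\end{equation*}
The inner product is not directly amenable to taking expectation because, conditionally on the history $\mathcal{F}_{t-2}$, the iterate $\bw_{t-1}$ depends on $i_{t-1}$, which also indexes the sample used in the gradient. The decoupling device is to move both arguments back to $\bw_{t-2}$: by (A2) one has $\|\nabla f(\bw_{t-1};\bz_{i_t},\bz_{i_{t-1}})-\nabla f(\bw_{t-2};\bz_{i_t},\bz_{i_{t-1}})\|_2\le L\|\bw_{t-1}-\bw_{t-2}\|_2\le L\eta G$, and $\|\bw_{t-1}-\bw^*\|_2\le \|\bw_{t-2}-\bw^*\|_2+\eta G$. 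Cauchy--Schwarz then lets me replace the inner product by $\langle\nabla f(\bw_{t-2};\bz_{i_t},\bz_{i_{t-1}}),\bw_{t-2}-\bw^*\rangle$ at the price of additive terms of order $\eta G^2(1+\|\bw_{t-2}-\bw^*\|_2)$, which are absorbed into the $\eta^2$ slack using (A1) and the boundedness of $\W$. Now $\bw_{t-2}\in\mathcal{F}_{t-2}$ is independent of the fresh uniform draws $i_{t-1},i_t$, so $\EX[\nabla f(\bw_{t-2};\bz_{i_t},\bz_{i_{t-1}})\mid \mathcal{F}_{t-2},S]=\nabla F_S(\bw_{t-2})$ up to an $\O(1/n)$ correction from the $i_t=i_{t-1}$ coincidence, and convexity gives $\langle \nabla F_S(\bw_{t-2}),\bw_{t-2}-\bw^*\rangle\ge F_S(\bw_{t-2})-F_S(\bw^*)$.

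Summing the resulting recursion from $t=1$ to $T$, telescoping $\|\bw_t-\bw^*\|_2^2$, and using Jensen's inequality on $\barw_T=\sum_t\eta_t\bw_{t-2}/\sum_t\eta_t$ produces $\EX[F_S(\barw_T)-F_S(\bw^*)]=\O(1/(T\eta)+\eta)=\O((1+T\eta^2)/(T\eta))$. Adding this to the stability bound $\O(T\eta/n)$ gives \eqref{excess-smooth}; the choices $T\asymp n$ and $\eta\asymp T^{-1/2}$ equalize all three terms at $\O(1/\sqrt n)$. The main obstacle is the decoupling step, namely bookkeeping the cross terms generated by shifting $\bw_{t-1}\to\bw_{t-2}$ and showing that their cumulative contribution is only $\O(T\eta^2)$, so that after division by $\sum_t\eta_t\asymp T\eta$ they produce the benign $\O(\eta)$ summand rather than damaging the rate.
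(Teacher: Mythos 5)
Your proposal is correct and follows essentially the same route as the paper: the error decomposition \eqref{err-dec}, the non-expansiveness-based stability bound $\O(T\eta/n)$ of Theorem \ref{lem:est-smooth}, and the one-step SGD recursion with the shift to $\bw_{t-2}$ to decouple the iterate from $i_{t-1}$, exactly as in Theorem \ref{thm:convergence}(a). The only (harmless) deviation is in the decoupling step: the paper first applies convexity of $f$ and then shifts the \emph{function value} $f(\bw_{t-1};\cdot)\to f(\bw_{t-2};\cdot)$ using only the Lipschitz condition (A1), costing $2G^2\eta_t\eta_{t-1}$ per step and requiring neither smoothness nor a bounded domain, whereas you shift the \emph{gradient} via (A2) and control the resulting cross term with the diameter of $\W$ --- slightly more assumptions for the same $\O(\eta)$ residual.
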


\begin{remark} Notice that the gradient complexity (i.e. the number of computing gradients) of Algorithm \ref{alg:markov} is identical to the number of iterations $T.$ 
The above results show, to get excess generalization bounds $\O(1/\sqrt{n})$, that  Algorithm $\ref{alg:markov}$ requires a gradient complexity $\O(n^2)$  for nonsmooth problems, and $\O(n)$  for smooth problems. This matches the existing generalization analysis for pointwise learning~\citep{lei2020fine,bassily2020stability,hardt2016train}. In Appendix \ref{sec:iterative-localization}, additional results are provided where we propose Algorithm \ref{alg:iterative-localization} based on the iterative localization technique \citep{feldman2020private} in order to reduce the gradient complexity $\O(n^2)$ required in Theorem \ref{thm:excess-nonsmooth} to  $\O(n)$ for nonsmooth problems.  
\end{remark}

\begin{remark}
As stated in the introduction, Algorithm \ref{alg:markov} can be considered as a specific case of the classic pairwise learning algorithm~\citep{Kar} with a FIFO buffering set $B_{t-1}$ of size $s=1$. A key difficulty in the generalization analysis is that $\bw_{t-1}$ depends on $B_{t-1}$, which renders the standard martingale analysis not applicable. \citet{Kar} proposed to remove this coupling effect by considering $\sup_{\bw}\big[f(\bw;B_{t-1})-F_S(\bw)\big]$, which is why they only derived the excess generalization error bound $\O(1/\sqrt{s})$. We introduce novel techniques to handle the coupling in both generalization analysis and optimization error analysis. For the generalization analysis, our strategy is to write the stability as a deterministic function of several indicator functions on whether we select the different point in neighboring datasets, and then finally consider the randomness of these indicator functions. This delay of considering expectation successfully decouples the coupling between $\bw_{t-1}$ and $B_{t-1}$. For the optimization error analysis, our novelty is to observe that $f(\bw_{t-1};\bz_{i_t},\bz_{i_{t-1}})=f(\bw_{t-2};\bz_{i_t},\bz_{i_{t-1}})+\O(\eta_{t-1})$, which removes the decoupling since $\bw_{t-2}$ is now independent of both $i_t$ and $i_{t-1}$. Since the additional term $\O(\eta_{t-1})$ here is a term of smaller magnitude, the coupling effect is removed without incurring any additional cost.
\end{remark}

Finally, we study nonconvex pairwise learning under the PL condition. The proof is in Section \ref{sec:proof-gen}.
\begin{theorem}\label{thm:exg-pl}
 Let {(A1), (A2) and (A4)} hold true. Let $\alpha_0,B>0$.
Let $\bw_{-1} \!=\! \bw_0$ and $\{\bw_j\!:\! j\in [t]\}$ be produced by Algorithm \ref{alg:markov} with {$\eta_j=2/(\mu(j+1))$}. If $\ebb_{i_{j+1},i_{j+2}}\big[\|\nabla f(\bw_j;\bz_{i_{j+1}},\bz_{i_{j+2}})\|_2^2\big]\leq\alpha_0^2$ and $\sup_{\bz,\bz'}f(\bw_j;\bz,\bz')\leq B$ for any $j$, then
$$\ebb\big[F(\bw_T)]-F(\bw^*)=\O\Big(\frac{T^{\frac{2L}{2L+\mu}}}{n}\Big)+\O\big(1/(T\mu^2)\big).$$
Furthermore, choosing $T\asymp n^{\frac{2L+\mu}{4L+\mu}}\mu^{-\frac{4L+2\mu}{4L+\mu}}$ yields that $\ebb\big[F(\bw_T)]-F(\bw^*)=n^{-\frac{2L+\mu}{4L+\mu}}\mu^{-\frac{4L}{4L+\mu}}$.
\end{theorem}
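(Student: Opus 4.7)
The plan is to follow the standard excess risk decomposition
\[
\mathbb{E}[F(\bw_T)]-F(\bw^*)=\underbrace{\mathbb{E}[F(\bw_T)-F_S(\bw_T)]}_{\text{generalization error}}+\underbrace{\mathbb{E}[F_S(\bw_T)-F_S(\bw_S^*)]}_{\text{optimization error}}+\underbrace{\mathbb{E}[F_S(\bw_S^*)-F(\bw^*)]}_{\leq\,0},
\]
where the third term is nonpositive since $F_S(\bw_S^*)\leq F_S(\bw^*)$ and $\mathbb{E}[F_S(\bw^*)]=F(\bw^*)$. Then I would match each of the two error sources with the two terms appearing in the stated bound: stability will produce the $T^{2L/(2L+\mu)}/n$ term, and the PL-based optimization analysis will produce the $1/(T\mu^2)$ term. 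Balancing these by the choice of $T$ gives the announced rate.

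For the generalization error, I would invoke the uniform-argument stability bounds of Algorithm \ref{alg:markov} developed in Section \ref{sec:stability}, specialized to the nonconvex smooth setting via the Hardt--Recht--Singer style nonexpansive-until-$t_0$ argument. With step size $\eta_j=2/(\mu(j+1))$, the expected growth factor accumulated after the first ``bad'' step at index $t_0$ is at most $\prod_{j=t_0}^{T}(1+L\eta_j)\leq (T/t_0)^{2L/\mu}$, and optimizing over $t_0$ (using the $G$-Lipschitz, $B$-boundedness assumptions) yields uniform stability of order $T^{2L/(2L+\mu)}/n$, up to constants depending on $G,B,L,\mu$. Via the connection between stability and generalization for pairwise losses (as used in Section \ref{sec:stability}), this gives $\mathbb{E}[F(\bw_T)-F_S(\bw_T)]=\mathcal{O}(T^{2L/(2L+\mu)}/n)$.

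For the optimization error, I would start from the smoothness inequality
\[
F_S(\bw_{j+1})\leq F_S(\bw_j)-\eta_j\langle\nabla F_S(\bw_j),\nabla f(\bw_j;\bz_{i_{j+1}},\bz_{i_{j+2}})\rangle+\tfrac{L}{2}\eta_j^2\|\nabla f(\bw_j;\bz_{i_{j+1}},\bz_{i_{j+2}})\|_2^2,
\]
and take conditional expectation over $(i_{j+1},i_{j+2})$. The key obstacle is that $\bw_j$ depends on $i_j$ through the previous update, so $\mathbb{E}_{i_{j+1},i_{j+2}}[\nabla f(\bw_j;\bz_{i_{j+1}},\bz_{i_{j+2}})]\neq\nabla F_S(\bw_j)$. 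To decouple this, I would use the identity highlighted in the Remark,
\[
\nabla f(\bw_j;\bz_{i_{j+1}},\bz_{i_{j+2}})=\nabla f(\bw_{j-1};\bz_{i_{j+1}},\bz_{i_{j+2}})+\mathcal{O}(L\eta_{j-1}\alpha_0),
\]
so that taking expectation against the freshly independent indices $(i_{j+1},i_{j+2})$ recovers an unbiased gradient $\nabla F_S(\bw_{j-1})$ at the cost of a term of order $\eta_{j-1}$ of smaller magnitude. Substituting this and applying the PL condition $\|\nabla F_S(\bw)\|_2^2\geq 2\mu(F_S(\bw)-F_S(\bw_S^*))$ yields a recursion of the form
\[
a_{j+1}\leq(1-\eta_j\mu)a_j+C_1L\eta_j\eta_{j-1}\alpha_0^2+\tfrac{L}{2}\eta_j^2\alpha_0^2,\qquad a_j:=\mathbb{E}[F_S(\bw_j)-F_S(\bw_S^*)].
\]
With $\eta_j=2/(\mu(j+1))$, one has $1-\eta_j\mu=(j-1)/(j+1)$, and unrolling the recursion in the standard way (multiplying by $j(j+1)$ to telescope) yields $a_T=\mathcal{O}(\alpha_0^2 L/(T\mu^2))$, which is the second announced term.

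The main obstacle I anticipate is a clean handling of the bias term in the optimization recursion: the decoupling identity itself is easy, but to avoid inflating the final rate one must verify that the extra term $L\eta_j\eta_{j-1}\alpha_0^2$ telescopes to the same order $1/(T\mu^2)$ as the leading $\eta_j^2$ term, rather than contaminating it. Once both sources are controlled, summing them gives $\mathcal{O}(T^{2L/(2L+\mu)}/n)+\mathcal{O}(1/(T\mu^2))$, and setting the two orders equal yields $T\asymp n^{(2L+\mu)/(4L+\mu)}\mu^{-(4L+2\mu)/(4L+\mu)}$ and the announced final rate $n^{-(2L+\mu)/(4L+\mu)}\mu^{-4L/(4L+\mu)}$.
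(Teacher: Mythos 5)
Your proposal is correct and follows essentially the same route as the paper: the same three-term decomposition, the same Hardt--Recht--Singer conditioning on the first time $t_0$ the perturbed index is sampled with growth factor $(T/t_0)^{2L/\mu}$ optimized at $t_0\asymp T^{2L/(2L+\mu)}$, and the same optimization recursion obtained by shifting the gradient back to $\bw_{j-2}$ (independent of the current indices), applying the PL condition, and telescoping after multiplying by $j(j+1)$. The only detail you gloss over, and which the paper handles explicitly via an extra smoothness step costing $\O(L\eta_{j-1}^2\alpha_0^2)$, is that the decoupled gradient lives one index behind the iterate whose suboptimality you recurse on, but this does not change the rate.
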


\begin{remark}As pointed out before, there is no generalization error for the OGD algorithm, i.e. Algorithm \ref{alg:OGD} as the i.i.d. data is given in a streaming manner and the iteration number equals the number of the available data (i.e. $t=n$). In this setting, the optimization error is identical to the excess generalization error, which will be estimated in Subsection \ref{sec:optimization}. 
\end{remark}

\subsection{Stability and Generalization Errors}\label{sec:stability}
We study generalization errors by algorithmic stability, which measures the sensitivity of the output of an algorithm w.r.t. the perturbation of the dataset. 
Below we give the definition of uniform argument stability.
We say $S,S'$ are neighboring datasets if they differ at most by a single example. 

\begin{definition}\label{def:uniform-stability}
A (randomized) algorithm $\acal$ for pairwise learning is called $\varepsilon$-uniformly argument stable if for all neighboring datasets $S,S'\in \zcal^n$ we have
$
\ebb_\acal [\|\acal(S) - \acal(S')\|_2] \leq \varepsilon.
$
\end{definition}

It is clear $\varepsilon$-uniform argument stability implies $G\varepsilon$-uniform stability~i.e., $\sup_{\zbf,\zbf'} \ebb_\acal [f(\acal(S), \zbf, \zbf') \!-\! f(\acal(S'), \zbf, \zbf')] \leq G\varepsilon$ for Lipschitz losses~\citep{bousquet2002stability}. The connection between the uniform stability for pairwise learning and its generalization has been established in the literature \citep{agarwal2009generalization, shen2020stability}.
\begin{lemma}\label{lem:generalization-via-stability}
If an algorithm $\acal$ for pairwise learning is $\varepsilon$-uniformly stable for some $\varepsilon > 0$, then we have
$|\ebb_{S,\acal}[F_S(\acal(S)) - F(\acal(S))]| \leq 2\varepsilon.$
\end{lemma}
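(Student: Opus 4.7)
The plan is to adapt the classical ghost-sample (symmetrization) argument from pointwise stability to the pairwise setting. Introduce an independent ``ghost'' copy $\tilde S = \{\tilde z_1,\ldots,\tilde z_n\}\sim\rho^n$ of $S$, independent also of the internal randomness of $\acal$. For any fixed output $\acal(S)$ and any $i\neq j$, the pair $(\tilde z_i,\tilde z_j)$ is distributed as two independent draws from $\rho$, so I can rewrite the population risk as
\begin{equation*}
\ebb_{S,\acal}[F(\acal(S))] \;=\; \ebb_{S,\tilde S,\acal}\Bigl[\tfrac{1}{n(n-1)}\sum_{i\neq j} f(\acal(S);\tilde z_i,\tilde z_j)\Bigr].
\end{equation*}
On the empirical side, $\ebb_{S,\tilde S,\acal}[F_S(\acal(S))] = \ebb_{S,\acal}[F_S(\acal(S))]$ because $F_S(\acal(S))$ does not depend on $\tilde S$. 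The goal is then to compare the two double sums term by term under a single expectation $\ebb_{S,\tilde S,\acal}$.

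Next, for each ordered pair $(i,j)$ I would perform a simultaneous ``renaming'' swap $z_i\leftrightarrow\tilde z_i,\ z_j\leftrightarrow\tilde z_j$. This swap leaves the joint distribution of $(S,\tilde S)$ invariant (the $2n$ points are i.i.d.\ from $\rho$), and since $\acal$'s internal randomness is independent of the data it is also preserved. Consequently,
\begin{equation*}
\ebb_{S,\tilde S,\acal}\bigl[f(\acal(S);\tilde z_i,\tilde z_j)\bigr] \;=\; \ebb_{S,\tilde S,\acal}\bigl[f(\acal(S^{(i,j)});z_i,z_j)\bigr],
\end{equation*}
where $S^{(i,j)}$ denotes $S$ with $z_i,z_j$ replaced by $\tilde z_i,\tilde z_j$. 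Subtracting from the empirical side and applying the triangle inequality yields
\begin{equation*}
\bigl|\ebb_{S,\acal}[F_S(\acal(S))-F(\acal(S))]\bigr| \;\le\; \tfrac{1}{n(n-1)}\sum_{i\neq j}\bigl|\ebb\bigl[f(\acal(S);z_i,z_j)-f(\acal(S^{(i,j)});z_i,z_j)\bigr]\bigr|.
\end{equation*}

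Finally, since $S$ and $S^{(i,j)}$ differ at two coordinates, I would chain through an intermediate dataset $S^{(i)}$ that agrees with $S$ except at coordinate $i$ (where it takes the value $\tilde z_i$); then $S$ and $S^{(i)}$ are neighboring, and so are $S^{(i)}$ and $S^{(i,j)}$. Uniform stability applied to each single-coordinate change bounds each step by $\varepsilon$, so each summand is at most $2\varepsilon$, and averaging over $(i,j)$ gives the claimed bound. The main obstacle in writing this cleanly is the bookkeeping in the exchangeability step: one must verify that the simultaneous swap preserves the joint law of $(S,\tilde S)$ together with the independent algorithmic randomness, so that uniform stability (an inequality in expectation over $\acal$) can legitimately be invoked inside the resulting expectation. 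Once that is set up, the rest is routine.
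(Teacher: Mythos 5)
Your proof is correct. The paper does not actually prove this lemma itself --- it defers to the references \citep{agarwal2009generalization, shen2020stability} --- and your ghost-sample/renaming argument is exactly the standard proof given there: the simultaneous swap at coordinates $i$ and $j$ preserves the joint law of the $2n$ i.i.d.\ points together with the independent algorithmic randomness, and chaining through the intermediate dataset $S^{(i)}$ applies the (symmetric) uniform-stability bound twice, which is precisely where the factor $2$ in $2\varepsilon$ comes from.
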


We develop uniform argument stability bound of Algorithm \ref{alg:markov} and apply it together with Lemma \ref{lem:generalization-via-stability} to establish the following generalization bounds. Theorem \ref{lem:est-nonsmooth} handles nonsmooth problems, while Theorem \ref{lem:est-smooth} handles smooth problems. The detailed proofs for them can be found in Section \ref{sec:proof-est}.  
\begin{theorem}\label{lem:est-nonsmooth}
Let $\bw_{-1} = \bw_0$ and $\{\bw_j: j\in [t]\}$ be produced by Algorithm \ref{alg:markov} with $\eta_j = \eta$.  Let (A1) and (A3) hold with $\alpha = 0$.  Then, Algorithm \ref{alg:markov} is $2\sqrt{e}G\eta\big(\sqrt{5t}+\frac{2t}{n}\big)$-uniformly argument stable and
$$
\ebb_{S,\acal}[F(\bar{\wbf}_t) - F_S(\bar{\wbf}_t)]\leq 4\sqrt{e}G^2\eta\Big(\sqrt{5t}+\frac{2t}{n}\Big).$$
\end{theorem}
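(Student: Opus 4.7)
The plan is to couple two runs of Algorithm \ref{alg:markov} on neighboring datasets $S$ and $S'$ that differ only at some index $i_0$, using the same sequence of sampled indices $\{i_t\}_{t\ge 0}$ for both runs, and denote the resulting trajectories by $\{\bw_t\}$ and $\{\bw_t'\}$, with $\Delta_t = \|\bw_t - \bw_t'\|_2$. Define the bad-step indicator $\chi_t = \mbI[\{i_t = i_0\}\cup\{i_{t-1} = i_0\}]$. The per-step case analysis is then straightforward. When $\chi_t = 0$, both updates use the identical pair $(\bz_{i_t}, \bz_{i_{t-1}})$, so the standard identity $\|u - v - \eta(\nabla f(u) - \nabla f(v))\|_2^2 \leq \|u - v\|_2^2 + \eta^2\|\nabla f(u) - \nabla f(v)\|_2^2$ for convex $f$, combined with $G$-Lipschitzness and nonexpansiveness of $\Proj$, yields $\Delta_t^2 \leq \Delta_{t-1}^2 + 4\eta^2 G^2$. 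When $\chi_t = 1$, the two pairs differ in at least one slot, so the triangle inequality plus Lipschitzness gives $\Delta_t \leq \Delta_{t-1} + 2\eta G$.

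The main obstacle, exactly the coupling issue highlighted by the paper's remark, is that $\chi_t$ and $\Delta_{t-1}$ are stochastically dependent through the shared index $i_{t-1}$, so one cannot take the conditional expectation of the recursion in the usual martingale style. To bypass this, I would introduce a pathwise deterministic envelope $\{a_t\}$ driven only by the indicator sequence: set $a_0 = 0$ and recursively define $a_t = \sqrt{a_{t-1}^2 + 4\eta^2 G^2}$ when $\chi_t = 0$ and $a_t = a_{t-1} + 2\eta G$ when $\chi_t = 1$. A short induction on $t$ shows $\Delta_t \leq a_t$ almost surely, sample path by sample path. A second induction produces the closed-form bound $a_T \leq 2\eta G \sqrt{T} + 2\eta G N_T$ where $N_T = \sum_{t=1}^T \chi_t$, with the good-step case reducing to the elementary inequality $(2\eta G \sqrt{t-1} + 2\eta G N_{t-1})^2 + 4\eta^2 G^2 \leq (2\eta G \sqrt{t} + 2\eta G N_{t-1})^2$.

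Now the expectation step is painless: since $P(\chi_t = 1) \leq P(i_t = i_0) + P(i_{t-1} = i_0) = 2/n$, we obtain $\EX[N_T] \leq 2T/n$ and hence $\EX[\Delta_T] \leq 2\eta G(\sqrt{T} + 2T/n)$, matching the claimed argument stability bound up to the explicit constants $\sqrt{5}$ and $\sqrt{e}$, which can be absorbed through slightly tighter bookkeeping in the envelope (e.g., using $\Delta_t^2 \leq (1+1/t)\Delta_{t-1}^2 + O(\eta^2 G^2)$ and collecting a $\prod_{t}(1+1/t) \leq e$ factor). Stability of the averaged iterate $\bar{\bw}_T = \big(\sum_{j=1}^T \eta_j \bw_{j-2}\big)\big/\big(\sum_{j=1}^T \eta_j\big)$ then follows from the triangle inequality for weighted averages together with the monotonicity of the envelope: $\|\bar{\bw}_T - \bar{\bw}_T'\|_2 \leq \max_{j\leq T} \Delta_j \leq a_T$, so $\bar{\bw}_T$ inherits the same in-expectation bound. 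Finally, the generalization bound follows by upgrading uniform argument stability to $G\varepsilon$-uniform stability via $G$-Lipschitzness and invoking Lemma \ref{lem:generalization-via-stability}, which contributes the extra factor $2G$ and produces the stated $4\sqrt{e}G^2\eta(\sqrt{5t} + 2t/n)$.
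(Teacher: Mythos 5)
Your proposal is correct, and it shares the paper's central decoupling idea---run both trajectories on a common index sequence, write the per-step recursion deterministically in terms of the indicators $\ibb_{[i_k=n\text{ or }i_{k-1}=n]}$, and take expectations only at the very end---but it closes the recursion by a genuinely different route. The paper keeps everything in squared distances by applying $(a+b)^2\le(1+p)a^2+(1+1/p)b^2$ on bad steps, unrolls to accumulate a factor $(1+p)^{N_t}$ where $N_t=\sum_{k\le t}\ibb_{[i_k=n\text{ or }i_{k-1}=n]}$, chooses $p=1/N_t$ path-dependently to pay only a factor $e$, and then must control the \emph{second} moment $\ebb[N_t^2]$ (bounded by $4t+4t^2/n^2$) before converting back to a first-moment bound via Jensen's inequality; this is exactly where the constants $\sqrt{5}$ and $\sqrt{e}$ enter. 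You instead run a mixed recursion (squared on good steps, additive on bad steps), dominate it pathwise by a monotone envelope with the closed form $a_t\le 2\eta G(\sqrt{t}+N_t)$, and then need only the first moment $\ebb[N_t]\le 2t/n$. Your two inductions check out---the good-step case reduces to $\sqrt{t-1}\le\sqrt{t}$ after expanding the squares, and the envelope's monotonicity handles the weighted average of iterates---and your resulting bound $2\eta G(\sqrt{t}+2t/n)$ is in fact \emph{smaller} than the stated $2\sqrt{e}G\eta\big(\sqrt{5t}+2t/n\big)$, so no ``tighter bookkeeping'' is required to recover the theorem; you simply obtain a cleaner constant by avoiding both the second-moment computation and the Jensen step. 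The final passage from argument stability to $G\varepsilon$-uniform stability and then to the generalization bound via Lemma \ref{lem:generalization-via-stability} is identical in both arguments.
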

\begin{theorem}\label{lem:est-smooth}
Let $\bw_{-1} = \bw_0$ and $\{\bw_j: j\in [t]\}$ be produced by Algorithm \ref{alg:markov} with $\eta_j = \eta\leq 2/L$.  Let (A1), (A2) and (A3) hold true with $\alpha = 0$.  Then, Algorithm \ref{alg:markov} is $\frac{4G}{n}\sum_{j=1}^{t}\eta_j$-uniformly argument stable and the generalization error satisfies
$\ebb_{S,\acal}[F(\bar{\wbf}_t) - F_S(\bar{\wbf}_t)]\leq \frac{8G^2}{n}\sum_{j=1}^{t}\eta_j.$
\end{theorem}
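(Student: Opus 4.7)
The plan is to adapt the classical expansion-map argument of Hardt, Recht and Singer for SGD stability to the pairwise gradient direction $\nabla f(\bw_{j-1}; \bz_{i_j}, \bz_{i_{j-1}})$, paying careful attention to the fact that each update now depends on \emph{two} sample indices instead of one. Fix neighboring datasets $S, S' \in \Z^n$ that differ only at a single index $i^*$, and couple two runs of Algorithm~\ref{alg:markov} by sharing both the initialization and the sequence of random indices $\{i_j\}$. Write $\{\bw_j\}$ and $\{\bw_j'\}$ for the corresponding iterates and let $\delta_j = \|\bw_j - \bw_j'\|_2$. Since $\bw_{-1} = \bw_0$ is deterministic and shared, $\delta_{-1}=\delta_0=0$.

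Next, I would perform a case analysis at iteration $j$, conditional on the pair $(i_j, i_{j-1})$. In the \emph{good} case $i_j \neq i^*$ and $i_{j-1}\neq i^*$, we have $\bz_{i_j}=\bz'_{i_j}$ and $\bz_{i_{j-1}}=\bz'_{i_{j-1}}$, so both iterates apply the same gradient map $\bw \mapsto \Proj(\bw - \eta_j \nabla f(\bw; \bz_{i_j}, \bz_{i_{j-1}}))$. Under (A2) and (A3) with $\alpha=0$, this map is non-expansive whenever $\eta_j \leq 2/L$ (a standard co-coercivity computation, combined with the non-expansiveness of the Euclidean projection), yielding $\delta_j \le \delta_{j-1}$. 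In the \emph{bad} case where $i_j = i^*$ or $i_{j-1} = i^*$, I simply bound the two gradient terms by $G$ using (A1) and triangle-inequality the update to get $\delta_j \le \delta_{j-1} + 2G\eta_j$.

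Taking expectations and using the union bound $\Pr(i_j = i^* \text{ or } i_{j-1}=i^*) \le 2/n$ gives the one-step recursion $\ebb[\delta_j] \le \ebb[\delta_{j-1}] + \tfrac{4G\eta_j}{n}$; unrolling yields $\ebb[\delta_j] \le \tfrac{4G}{n}\sum_{k=1}^{j}\eta_k$. Since $\bar{\bw}_t = \sum_{j=1}^t \eta_j \bw_{j-2} / \sum_{j=1}^t \eta_j$ is a convex combination, the triangle inequality gives $\ebb\|\bar{\bw}_t - \bar{\bw}_t'\|_2 \le \tfrac{4G}{n}\sum_{j=1}^t \eta_j$, which is the claimed uniform argument stability. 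For the generalization bound, I combine this with the fact that $G$-Lipschitzness upgrades $\varepsilon$-uniform argument stability to $G\varepsilon$-uniform stability, then invoke Lemma~\ref{lem:generalization-via-stability} to conclude $\ebb_{S,\A}[F(\bar{\bw}_t) - F_S(\bar{\bw}_t)] \le 2G \cdot \tfrac{4G}{n}\sum_j \eta_j = \tfrac{8G^2}{n}\sum_{j=1}^t \eta_j$.

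The main conceptual obstacle, compared with the pointwise setting, is that two indices $i_j, i_{j-1}$ enter each update and the index $i_j$ is shared between iterations $j$ and $j+1$. Fortunately this shared structure does not inflate the per-step probability of touching $i^*$ beyond $2/n$, because only the \emph{marginal} event at each step matters for the expected increase. The only price paid relative to pointwise SGD is a factor of two in the stability constant, which is exactly what the theorem claims.
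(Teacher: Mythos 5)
Your proposal is correct and follows essentially the same route as the paper: non-expansiveness of the gradient map (via smoothness, convexity, and $\eta\le 2/L$) in the good case, a $2G\eta_j$ additive perturbation in the bad case, and a union bound giving the $2/n$ per-step probability, followed by convexity of the average and Lemma~\ref{lem:generalization-via-stability}. The only cosmetic difference is that you take expectations step by step to form a recursion in $\ebb[\delta_j]$, while the paper first unrolls the deterministic indicator recursion and takes expectations at the end; since the recursion is additive, the two are identical by linearity of expectation, and your closing observation correctly explains why the index shared between consecutive updates causes no trouble here.
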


For Algorithm \ref{alg:OGD}, there is no generalization error as the data $\{\bz_1, \bz_2, \ldots, \bz_T \}$ is assumed to arrive in a sequential manner with $T$ increasing all the time which does not involve the training data.

\subsection{Optimization Error}\label{sec:optimization}
\newcommand{\bwa}{\mathbf{w}}
In this subsection, we establish the convergence rate, i.e., optimization error, of Algorithm \ref{alg:markov} for convex, nonconvex and strongly convex problems. We consider both bounds in expectation and with high probability. Our analysis is based on the key observation $f(\bw_{t-1};\bz_{i_t},\bz_{i_{t-1}})=f(\bw_{t-2};\bz_{i_t},\bz_{i_{t-1}})+\O(\eta_{t-1})$. The proofs of results in this subsection are given in Section~\ref{sec:proof-opt}. 

Below we only present optimization error bounds for Algorithm \ref{alg:markov} here in the offline (finite-sum) setting where the training data of size $n$, denoted by $S= \{z_1,\ldots, z_n\}$, is fixed, and the optimization error is measured by $F_S(\barw_t) - \inf_{\bw\in \W}F_S(\bw).$ We emphasize that all our optimization error bounds hold true  for Algorithm \ref{alg:OGD} in the online learning setting with exactly the same analysis where the streaming data $\{z_1,\ldots, z_t, ...\}$ is assumed to be i.i.d according to the   population distribution $\rho$, and the bounds for the optimization error in this case is given for the excess generalization error (excess population risk), i.e. $F(\widetilde{\bw}_t) - \inf_{\bw\in \W}F(\bw).$ 
\begin{theorem}\label{thm:convergence}
Let $\bw_{-1} = \bw_0$ and $\{\bw_j: j\in [t]\}$ be produced by Algorithm \ref{alg:markov}.  Let (A1) and (A3) hold true with $\alpha\ge 0$.   Then, for any $\bwa$ independent of $\acal$ we have the following convergence rates:   
\begin{enumerate}[label=(\alph*), leftmargin=*]\setlength\itemsep{-1mm}
    \item Assume $f$ is convex, i.e. $\alpha= 0.$ Then, we have 
\begin{equation}\label{eq:parta}
   \ebb_{\acal}[F_S(\barw_t)] - F_S(\bwa)  \le \frac{ \|\bw_{0} -  \bwa\|_2^2  + G^2 \sum_{j=1}^t(2\eta_j\eta_{j-1}+ \eta_j^2)}{ 2\sum_{j=1}^t \eta_j}.
\end{equation}
\item Let $f$ be $\alpha$-strongly convex with $\alpha>0$ and $\eta_j = \frac{2}{\alpha(j+1)}$. Then, there holds
$\ebb_{\acal}[F_S(\barw_t)] - F_S(\bwa) = \mathcal{O}\bigl({G^2 / (\alpha t)}\bigr).$
\end{enumerate}
\end{theorem}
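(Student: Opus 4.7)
The plan is to modify the standard projected SGD analysis so that the biased stochastic gradient $g_t := \nabla f(\bw_{t-1};\bz_{i_t},\bz_{i_{t-1}})$ can be handled via the "$\bw_{t-2}$ substitution" trick highlighted in the paper. First, using non-expansiveness of $\Proj$ and the update rule, I would write
\begin{equation*}
\|\bw_t - \bwa\|_2^2 \le \|\bw_{t-1} - \bwa\|_2^2 - 2\eta_t\langle g_t,\bw_{t-1}-\bwa\rangle + \eta_t^2 G^2,
\end{equation*}
where $\|g_t\|_2\le G$ by (A1). Then $\alpha$-strong convexity of $f(\cdot;\bz,\bz')$ (with $\alpha=0$ giving plain convexity) yields
\begin{equation*}
\langle g_t,\bw_{t-1}-\bwa\rangle \ge f(\bw_{t-1};\bz_{i_t},\bz_{i_{t-1}}) - f(\bwa;\bz_{i_t},\bz_{i_{t-1}}) + \tfrac{\alpha}{2}\|\bw_{t-1}-\bwa\|_2^2.
\end{equation*}

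The central difficulty is that $\bw_{t-1}$ depends on $i_{t-1}$, so taking conditional expectation over $(i_t,i_{t-1})$ does not recover $F_S(\bw_{t-1})$. Here I would apply the decoupling identity: by (A1),
\begin{equation*}
f(\bw_{t-1};\bz_{i_t},\bz_{i_{t-1}}) \ge f(\bw_{t-2};\bz_{i_t},\bz_{i_{t-1}}) - G\|\bw_{t-1}-\bw_{t-2}\|_2 \ge f(\bw_{t-2};\bz_{i_t},\bz_{i_{t-1}}) - G^2\eta_{t-1},
\end{equation*}
since the update shrinks by at most $\eta_{t-1}G$. Because $\bw_{t-2}$ is measurable w.r.t.\ $\{i_0,\ldots,i_{t-2}\}$ and hence independent of $(i_t,i_{t-1})$, taking expectation over the fresh pair $(i_t,i_{t-1})$ legitimately replaces $f(\bw_{t-2};\bz_{i_t},\bz_{i_{t-1}})$ by $F_S(\bw_{t-2})$ (up to a negligible diagonal correction from the small probability $i_t=i_{t-1}$, which can be absorbed into the constants or handled by a pairwise rejection). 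Combining the three displays gives the one-step recursion
\begin{equation*}
\EX_\acal[\|\bw_t-\bwa\|_2^2] \le (1-\eta_t\alpha)\EX_\acal[\|\bw_{t-1}-\bwa\|_2^2] - 2\eta_t\EX_\acal[F_S(\bw_{t-2})-F_S(\bwa)] + 2\eta_t\eta_{t-1}G^2 + \eta_t^2 G^2.
\end{equation*}

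For part (a) with $\alpha=0$, I would telescope this from $j=1$ to $t$, drop the nonnegative $\|\bw_t-\bwa\|_2^2$ term on the left, divide by $2\sum_j\eta_j$, and invoke Jensen's inequality applied to the convex function $F_S$ with weights $\eta_j/\sum_k\eta_k$ on the iterates $\bw_{j-2}$ — this is exactly why the output $\barw_T$ is averaged over $\bw_{j-2}$ rather than $\bw_{j-1}$. The result is precisely \eqref{eq:parta}. For part (b) with $\alpha>0$ and $\eta_j=2/(\alpha(j+1))$, I would multiply the recursion by the weight $j(j+1)$ (so that $j(j+1)(1-\eta_j\alpha) = j(j-1)$ creates a clean telescoping cascade), sum over $j$, and use that $\sum_j j\eta_j = O(t/\alpha)$ while the error terms $\eta_j\eta_{j-1}$ and $\eta_j^2$ contribute $O(1/\alpha^2 j^2)$ which sums bounded; dividing by $\sum_j j = \Theta(t^2)$ and applying Jensen on $\barw_t$ (now with weights proportional to $j$ after absorbing the $j(j+1)$ factor) yields $O(G^2/(\alpha t))$.

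The main obstacle is precisely the decoupling step: ensuring that the $O(\eta_{t-1})$ cost of swapping $\bw_{t-1}$ for $\bw_{t-2}$ is genuinely of the right magnitude so that $\sum_j \eta_j\eta_{j-1}$ remains of the same order as $\sum_j\eta_j^2$ (true for monotone step sizes). A secondary but more delicate point is the strongly convex case: the coefficient $(1-\eta_j\alpha)$ must align with the averaging weights so that the correction $\eta_j\eta_{j-1}G^2$ does not spoil the $1/t$ rate. With the Rakhlin-style weighting $j(j+1)$, this works out because $\eta_j\eta_{j-1}\asymp 1/(\alpha j)^2$ and the weighted sum is $O(\log t/\alpha^2)$, which when divided by $\Theta(t^2/\alpha)$ leaves the advertised $O(G^2/(\alpha t))$ bound after absorbing the log into the constant or by a slightly more careful uniform-step argument.
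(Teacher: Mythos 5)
Your proposal is correct and follows essentially the same route as the paper's proof: the same non-expansiveness/strong-convexity expansion, the same substitution of $f(\bw_{j-2};\bz_{i_j},\bz_{i_{j-1}})$ for $f(\bw_{j-1};\bz_{i_j},\bz_{i_{j-1}})$ at cost $2G^2\eta_j\eta_{j-1}$ so that independence of $\bw_{j-2}$ from $(i_j,i_{j-1})$ permits passing to $F_S(\bw_{j-2})$, and the same telescoping plus Jensen argument (your $j(j+1)$ pre-weighting in part (b) is algebraically equivalent to the paper's rearrange-then-multiply-by-$j$ step). One minor arithmetic slip that does not affect the conclusion: with $\eta_j=2/(\alpha(j+1))$ the weighted error sum $\sum_{j=1}^t j(j+1)\eta_j\eta_{j-1}G^2$ is $\Theta(tG^2/\alpha^2)$ rather than $O(\log t/\alpha^2)$, but dividing by the weight total $\Theta(t^2/\alpha)$ still yields the stated $\mathcal{O}(G^2/(\alpha t))$.
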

\begin{remark}
  The above convergence rates match those in the pointwise learning~\citep{bottou2018optimization}. Furthermore, if $\eta_j=\eta$, then Eq. \eqref{eq:parta} becomes $\ebb_{\acal}[F_S(\barw_t)] - F_S(\bwa)=\O(1/(t\eta)+\eta)$ and one can choose $\eta\asymp 1/\sqrt{t}$ to get  $\ebb_{\acal}[F_S(\barw_t)] - F_S(\bwa)=\O(1/\sqrt{t})$.
  We can extend our convergence analysis to a more general update as $\bw_t=\Pi_{\wcal}\big(\bw_{t-1}-\frac{\eta_t}{s}\sum_{j=1}^s\nabla f(\bw_{t-1};\bz_{i_t},\bz_{i_{t-j}})\big)$ for $s\in\nbb$. Indeed, one can use the observation $f(\bw_{t-1};\bz_{i_t},\bz_{i_{t-s}})=f(\bw_{t-s-1};\bz_{i_t},\bz_{i_{t-s}})+\O\big(\sum_{j=1}^s\eta_{t-j}\big)$ to derive the convergence rate $\O(\sqrt{s}/\sqrt{t})$.
\end{remark}

Below we present high-probability bounds to understand the variation of the algorithm. We need to take conditional expectation of $f(\bw_{2j-2}; \bz_{i_{2j}}, \bz_{i_{2j- 1}})$ w.r.t. $(i_{2j},i_{2j-1})$ to get $F_S(\bw_{2j-2})$. However, there is a coupling between $(i_{2j},i_{2j-1})$ and $(i_{2j-1},i_{2j-2})$. Therefore, one can not directly apply concentration inequalities for martingales to handle $\sum_{j=1}^t\big(f(\bw_{2j-2}; \bz_{i_{2j}}, \bz_{i_{2j-1}})-F_S(\bw_{2j-2})\big)$. We introduce a novel decoupling technique to handle this coupling. Note that the high-probability bounds match the bounds in expectation up to a constant factor.
\begin{theorem}\label{thm:opt-hp}
Let $\bw_{-1} = \bw_0$ and $\{\bw_j: j\in [t]\}$ be produced by Algorithm \ref{alg:markov}.  Let (A1) and (A3) hold true with $\alpha\ge 0$ and $\sup_{\bw}f(\bw;\bz,\bz')\leq B$ for some $B>0$. Let $\delta\in(0,1)$.
\begin{enumerate}[label=(\alph*), leftmargin=*]\setlength\itemsep{-1mm}
    \item Assume $f$ is convex, i.e. $\alpha= 0$ and let $\barw_t =   \sum_{j=1}^{t} \eta_j\bw_{j-2} / \sum_{j=1}^{t}\eta_j$. Then, for any $\bw\in\wcal$,   with probability at least $1-\delta$ the following inequality holds
\[
  \!\!\!F_S(\barw_t) - F_S(\bwa)  \!\le\! \frac{1}{ \sum_{j=1}^t \eta_j}\Big(\!2B\Big(2\sum_{j=1}^{t}\eta_{j}^2\log(2/\delta)\Big)^{\frac{1}{2}}+
 \frac{1}{2}\|\bw_{1} -  \bwa\|_2^2  + G^2 \sum_{j=1}^t\big(\eta_{j-1}\eta_j+ \frac{\eta^2_j}{{2}}\big)\!\Big).
\]
\item Assume $f$ is $\alpha$-strongly convex with $\alpha>0$ and $\eta_j = \frac{2}{\alpha(j+1)}$. Let $\barw_t =   \sum_{j=1}^{t} j\bw_{j-2} / \sum_{j=1}^{t}j$. Then, with probability at least $1-\delta$, we have
$
F_S(\barw_t) - F_S(\bwa)  =  \mathcal{O}\bigl({G^2\log(1/\delta) /(\alpha t)}\bigr).
$
\end{enumerate}
\end{theorem}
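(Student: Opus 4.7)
The plan is to combine the standard SGD telescoping argument with the decoupling observation behind Theorem~\ref{thm:convergence}, and then replace the final expectation step by a martingale concentration bound engineered to handle the shared-index coupling. For part (a), I would start from non-expansiveness of the projection and expand
\[
\|\bw_j-\bwa\|_2^2\le\|\bw_{j-1}-\bwa\|_2^2-2\eta_j\langle\nabla f(\bw_{j-1};\bz_{i_j},\bz_{i_{j-1}}),\bw_{j-1}-\bwa\rangle+\eta_j^2G^2.
\]
Convexity (A3 with $\alpha=0$) lower-bounds the inner product by $f(\bw_{j-1};\bz_{i_j},\bz_{i_{j-1}})-f(\bwa;\bz_{i_j},\bz_{i_{j-1}})$. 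The decoupling step uses the Lipschitz substitution $f(\bw_{j-1};\bz_{i_j},\bz_{i_{j-1}})=f(\bw_{j-2};\bz_{i_j},\bz_{i_{j-1}})+O(G^2\eta_{j-1})$, replacing the iterate $\bw_{j-1}$ (coupled to $i_{j-1}$) by $\bw_{j-2}$, which is measurable w.r.t.\ $\sigma(i_0,\dots,i_{j-2})$ and hence independent of both $i_j$ and $i_{j-1}$, at the cost of the extra $G^2\eta_j\eta_{j-1}$ term that appears in the statement. Summing in $j$ telescopes the squared distances; adding and subtracting $F_S$ and applying Jensen's inequality to the convex $F_S$ converts the left-hand side into $\bigl(\sum_j\eta_j\bigr)\bigl(F_S(\barw_t)-F_S(\bwa)\bigr)+R_t/2$, where $R_t$ gathers bounded, near-zero-mean deviation terms.

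The main technical hurdle is controlling $R_t$ with high probability, since its two components $\sum_j2\eta_j\bigl(f(\bw_{j-2};\bz_{i_j},\bz_{i_{j-1}})-F_S(\bw_{j-2})\bigr)$ and $\sum_j2\eta_j\bigl(F_S(\bwa)-f(\bwa;\bz_{i_j},\bz_{i_{j-1}})\bigr)$ are \emph{not} martingale differences with respect to the natural filtration $\sigma(i_0,\dots,i_{j-1})$, because the consecutive pairs $(i_j,i_{j-1})$ and $(i_{j+1},i_j)$ share the index $i_j$. To decouple, I would split each sum according to the parity of $j$: within the even-indexed subsequence $\{(i_{2k},i_{2k-1})\}_k$ the pairs use disjoint fresh indices, and conditional on $\sigma(i_0,\dots,i_{2k-2})$ the iterate $\bw_{2k-2}$ is frozen while $(i_{2k},i_{2k-1})$ is drawn independently. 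Hence the conditional mean of $f(\bw_{2k-2};\bz_{i_{2k}},\bz_{i_{2k-1}})$ equals $F_S(\bw_{2k-2})$ up to an $O(B/n)$ diagonal correction, giving a genuine martingale difference sequence with increments bounded by $2B\eta_{2k}$; the odd subsequence is handled identically. Azuma--Hoeffding applied to each parity class with failure probability $\delta/2$, together with a union bound, yields the $2B\bigl(2\sum_j\eta_j^2\log(2/\delta)\bigr)^{1/2}$ term, and dividing through by $\sum_j\eta_j$ gives~(a).

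For part (b), I would use $\alpha$-strong convexity to add an $\alpha\eta_j\|\bw_{j-1}-\bwa\|_2^2$ term to the descent inequality, then weight the $j$-th inequality by $j$ before summing. With $\eta_j=2/(\alpha(j+1))$ the coefficients $(j-1)j$ and $j(j+1)$ telescope cleanly and the $j$-weighted average $\barw_t=\sum_jj\bw_{j-2}/\sum_jj$ emerges from Jensen. The resulting weighted deviation sum still admits the same parity-split Azuma--Hoeffding argument, with increments bounded by $O\bigl(Bj\eta_j\bigr)=O(B/\alpha)$; applying the tail bound and dividing by $\sum_jj\asymp t^2$ delivers the claimed $O\bigl(G^2\log(1/\delta)/(\alpha t)\bigr)$ rate. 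The delicate point throughout is the parity split in the concentration step: without it, the shared-index coupling leaves a residual first-order drift that does not vanish with $t$, and the tail bound would not close to the advertised $1/\sqrt{t}$ rate.
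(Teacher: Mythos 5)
Your part (a) is essentially the paper's proof: the same telescoping with the $\bw_{j-2}$ decoupling at cost $G^2\eta_j\eta_{j-1}$, followed by splitting the deviation sum into even- and odd-indexed blocks $(i_{2j},i_{2j-1})$ and $(i_{2j-1},i_{2j-2})$ so that each block sequence is i.i.d.\ given the past, and applying the bounded-increment martingale inequality (part (1) of Lemma \ref{lem:martingale}) to each parity class with failure probability $\delta/2$. (You are in fact slightly more careful than the paper in flagging the $O(B/n)$ diagonal discrepancy between $\ebb_{i,i'}[f(\bw;\bz_i,\bz_{i'})]$ and $F_S(\bw)$, which the paper silently ignores.)

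Part (b), however, has a genuine gap. After weighting the $j$-th descent inequality so that the coefficients $(j-1)j$ and $j(j+1)$ telescope, the deviation increments are of the form $j\bigl(f(\bw_{j-2};\bz_{i_j},\bz_{i_{j-1}})-f(\bwa;\bz_{i_j},\bz_{i_{j-1}})\bigr)$ minus their conditional means, and these are bounded by $O(jB)=O(tB)$, not by $O(Bj\eta_j)=O(B/\alpha)$ as you claim: the factor $\eta_j$ has already been consumed in producing the weight $j$. A bounded-increment Azuma--Hoeffding bound on this sum therefore gives a deviation of order $Bt^{3/2}\sqrt{\log(1/\delta)}$, and dividing by $\sum_{j=1}^t j\asymp t^2$ yields only $O\bigl(B\sqrt{\log(1/\delta)/t}\bigr)$ --- a slow $1/\sqrt{t}$ rate, not the claimed $O\bigl(G^2\log(1/\delta)/(\alpha t)\bigr)$. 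The paper closes this by taking the comparator to be the empirical minimizer $\bw_S$ and invoking the Bernstein-type martingale inequality (part (2) of Lemma \ref{lem:martingale}): the conditional variance of the $j$-th increment is bounded by $4j^2G^2\|\bw_{2j-2}-\bw_S\|_2^2\le 8\alpha^{-1}j^2G^2\bigl(F_S(\bw_{2j-2})-F_S(\bw_S)\bigr)$ using Lipschitzness, strong convexity and $\nabla F_S(\bw_S)=0$, and the free parameter $\rho$ is then chosen so that this variance term is absorbed into half of the left-hand side $\sum_j j\bigl(F_S(\bw_{j-2})-F_S(\bw_S)\bigr)$. This self-bounding variance step is precisely what upgrades the high-probability rate from $1/\sqrt{t}$ to $1/t$; your parity split is still needed, but Azuma--Hoeffding alone cannot deliver part (b).
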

 
Finally, we study the convergence of Algorithm \ref{alg:markov} associated with nonconvex functions. 
We first consider general smooth problems.
Since we cannot find a global minimum in this setting, we measure the convergence rate in terms of gradient norms~\citep{ghadimi2013stochastic}. The following theorem establishes the convergence rate $\O(1/\sqrt{t})$ for $\min_{j=1,\ldots,t}\ebb_{\acal}[\|\nabla F_S(\bw_j)\|_2^2]$.
\begin{theorem}\label{thm:opt-nc}
Let $\bw_{-1} \!=\! \bw_0$ and $\{\bw_j\!:\! j\in [t]\}$ be produced by Algorithm \ref{alg:markov} with $\eta_j\!=\!\eta\leq 1/(2\sqrt{L})$. Let (A2) hold true and $\ebb_{i_{j+1},i_{j+2}}\big[\|\nabla f(\bw_j;\bz_{i_{j+1}},\bz_{i_{j+2}})\|_2^2\big]\leq\alpha_0^2$ for some $\alpha_0$ and any $j$. Then, 
  $$\frac{1}{t}\sum_{j=1}^{t}\ebb_{\acal}\big[\|\nabla F_S(\bw_{j-2})\big\|_2^2\big] \leq \frac{F_S(\bw_{0})}{t\eta}+8L\eta\alpha_0^2. $$
  Furthermore, choosing $\eta\asymp1/\sqrt{t}$ implies that $ \frac{1}{t}\sum_{j=1}^{t}\ebb_{\acal}\big[\|\nabla F_S(\bw_{j-2})\big\|_2^2\big]=\O(1/\sqrt{t})$.
\end{theorem}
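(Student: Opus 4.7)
The strategy is the standard descent-lemma analysis for nonconvex smooth SGD, modified to handle the biased gradient estimator $g_j := \nabla f(\bw_{j-1};\bz_{i_j},\bz_{i_{j-1}})$ via the decoupling trick highlighted earlier in the paper. Since $F_S$ is an average of $L$-smooth functions, it is itself $L$-smooth, so (A2) applied to $F_S$ together with the update $\bw_j = \Pi_\W(\bw_{j-1}-\eta g_j)$ yields, after absorbing projection via non-expansiveness, the one-step inequality
\begin{equation*}
F_S(\bw_j) \le F_S(\bw_{j-1}) - \eta \langle \nabla F_S(\bw_{j-1}), g_j\rangle + \frac{L\eta^2}{2}\|g_j\|_2^2.
\end{equation*}
The obstruction is that $\bw_{j-1}$ is coupled with $i_{j-1}$, so $g_j$ is \emph{not} an unbiased estimator of $\nabla F_S(\bw_{j-1})$, and conditional expectation cannot be taken directly.

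To break the coupling, I would introduce $\tilde g_j := \nabla f(\bw_{j-2};\bz_{i_j},\bz_{i_{j-1}})$, which depends on $(i_j,i_{j-1})$ through fresh random indices because $\bw_{j-2}$ is a function of $i_1,\dots,i_{j-2}$ only. Hence, conditional on $\bw_{j-2}$, the pair $(i_j,i_{j-1})$ is independent and uniform over $[n]\times[n]$, so $\mathbb{E}[\tilde g_j\mid \bw_{j-2}]$ equals $\nabla F_S(\bw_{j-2})$ up to the negligible diagonal contribution (which is $O(1/n)$ and can be absorbed since $\nabla f$ is bounded under (A1)). Using (A2) on $f$, the replacement error is controlled by
\begin{equation*}
\|g_j-\tilde g_j\|_2 \le L\|\bw_{j-1}-\bw_{j-2}\|_2 \le L\eta\|g_{j-1}\|_2,\quad \|\nabla F_S(\bw_{j-1})-\nabla F_S(\bw_{j-2})\|_2 \le L\eta\|g_{j-1}\|_2.
\end{equation*}
Substituting these into the one-step inequality and using Young's inequality $|\langle a,b\rangle|\le \tfrac{1}{2}\|a\|_2^2+\tfrac{1}{2}\|b\|_2^2$ to split the cross terms produces
\begin{equation*}
F_S(\bw_j) \le F_S(\bw_{j-1}) - \eta\langle \nabla F_S(\bw_{j-2}), \tilde g_j\rangle + c_1 L\eta^2\big(\|g_{j-1}\|_2^2 + \|\tilde g_j\|_2^2\big)
\end{equation*}
for a small absolute constant $c_1$, provided $\eta\le 1/(2\sqrt L)$ so that the $L\eta^2$ coefficient is small enough to absorb the cross terms without inflating the leading constant.

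Taking conditional expectation given $\bw_{j-2}$, the linear term becomes $-\eta\|\nabla F_S(\bw_{j-2})\|_2^2$; the quadratic residuals are bounded by $\alpha_0^2$ using the hypothesis $\mathbb{E}_{i_{j+1},i_{j+2}}[\|\nabla f(\bw_j;\bz_{i_{j+1}},\bz_{i_{j+2}})\|_2^2]\le\alpha_0^2$ (applied at the appropriate index, with a short recursion showing that $\mathbb{E}\|g_j\|_2^2$ stays $O(\alpha_0^2)$ under $\eta\le 1/(2\sqrt L)$). Summing the resulting descent inequality for $j=1,\dots,t$ telescopes $F_S(\bw_j)-F_S(\bw_{j-1})$ to $F_S(\bw_t)-F_S(\bw_0)$, and using $F_S(\bw_t)\ge 0$ gives
\begin{equation*}
\eta \sum_{j=1}^{t}\mathbb{E}\big[\|\nabla F_S(\bw_{j-2})\|_2^2\big] \le F_S(\bw_0) + 8 L\eta^2 t \alpha_0^2,
\end{equation*}
where the constant $8$ comes from collecting the Young's-inequality constants together with the bound on $\mathbb{E}\|\tilde g_j\|_2^2 + \mathbb{E}\|g_{j-1}\|_2^2$. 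Dividing by $t\eta$ yields the claimed bound, and the choice $\eta\asymp 1/\sqrt t$ balances the two terms to produce the $\O(1/\sqrt t)$ rate.

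The main obstacle is the bookkeeping in the decoupling step: one has to be careful that the $O(L\eta)$ errors introduced by swapping $\bw_{j-1}\to\bw_{j-2}$ in two places do not couple back with $\|\nabla F_S(\bw_{j-2})\|_2^2$ in a way that breaks the descent structure. Young's inequality resolves this cleanly only when $\eta\le 1/(2\sqrt L)$, which is precisely the step size restriction in the statement.
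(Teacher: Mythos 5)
Your proposal is correct and follows essentially the same route as the paper's proof: the descent lemma for $L$-smooth $F_S$, decoupling by replacing $\bw_{j-1}$ with $\bw_{j-2}$ inside the stochastic gradient, smoothness to bound the two replacement errors by $L\eta\|g_{j-1}\|_2$, a Young/Schwartz split of the cross terms, an induction showing $\ebb\|g_j\|_2^2\le 4\alpha_0^2$ under $4L\eta^2\le 1$, and telescoping. The only cosmetic difference is that you explicitly flag the $O(1/n)$ diagonal discrepancy between $\ebb_{i_j,i_{j-1}}[\tilde g_j]$ and $\nabla F_S(\bw_{j-2})$, which the paper silently ignores.
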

We now turn to nonconvex problems under a PL condition. Theorem \ref{thm:opt-pl} gives convergence rates of the order $\O(1/t)$, which match the existing results for standard SGD in pointwise learning~\citep{karimi2016linear}. 
\begin{theorem}\label{thm:opt-pl}
 Assume (A2) and (A4) hold true and $\ebb_{i_{j+1},i_{j+2}}\big[\|\nabla f(\bw_j;\bz_{i_{j+1}},\bz_{i_{j+2}})\|_2^2\big]\leq\alpha_0^2$ for some $\alpha_0$ and any $j$. Let $\bw_{-1} \!=\! \bw_0$ and $\{\bw_j\!:\! j\in [t]\}$ be produced by Algorithm \ref{alg:markov} with $\eta_j=2/(\mu(j+1))$. Then
  $$ 
  \ebb_{\acal}[F_S(\bw_t)-F_S(\bw_S)] \leq \frac{32L\alpha_0^2}{\mu^2}\Big(\frac{1}{t+1}+\frac{\log(et)}{\mu t(t+1)}\Big).$$
\end{theorem}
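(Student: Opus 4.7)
The plan is to adapt the classic smoothness-plus-PL analysis of SGD~\citep{karimi2016linear}, while carefully handling the bias in the stochastic gradient $g_j := \nabla f(\bw_{j-1}; \bz_{i_j}, \bz_{i_{j-1}})$ induced by the coupling between $\bw_{j-1}$ and $i_{j-1}$. First, applying $L$-smoothness (A2) to the update rule gives
\[
F_S(\bw_j) \leq F_S(\bw_{j-1}) - \eta_j\langle\nabla F_S(\bw_{j-1}),g_j\rangle + \tfrac{L\eta_j^2}{2}\|g_j\|_2^2,
\]
and the quadratic term is controlled by $L\eta_j^2\alpha_0^2/2$ in expectation after a smoothness-based transfer from $\bw_{j-2}$ (at which the assumed second-moment bound is directly available) to $\bw_{j-1}$.

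The key is the decoupling trick advertised in Section~\ref{sec:optimization}: since $\bw_{j-1}=\Pi_{\wcal}(\bw_{j-2}-\eta_{j-1}g_{j-1})$, smoothness yields $\nabla f(\bw_{j-1};\cdot) = \nabla f(\bw_{j-2};\cdot) + O(L\eta_{j-1}\|g_{j-1}\|_2)$, and similarly for $\nabla F_S$. Replacing $\bw_{j-1}$ by $\bw_{j-2}$ in both the outer $\nabla F_S$ and the inner stochastic gradient, and using that $\bw_{j-2}$ is independent of $(i_j,i_{j-1})$ so that $\ebb_{i_j,i_{j-1}}[\nabla f(\bw_{j-2};\bz_{i_j},\bz_{i_{j-1}})]=\nabla F_S(\bw_{j-2})$, one arrives (after Cauchy--Schwarz on the cross terms) at $\ebb[\eta_j\langle\nabla F_S(\bw_{j-1}),g_j\rangle] \geq \eta_j\,\ebb\|\nabla F_S(\bw_{j-2})\|_2^2 - cL\eta_j\eta_{j-1}\alpha_0^2$ for a constant $c$. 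Invoking the PL condition (A4), $\|\nabla F_S(\bw_{j-2})\|_2^2\ge 2\mu(F_S(\bw_{j-2})-F_S(\bw_S))$, and setting $A_j := \ebb[F_S(\bw_j)-F_S(\bw_S)]$ produces the two-step recursion
\[
A_j \le A_{j-1} - 2\mu\eta_j A_{j-2} + c'L\alpha_0^2\bigl(\eta_j^2+\eta_j\eta_{j-1}\bigr).
\]

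Plugging in $\eta_j=2/(\mu(j+1))$ turns this into $A_j \le A_{j-1} - \tfrac{4}{j+1}A_{j-2} + O\bigl(L\alpha_0^2/(\mu^2 j^2)\bigr)$. Applying the same recursion at index $j-1$ and dropping a nonpositive term gives $A_{j-1}-A_{j-2}\le O\bigl(L\alpha_0^2/(\mu^2 j^2)\bigr)$, reducing the above to the cleaner one-step form $A_j \le \tfrac{j-3}{j+1}A_{j-2} + O\bigl(L\alpha_0^2/(\mu^2 j^2)\bigr)$. I would then close the argument by induction with the ansatz $A_j \le \tfrac{C_1L\alpha_0^2}{\mu^2(j+1)} + \tfrac{C_2 L\alpha_0^2\log(ej)}{\mu^3 j(j+1)}$, handling even and odd indices separately and choosing $C_1,C_2$ large enough that the $O(L\alpha_0^2/(\mu^2 j^2))$ additive error is absorbed into the increment of the $\log(ej)/(j(j+1))$ term. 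The main obstacle is this final step: because the contraction acts on $A_{j-2}$ rather than $A_{j-1}$, the recursion does not telescope as cleanly as in pointwise SGD, and the accumulated decoupling error across consecutive iterates is precisely what generates the secondary $\log(et)/(\mu t(t+1))$ correction in the stated bound; matching constants at initialization then recovers the factor $32$.
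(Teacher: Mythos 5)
Your proposal is sound in outline and shares the paper's two key ingredients --- the smoothness descent step and the decoupling bound $\ebb_{\acal}[\langle \nabla F_S(\bw_{j-1}),\nabla f(\bw_{j-1};\bz_{i_j},\bz_{i_{j-1}})\rangle]\ge \ebb_{\acal}[\|\nabla F_S(\bw_{j-2})\|_2^2]-\O(L\alpha_0^2\eta_{j-1})$, which is exactly the paper's Eq.~(C.3) from the proof of Theorem~\ref{thm:opt-nc} --- but it diverges at the crucial point of where the PL condition is invoked, and the two routes are genuinely different from there on. The paper does \emph{not} apply PL at $\bw_{j-2}$; instead it first lower-bounds $\ebb_{\acal}[\|\nabla F_S(\bw_{j-2})\|_2^2]\ge \tfrac12\ebb_{\acal}[\|\nabla F_S(\bw_{j-1})\|_2^2]-4L\eta_{j-1}^2\alpha_0^2$ via smoothness of $F_S$, applies PL at $\bw_{j-1}$, and thereby obtains the genuinely one-step recursion $A_j\le(1-\mu\eta_j)A_{j-1}+4L\alpha_0^2(\eta_j\eta_{j-1}+\eta_j^2+\eta_j\eta_{j-1}^2)$ with $A_j:=\ebb_{\acal}[F_S(\bw_j)-F_S(\bw_S)]$; multiplying by $j(j+1)$ then telescopes exactly and yields the constant $32$ and the $\log(et)/\mu$ term (the latter coming precisely from the extra $\eta_j\eta_{j-1}^2=\O(1/(\mu^3j^3))$ error incurred in shifting the gradient from $\bw_{j-2}$ to $\bw_{j-1}$). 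Your two-step recursion $A_j\le A_{j-1}-2\mu\eta_j A_{j-2}+\O(L\alpha_0^2/(\mu^2j^2))$ can be closed the way you describe --- dropping $-2\mu\eta_{j-1}A_{j-3}\le 0$ to get $A_{j-1}\le A_{j-2}+\O(L\alpha_0^2/(\mu^2j^2))$, reducing to $A_j\le\frac{j-3}{j+1}A_{j-2}+\O(L\alpha_0^2/(\mu^2j^2))$, and inducting over even/odd indices --- and this does deliver the $\O(L\alpha_0^2/(\mu^2 t))$ rate, so the approach is viable. Two caveats, though. First, your narrative about the $\log(et)/(\mu t(t+1))$ correction does not match your own recursion: all of your error terms are $\O(1/(\mu^2j^2))$, so no $1/\mu^3$ logarithmic term arises on your route (this is harmless, since you would be proving a \emph{stronger} bound, but the explanation of where the log comes from is wrong for your derivation --- it is an artifact of the paper's gradient-shift step, not of ``accumulated decoupling error''). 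Second, the claim that ``matching constants at initialization recovers the factor $32$'' is unsubstantiated; your route produces a different constant that you would need to track explicitly and verify is at most $32$ (or simply present your bound and note it implies the stated one). The paper's one-step telescoping is cleaner and gives the constant for free, so if you want the theorem exactly as stated, shifting the PL application from $\bw_{j-2}$ to $\bw_{j-1}$ is the simpler path.
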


\section{Application: Differentially Private SGD for Pairwise Learning\label{sec:private}}
 
We now use Algorithm \ref{alg:markov} and our stability analysis (i.e. Theorem \ref{lem:est-smooth}) to develop a differentially private algorithm for pairwise learning. Let us start with the definition of differential privacy \citep{dwork2014algorithmic}.
\begin{definition}
A (randomized) algorithm $\acal$ is called $(\epsilon,\delta)$-differentially private (DP) if, for all neighboring datasets $S,S'$ and for all events $O$ in the output space of $\acal$, one has $\pbb[\acal(S) \in O] \leq e^\epsilon \pbb[\acal(S') \in O] + \delta.$
\end{definition}

\begin{algorithm}[t]
\caption{Differentially Private Localized SGD for Pairwise Learning\label{alg:dp-iterative-localization-2}}
\begin{algorithmic}[1]
\STATE {\bf Inputs:} Dataset $S = \{\zbf_i: i\in[n]\}$, parameters $\epsilon, \delta > 0$, and learning rate $\eta$, initial point $\wbf_0$
\STATE Set $K\!=\!\lceil\log_2 n\rceil$ and divide $S$ into $K$ disjoint subsets $\{S_1, \cdots, S_K\}$ where $|S_k| \!=\! n_k \!=\! 2^{-k}n$.
\FOR{$k=1$ to $K$}
\STATE Set $\eta_k = 4^{-k} \eta$
\STATE Compute $\bar{\wbf}_k$ by Algorithm \ref{alg:markov} based on $S_k$ and initiated at $\wbf_{k-1}$ for $\lceil n_k\log(4/\delta)\rceil$ steps. 
\STATE Set $\wbf_k = \bar{\wbf}_k + \ubf_k$ where $\ubf_k \sim \ncal(0,\sigma_k^2I_d)$ with $\sigma_k = 12G\eta_k\log(4/\delta)\sqrt{2\log(2.5/\delta)}/\epsilon$.
\ENDFOR
\STATE {\bf Outputs:} $\wbf_K$
\end{algorithmic}
\end{algorithm}
Our proposed DP algorithm for pairwise learning is described in Algorithm \ref{alg:dp-iterative-localization-2} which is inspired by the iterative localization technique \citep{feldman2020private} for pointwise learning.  The privacy and utility guarantees are given by the following theorem. Here $D$ denotes  the diameter of $\wcal$.

\begin{theorem}\label{thm:privacy-utility-2}
Let (A1), (A2), and (A3) hold true with $\sigma=0$. Let $\{\wbf_k: k \in [K]\}$ be produced by Algorithm \ref{alg:dp-iterative-localization-2} with $\eta = \frac{D}{G}\min\{\frac{\log(4/\delta)}{\sqrt{n}}, \frac{\epsilon}{12\log(4/\delta)\sqrt{2d\log(2.5/\delta)}}\} \leq \frac{2}{L}$. Then, Algorithm \ref{alg:dp-iterative-localization-2} satisfies $(\epsilon, \delta)$-DP and, with gradient complexity  $\ocal(n\log(1/\delta)),$ we have the utility bound that $$
\ebb[F(\wbf_K) - F(\wbf^*)] = \ocal\Big(GD\Big(\frac{1}{\sqrt{n}} + \frac{\sqrt{d}\log^{\frac{3}{2}}(1/\delta)}{\epsilon n}\Big)\Big).$$
\end{theorem}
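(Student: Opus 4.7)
The proof splits into a privacy guarantee, a utility guarantee, and a gradient-complexity count. The overall strategy is to view each phase $k$ as an instance of Algorithm \ref{alg:markov} run on the disjoint subset $S_k$, then combine the $K=\lceil\log_2 n\rceil$ phases via parallel composition of DP on one hand and an iterative localization telescoping on the other.

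For the privacy direction I would apply the uniform argument stability bound of Theorem \ref{lem:est-smooth} to phase $k$: with $\eta_k\leq 2/L$ (guaranteed by the stated choice of $\eta$) and $T_k=\lceil n_k\log(4/\delta)\rceil$ iterations, the expected stability of $\bar{\wbf}_k$ with respect to $S_k$ is $4G\eta_k\log(4/\delta)$. The key extra step is promoting this into an $\ell_2$-sensitivity compatible with the Gaussian mechanism: coupling the two runs on neighboring datasets with the same random sample sequence, divergence can only occur on the iterations where the differing point is sampled as $i_t$ or $i_{t-1}$, and a Chernoff bound shows that this happens on at most a constant times $\log(4/\delta)$ iterations with probability $1-\Theta(\delta)$, so the sensitivity is at most $12G\eta_k\log(4/\delta)$ on this high-probability event. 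The Gaussian mechanism with $\sigma_k=12G\eta_k\log(4/\delta)\sqrt{2\log(2.5/\delta)}/\epsilon$ then gives $(\epsilon,\delta)$-DP of $\wbf_k$ with respect to $S_k$ (after absorbing the bad event into $\delta$), and parallel composition across the disjoint $S_1,\ldots,S_K$ yields the global $(\epsilon,\delta)$-DP claim.

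For the utility bound I would decompose, for each phase $k$,
\[
\ebb[F(\wbf_k)]-F(\wbf^*)=\ebb[F(\bar{\wbf}_k)-F_{S_k}(\bar{\wbf}_k)]+\ebb[F_{S_k}(\bar{\wbf}_k)-F_{S_k}(\wbf^*)]+\ebb[F(\wbf_k)-F(\bar{\wbf}_k)],
\]
and bound the three pieces by Theorem \ref{lem:est-smooth} combined with Lemma \ref{lem:generalization-via-stability}, Theorem \ref{thm:convergence}(a) with comparator $\wbf^*$, and $G$-Lipschitz continuity together with $\ebb\|\ubf_k\|\leq\sigma_k\sqrt{d}$, giving per-phase contributions of order $G^2\eta_k\log(1/\delta)$, $\|\wbf_{k-1}-\wbf^*\|^2/(T_k\eta_k)+G^2\eta_k$, and $G^2\eta_k\sqrt{d}\log^{3/2}(1/\delta)/\epsilon$ respectively. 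I would track $\|\wbf_{k-1}-\wbf^*\|^2$ by a recursion that starts with the diameter $D$ and propagates using the non-expansiveness of projected SGD for smooth convex losses plus a $2\sigma_k^2 d$ penalty per noise injection. With the schedule $\eta_k=4^{-k}\eta$, $n_k=2^{-k}n$, the optimization budget $T_k\eta_k$ decays like $8^{-k}$ while the noise variance $\sigma_k^2 d$ decays like $16^{-k}$, so the phase errors form a convergent geometric series; plugging in the prescribed $\eta=\frac{D}{G}\min\{\frac{\log(4/\delta)}{\sqrt{n}},\frac{\epsilon}{12\log(4/\delta)\sqrt{2d\log(2.5/\delta)}}\}$ balances the statistical rate $\ocal(GD/\sqrt{n})$ against the privacy rate $\ocal(GD\sqrt{d}\log^{3/2}(1/\delta)/(n\epsilon))$.

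The gradient count is immediate: $\sum_{k=1}^K T_k\leq \sum_k(2^{-k}n\log(4/\delta)+1)=\ocal(n\log(1/\delta))$. I expect the hardest step to be the high-probability conversion of expected stability into a worst-case $\ell_2$-sensitivity usable by the Gaussian mechanism — this is exactly where smoothness with $\eta_k\leq 2/L$ and the coupling of random sample sequences enter, and where the multiplicative constant $3$ hidden in $\sigma_k$ originates. A secondary difficulty is that, without strong convexity, $F(\wbf_{k-1})-F(\wbf^*)$ does not directly control $\|\wbf_{k-1}-\wbf^*\|^2$; the recursion must track both simultaneously, which is precisely what the geometric schedule on $(\eta_k,n_k)$ is engineered to support.
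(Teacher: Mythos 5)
Your privacy argument and gradient count follow the paper's route essentially verbatim: the deterministic stability recursion from Theorem \ref{lem:est-smooth} gives $\|\bar{\wbf}_k-\bar{\wbf}_k'\|_2\le 2G\eta_k\sum_j\ibb_{[i_j=n\text{ or }i_{j-1}=n]}$, a Chernoff bound on the number of hits over $T_k=\lceil n_k\log(4/\delta)\rceil$ iterations yields the high-probability sensitivity $12G\eta_k\log(4/\delta)$, and the high-probability Gaussian mechanism plus disjointness of the $S_k$ gives $(\epsilon,\delta)$-DP; this is exactly Lemmas \ref{lem:high-prob-sensitivity} and \ref{lem:high-probability-gaussian-privacy} in the paper.

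The utility argument, however, has a genuine gap, and it sits exactly where you flagged your "secondary difficulty." You decompose the excess risk of each phase against the global comparator $\wbf^*$, so the optimization term from Theorem \ref{thm:convergence}(a) is $\|\wbf_{k-1}-\wbf^*\|_2^2/(2T_k\eta_k)$ with $T_k\eta_k\asymp 8^{-k}n\eta\log(4/\delta)$. For these terms to be summable (let alone for the single $k=K$ term to be $\ocal(GD/\sqrt n)$, with $8^{-K}\approx n^{-3}$), you would need $\|\wbf_{k-1}-\wbf^*\|_2^2$ to decay at least like $8^{-k}D^2$. Without strong convexity nothing forces the iterates toward $\wbf^*$, and the non-expansiveness recursion you propose does not deliver this: Lemma \ref{lem:nonexpansive} compares two trajectories driven by the \emph{same} gradients, whereas $\wbf^*$ is not a fixed point of a single stochastic update, so the best you get is $\|\wbf_t-\wbf^*\|_2\le\|\wbf_{t-1}-\wbf^*\|_2+\eta_k G$, which accumulates to an additive drift of order $T_k\eta_k G\approx D\sqrt{n}$ in the first phase alone. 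The paper avoids this entirely by telescoping $\ebb[F(\wbf_K)-F(\wbf^*)]=\sum_{k=1}^K\ebb[F(\bar{\wbf}_k)-F(\bar{\wbf}_{k-1})]+\ebb[F(\wbf_K)-F(\bar{\wbf}_K)]$ (with $\bar{\wbf}_0=\wbf^*$) and invoking Theorem \ref{thm:convergence}(a) in phase $k$ with comparator $\bar{\wbf}_{k-1}$: since phase $k$ is initialized at $\wbf_{k-1}=\bar{\wbf}_{k-1}+\ubf_{k-1}$, the relevant squared distance is exactly $\ebb[\|\ubf_{k-1}\|_2^2]=d\sigma_{k-1}^2\le 16^{-(k-1)}D^2$, so the optimization terms behave like $16^{-k}/8^{-k}=2^{-k}$ and sum geometrically (the independence of $\bar{\wbf}_{k-1}$ from $S_k$ also kills the term $\ebb[F_{S_k}(\bar{\wbf}_{k-1})-F(\bar{\wbf}_{k-1})]$, so only one stability term per phase survives). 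This choice of comparator is the essential mechanism of iterative localization, and your write-up is missing it; with it in place, the rest of your bookkeeping (stability term $\ocal(G^2\eta_k\log(1/\delta))$ per phase, final noise term $G\ebb[\|\ubf_K\|_2]\le\sqrt{d}\,G\sigma_K$, and the balancing choice of $\eta$) goes through as you describe.
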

The main difference from the pointwise setting in \citep{feldman2020private} is that Algorithm \ref{alg:dp-iterative-localization-2} involves the coupling dependency between $\{i_t, i_{t-1}\}$ at time $t$ in Algorithm \ref{alg:markov} and $\{i_{t-1}, i_{t-2}\}$ at time $t-1$, which renders the direct application of the standard concentration inequalities infeasible.  We propose a novel decomposition to circumvent this hurdle (see more detailed proof for  Theorem \ref{thm:privacy-utility-2} in Appendix \ref{sec:private-proof}).  

\begin{remark}
The above bound matches the lower bound given in \cite{bassily2014private} for $(\epsilon,\delta)$-differentially private pointwise learning up to a $\log(1/\delta)$ term. 
Our utility bound improves over the previous work \citep{huai2020pairwise} which has the bound $\ocal(\sqrt{d\log(1/\delta)}\log(n/\delta)/(\sqrt{n}\epsilon))$. During the preparation of this work, we notice a very recent paper \citep{xue2021differentially} also studied the private version of pairwise algorithm by using the localization technique. Their algorithm establishes the optimal rate $\ocal\big(1/\sqrt{n} +  \sqrt{d\log(1/\delta)}/(\epsilon n)\big)$ which, however, needs an expensive gradient complexity $\ocal(n^3\log(1/\delta))$. As a comparison, we achieve nearly optimal utility bound with linear gradient complexity $\ocal(n\log(1/\delta))$. 
\end{remark}

\begin{remark}
In Appendix \ref{sec:private-2}, we further remove the smoothness assumption (A2) required in Theorem \ref{thm:privacy-utility-2} and propose a private algorithm (stated as Algorithm \ref{alg:dp-iterative-localization} there) that achieves the optimal rate $\ocal\big(\big(1/\sqrt{n} +  \sqrt{d\log(1/\delta)}/(\epsilon n)\big)\big)$ with gradient complexity $\ocal(n^2\log(1/\delta))$. Such bound improves over the previous known results with nonsmooth losses  \citep{yang2021stability} where the utility bound was $\ocal(\sqrt{d\log(1/\delta)}\log(n/\delta)/(\sqrt{n}\epsilon))$.
\end{remark}

\section{Experimental Validation}\label{sec:exp}
We now report some preliminary experiments\footnote{The source codes are available at \url{https://github.com/zhenhuan-yang/simple-pairwise}.} on AUC maximization with $f(\wbf; (\xbf, y), (\xbf', y')) = \ell(\wbf^\top(\xbf - \xbf'))\ibb_{[y=1\wedge y'=-1]}$ where $\ell$ is a surrogate loss function, e.g., the hinge loss $\ell(t) = (1 - t)_+.$

The purpose of our first   experiment is to compare our algorithm, i.e. Algorithm \ref{alg:markov},  against four existing algorithms for pairwise learning in terms of generalization and CPU running time on several datasets available from the LIBSVM website \citep{CC01a}.  These algorithms are: 1) \code{OLP} \citep{Kar} uses a buffer $B_t$ updated by a variant of Reservoir sampling with replacement where the buffer size is chosen to be $200$ in order to guarantee the maximum AUC score as indicated in \cite{Kar}; 2) $\code{OAM}_{gra}$ \citep{zhao2011online} is tailored for AUC maximization with the hinge loss which uses buffers by Reservoir sampling.  The buffer size is set to be $100$ for both positive and negative buffers as suggested in that paper; 3) $\code{SGD}_{pair}$ \citep{lei2020sharper} randomly pick a pair from $\binom{n}{2}$ pairs by uniform distribution;  4) \code{SPAUC} \citep{lei2021stochastic}, where AUC maximization problem with the least square loss was reformulated as stochastic saddle point (min-max) problem.  Note that \code{SPAUC} and $\code{OAM}_{gra}$ can only apply to AUC maximization problem with the least square loss and hinge loss, respectively.

To validate the generalization ability, the surrogate loss for Algorithm \ref{alg:markov}, $\code{SGD}_{pair}$ and \code{OLP} is chosen to be the hinge loss. Average AUC scores of different algorithms are listed in Table \ref{tab:gen-hinge} where we can see that our algorithm yields competitive generalization performance with \code{OAM} and \code{OLP} using a large buffering set.    Detailed experimental setup, data statistics and more  results such as comparison with \code{OLP} and \code{OAM} with the size of the buffering set $s=1$ are listed in Appendix \ref{sec:more-exp}. 
 

\begin{table*}[t]
\centering
\setlength{\tabcolsep}{2pt}
\small
\caption{Average AUC score $\pm$ standard deviation across multiple datasets. Our best results are highlighted in bold.}
\begin{tabular}{@{\hskip1pt}c@{\hskip1pt}|c|c|c|c|c|c}
\hline
 Algorithm & \code{diabetes} & \code{german} & \code{ijcnn1} & \code{letter} & \code{mnist} & \code{usps} \\\hline\hline 
\code{Ours} & \bm{$.831 \pm .030$} &  $.793 \pm .021$ & \bm{$.934 \pm .002$} & $.810 \pm .007$  & \bm{$.932 \pm .001$} & \bm{$.926 \pm .006$}  \\\hline
$\code{SGD}_{pair}$~\citep{lei2020sharper} & $.830 \pm .028$ &  \textbf{$.794 \pm .023$} & $.934 \pm .003$ & $.811 \pm .008$  & $.932 \pm .001$ & $.925 \pm .006$ \\\hline
\code{OLP} \citep{Kar} & $.825 \pm .028$ & $.787 \pm .028$ & $.916 \pm .003$ & $.808 \pm .010$ & $.927 \pm .003$  & $.917 \pm .006$ \\\hline
$\code{OAM}_{gra}$ \citep{zhao2011online} & $.828 \pm .026$ & $.785 \pm .029$ & $.930 \pm .003$ & $.806 \pm .008$  & $.898 \pm .002$ & $.916 \pm .005$ \\\hline
\code{SPAUC} \citep{lei2021stochastic} & $.828 \pm .031$ & $.799 \pm .026$ & $.932 \pm .002$ & $.809 \pm .008$ & $.927 \pm .002$ & $.923 \pm .005$ \\\hline
\end{tabular}
\label{tab:gen-hinge}
\end{table*}

\begin{figure}[ht!]
\begin{subfigure}{.33\textwidth}
\centering
\includegraphics[width=.95\linewidth]{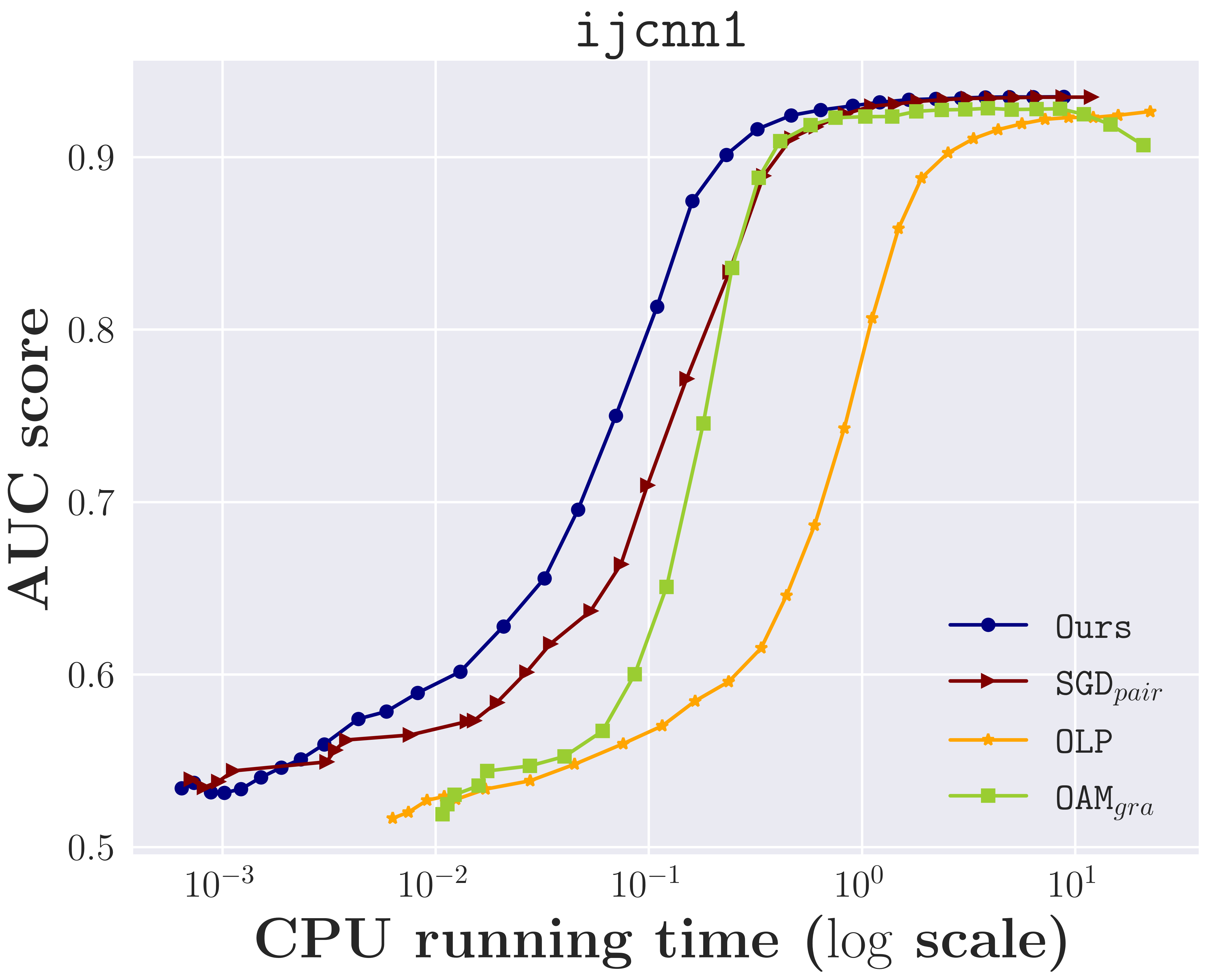}  
\end{subfigure}
\begin{subfigure}{.33\textwidth}
\centering
\includegraphics[width=.95\linewidth]{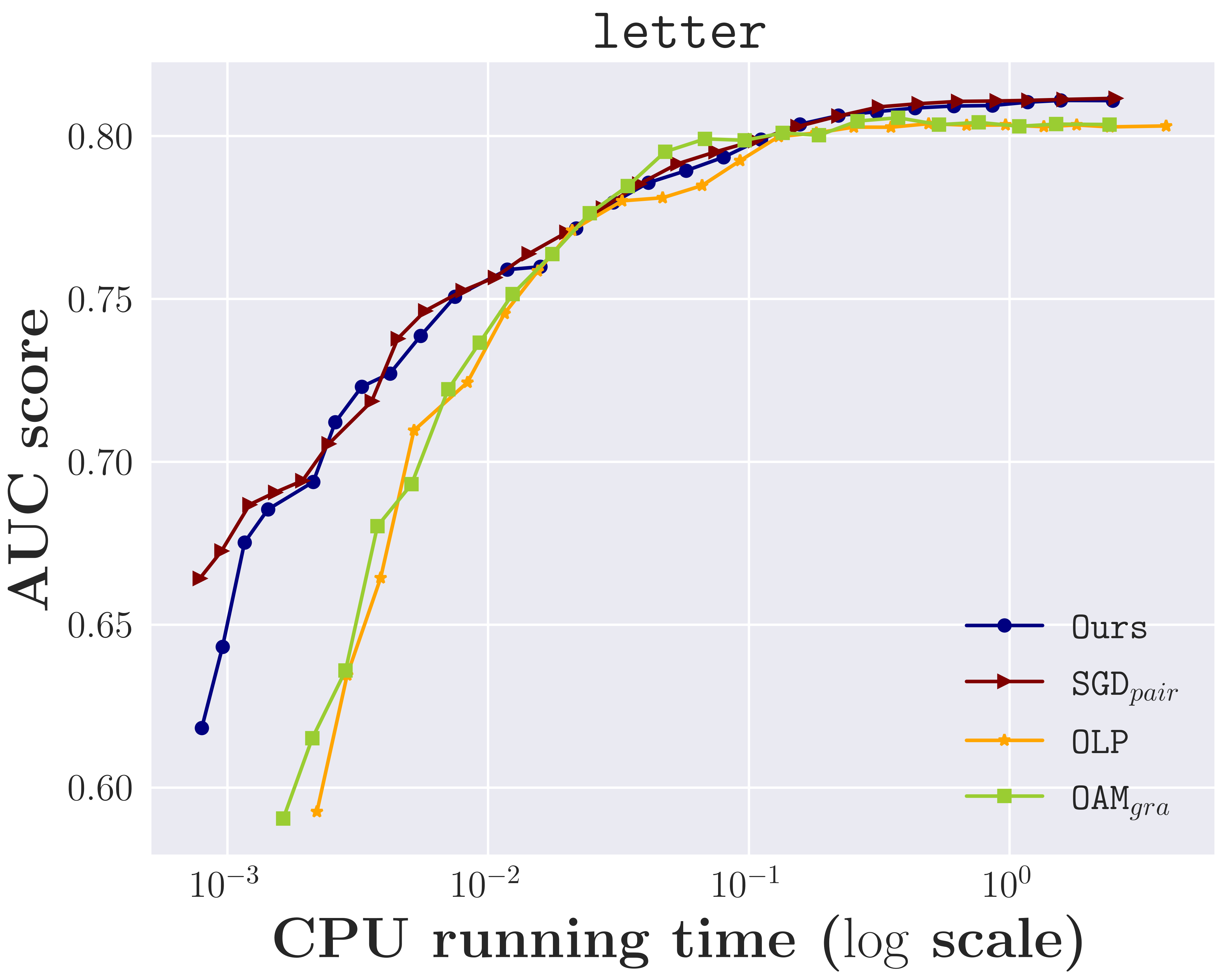}  
\end{subfigure}
\begin{subfigure}{.33\textwidth}
\centering
\includegraphics[width=.95\linewidth]{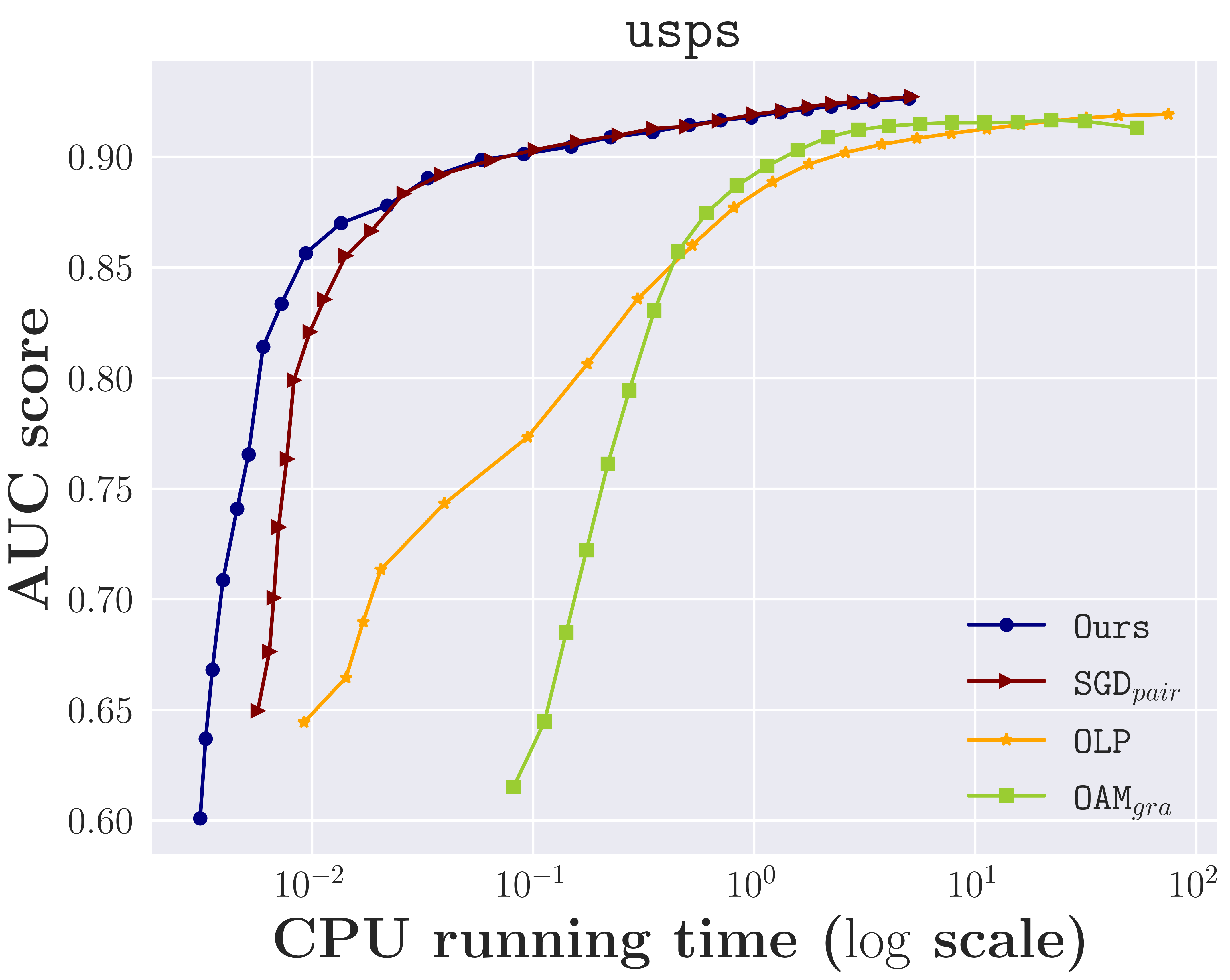}  
\end{subfigure}
\caption{CPU running time ($\log$ scale) versus the AUC score\label{fig:iter-hinge}}
\end{figure}

To fairly compare the CPU running time, we apply the following uniform setting across all algorithms: 1) $\W$ is an $\ell_2$ ball with the same diameter; 2) the step sizes $\eta_t = \eta$ which is tuned by cross validation. 
We report the results in Figure \ref{fig:iter-hinge} for the hinge loss. We can see that CPU running time for our algorithm and $\code{SGD}_{pair}$ are similar while  \code{OLP} and \code{OAM} needs more time to converge.  The possible reason behind this is that they have a high gradient complexity $\O(s)$ at each iteration while ours is $\O(1).$  More results on comparison for the least square loss and for differentially private algorithms are given in Appendix \ref{sec:more-exp}.




\section{Conclusion}\label{sec:conclusion}
In this paper, we propose simple stochastic and online  gradient descent algorithms for pairwise learning. The key idea is to build a gradient estimator by pairing the current instance with the previous instance, which enjoys favorable computation and storage complexity. We leverage the lens of algorithmic stability to study its generalization and apply tools in optimization theory to study its convergence rates for various problems including convex/nonconvex and smooth/nonsmooth settings. We also use  our algorithms and stability analysis to develop a new DP algorithm  for pairwise learning with differential privacy constraints which significantly improves the existing results. 
The main  difference from pointwise learning in the analysis is the coupling between models and previous instances, which is handled by introducing novel decoupling techniques. 

For future work, it would be interesting to see whether the analysis and results still hold true if the current example $\zbf_{i_t}$ in Algorithm \ref{alg:markov} is paired with one arbitrary previous example (e.g., $ \zbf_{i_1}$ ).  Other future work would be a systematic extension of our algorithms using other acceleration schemes such as momentum and variance reduction techniques.

\section*{Acknowledgments}\vspace*{-2mm}
The authors are grateful to the anonymous reviewers for their constructive comments and suggestions. Tianbao Yang is partially supported by NSF Career Award \#1844403 and NSF Award \#2110545.  Yiming Ying  is supported by NSF under grants DMS-2110836, IIS-1816227, IIS-2110546, and  IIS-2103450. 

\setlength{\bibsep}{0.06cm}
\bibliographystyle{abbrvnat}
\bibliography{bibfile,learning}

\newpage
\newpage
\appendix
\numberwithin{equation}{section}
\numberwithin{theorem}{section}
\numberwithin{figure}{section}
\numberwithin{table}{section}
\renewcommand{\thesection}{{\Alph{section}}}
\renewcommand{\thesubsection}{\Alph{section}.\arabic{subsection}}
\renewcommand{\thesubsubsection}{\Roman{section}.\arabic{subsection}.\arabic{subsubsection}}
\setcounter{secnumdepth}{-1}
\setcounter{secnumdepth}{3}


\section{Proofs of Generalization Error (Theorems 4 and 5)\label{sec:proof-est}}

In this section, we present the stability and generalization error bounds for SGD with convex loss functions. We first prove Theorem \ref{lem:est-nonsmooth} on smooth problems and then Theorem \ref{lem:est-smooth} on nonsmooth problems.  Recall $\ebb_{\acal}$ denotes the expectation w.r.t. the internal randomness of $\acal$. For SGD, this means the expectation w.r.t. $\{i_j\}_{j\in[t]}$.
\vspace*{-2mm}
\begin{proof}[{\bf Proof of Theorem \ref{lem:est-nonsmooth}}]
We first investigate the uniform stability of Algorithm \ref{alg:markov}. 
Let $S'=\{z_1,\ldots,\bz_{n-1},\bz_{n}'\}$, where $z_n'$ is independently drawn from $\rho$, and  $\{\bw_t'\}$ be produced by Algorithm \ref{alg:markov} w.r.t. data $S'$.
We consider two cases:  i.e. the case of  $\{ i_t\neq n \text{ and } i_{t-1}\neq n \}$ and the case of $\{ i_t= n \text{ or }  i_{t-1}= n\}.$

If $i_t\neq n$ and $i_{t-1}\neq n$, then
\begin{align*}
\big\|\bw_t-\bw_t'\big\|_2^2 & \leq \big\|\bw_{t-1}-\eta_t\nabla f(\bw_{t-1};\bz_{i_t},\bz_{i_{t-1}})-\bw'_{t-1}+\eta_t\nabla f(\bw'_{t-1};z'_{i_t},z'_{i_{t-1}})\big\|_2^2 \\
& = \big\|\bw_{t-1}-\eta_t\nabla f(\bw_{t-1};\bz_{i_t},\bz_{i_{t-1}})-\bw'_{t-1}+\eta_t\nabla f(\bw'_{t-1};\bz_{i_t},\bz_{i_{t-1}})\big\|_2^2 \\
& = \|\bw_{t-1}-\bw'_{t-1}\|_2^2 + \eta_t^2\big\|\nabla f(\bw_{t-1};\bz_{i_t},\bz_{i_{t-1}})-\nabla f(\bw'_{t-1};\bz_{i_t},\bz_{i_{t-1}})\big\|_2^2\\
& -2\eta_t\langle\bw_{t-1}-\bw_{t-1}',\nabla f(\bw_{t-1};\bz_{i_t},\bz_{i_{t-1}})-\nabla f(\bw'_{t-1};\bz_{i_t},\bz_{i_{t-1}})\rangle\\
& \leq \|\bw_{t-1}-\bw'_{t-1}\|_2^2 + \eta_t^2\big\|\nabla f(\bw_{t-1};\bz_{i_t},\bz_{i_{t-1}})-\nabla f(\bw'_{t-1};\bz_{i_t},\bz_{i_{t-1}})\big\|_2^2\\
& \leq \|\bw_{t-1}-\bw'_{t-1}\|_2^2 + 4\eta_t^2G^2,
\end{align*}
where the last second inequality follows from the inequality
$\langle\bw_{t-1}-\bw_{t-1}',\nabla f(\bw_{t-1};\bz_{i_t},\bz_{i_{t-1}})-\nabla f(\bw'_{t-1};\bz_{i_t},\bz_{i_{t-1}})\rangle\geq0$ due to the convexity of $f$ and the last inequality follows from the Lipschitz continuity of $f$. If $i_t=n$ or $i_{t-1}=n$, it follows from the elementary inequality $(a+b)^2\leq(1+p)a^2+(1+1/p)b^2$ and the Lipschitz condition that
\begin{align*}
  \big\|\bw_t-\bw_t'\big\|_2^2 & \leq (1+p)\|\bw_{t-1}-\bw'_{t-1}\|_2^2 \\ & +(1+1/p)\eta_t^2\big\|\nabla f(\bw_{t-1};\bz_{i_t},\bz_{i_{t-1}})-\nabla f(\bw'_{t-1};z'_{i_t},z'_{i_{t-1}})\big\|_2^2\\
  & \leq (1+p)\|\bw_{t-1}-\bw'_{t-1}\|_2^2+4(1+1/p)\eta_t^2G^2.
\end{align*}
We can combine the above two cases together and derive
\begin{align*}
\big\|\bw_t-\bw_t'\big\|_2^2
& \leq \big(\|\bw_{t-1}-\bw'_{t-1}\|_2^2 + 4\eta_t^2G^2\big)\ibb_{[i_t\neq  n\text{ and }i_{t-1}= n]} \\
& + \big((1+p)\|\bw_{t-1}-\bw'_{t-1}\|_2^2+4(1+1/p)\eta_t^2G^2\big)\ibb_{[i_t= n\text{ or }i_{t-1}= n]} \\
& \leq \big(1+p\ibb_{[i_t= n\text{ or }i_{t-1}= n]}\big)\|\bw_{t-1}-\bw'_{t-1}\|_2^2+4\eta_t^2G^2\big(1+\ibb_{[i_t= n\text{ or }i_{t-1}= n]}/p\big)\\
& = \big(1+p\big)^{\ibb_{[i_t= n\text{ or }i_{t-1}= n]}}\|\bw_{t-1}-\bw'_{t-1}\|_2^2+4\eta_t^2G^2\big(1+\ibb_{[i_t= n\text{ or }i_{t-1}= n]}/p\big),
\end{align*}
where $\ibb_{[\cdot]}$ is the indicator function.
We can apply the above inequality recursively and get
\begin{align*}
\big\|\bw_t-\bw_t'\big\|_2^2 &\leq 4G^2\sum_{k=1}^{t}\eta_k^2\big(1+\ibb_{[i_k= n\text{ or }i_{k-1}= n]}/p\big)\prod_{j=k+1}^{t}\big(1+p\big)^{\ibb_{[i_j= n\text{ or }i_{j-1}= n]}}\\
& \leq 4G^2\prod_{j=1}^{t}\big(1+p\big)^{\ibb_{[i_j= n\text{ or }i_{j-1}= n]}}\sum_{k=1}^{t}\eta_k^2\big(1+\ibb_{[i_k= n\text{ or }i_{k-1}= n]}/p\big)\\
& = 4G^2\eta^2\big(1+p\big)^{\sum_{j=1}^{t}\ibb_{[i_j= n\text{ or }i_{j-1}= n]}}\big(t+\sum_{k=1}^{t}\ibb_{[i_k= n\text{ or }i_{k-1}= n]}/p\big),
\end{align*}
where the last inequality follows from $\eta_j=\eta$. 

Now, we choose $p=1/\big(\sum_{j=1}^{t}\ibb_{[i_j= n\text{ or }i_{j-1}= n]}\big)$ and use the inequality $(1+x)^{1/x}\leq e$ to derive the following inequality
\[
\big\|\bw_t-\bw_t'\big\|_2^2\leq 4eG^2\eta^2\Big(t+\big(\sum_{k=1}^{t}\ibb_{[i_k= n\text{ or }i_{k-1}= n]}\big)^2\Big).
\]
By the inequality $\ibb_{[i_k= n\text{ or }i_{k-1}= n]}\leq \ibb_{[i_k= n]}+\ibb_{[i_{k-1}= n]}$ and $(a+b)^2\leq 2(a^2+b^2)$ we know
\begin{align*}
\ebb_{\acal}\Big[\big(\sum_{k=1}^{t}\ibb_{[i_k= n\text{ or }i_{k-1}= n]}\big)^2\Big]
& \leq 2\ebb_{\acal}\Big[\big(\sum_{k=1}^{t}\ibb_{[i_k= n}\big)^2\Big]+2\ebb_{\acal}\Big[\big(\sum_{k=1}^{t}\ibb_{[i_{k-1}= n}\big)^2\Big]\\
& = 4\ebb\Big[\big(\sum_{k=1}^{t}\ibb_{[i_k= n]}\big)^2\Big]
\leq 4t+4\sum_{j,k\in[t]:j\neq k}\ebb\big[\ibb_{[i_j= n]}\ibb_{[i_k= n]}\big]\\
& = 4t+4\sum_{j,k\in[t]:j\neq k}\frac{1}{n^2} \leq 4t+4t^2/n^2.
\end{align*}
We can combine the above two inequalities together and derive
\[
\ebb_\acal [\|\wbf_t - \wbf_t'\|_2^2] \leq 4eG^2\eta^2 \Big(5t+\frac{4t^2}{n^2}\Big)
\]
and by the convexity of $\|\cdot\|_2^2$ it follows
\[
\ebb_\acal [\|\bar{\wbf}_t - \bar{\wbf}_t'\|_2^2] \leq \frac{1}{t}\sum_{j=1}^t\ebb_\acal [\|\wbf_j - \wbf_j'\|_2^2] \leq 4eG^2\eta^2 \Big(5t+\frac{4t^2}{n^2}\Big).
\]
This establishes the uniform stability of Algorithm \ref{alg:markov}.
Furthermore, for any $\zbf, \zbf'$, we have
\begin{align*}
\ebb_\acal [f(\bar{\wbf}_t, \zbf, \zbf') - f(\bar{\wbf}_t', \zbf, \zbf')] \leq & G\ebb_\acal [\|\bar{\wbf}_t - \bar{\wbf}_t'\|_2] = G\ebb_\acal \Big[\sqrt{\|\bar{\wbf}_t - \bar{\wbf}_t'\|_2^2}\Big]\\
\leq & G\sqrt{\ebb_\acal [\|\bar{\wbf}_t - \bar{\wbf}_t'\|_2^2]} \leq 2\sqrt{e}G^2\eta\Big(\sqrt{5t}+\frac{2t}{n}\Big)
\end{align*}
where the first inequality we used the $G$-Lipschitz continuity of $f$ and the second inequality we used the Jensen's inequality. Therefore, Algorithm \ref{alg:markov} is $2\sqrt{e}G^2\eta\Big(\sqrt{5t}+\frac{2t}{n}\Big)$-uniformly stable. By Lemma \ref{lem:generalization-via-stability} it follows
\[
\ebb_\acal[F(\bar{\wbf}_t) - F_S(\bar{\wbf}_t)] \leq 4\sqrt{e}G^2\eta\Big(\sqrt{5t}+\frac{2t}{n}\Big),
\]
which gives us the desired result. 
\end{proof}

To prove Theorem \ref{lem:est-smooth} we require the following lemma on the nonexpansiveness of gradient map $\bw\mapsto\bw-\eta \nabla f(\bw;z,z')$.
\begin{lemma}[\citealt{hardt2016train}\label{lem:nonexpansive}]
Assume for all $z\in\zcal$, the function $\bw\mapsto f(\bw;z,z')$ is convex and $L$-smooth. Then for all $\eta\leq2/L$ and $z,z'\in\zcal$ there holds
\[
\|\bw-\eta \nabla f(\bw;z,z')-\bw'+\eta \nabla f(\bw';z,z')\|_2\leq \|\bw-\bw'\|_2.
\]
\end{lemma}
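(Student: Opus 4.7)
The plan is to follow the same two-case analysis used in the proof of Theorem \ref{lem:est-nonsmooth}, but exploit the nonexpansiveness of the gradient step in Lemma \ref{lem:nonexpansive} to avoid the $(1+p)$ trick and obtain a cleaner, stronger stability bound. Let $S$ and $S'$ differ only at index $n$, and let $\{\bw_t\}$, $\{\bw_t'\}$ be the corresponding iterates produced by Algorithm \ref{alg:markov} under the same coupling $\{i_t\}$.

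First I would split on whether the ``bad'' index $n$ is touched at step $t$. If $i_t\neq n$ and $i_{t-1}\neq n$, then the two updates use the identical loss $f(\cdot;\bz_{i_t},\bz_{i_{t-1}})$, so Lemma \ref{lem:nonexpansive} (applicable because $\eta_t=\eta\le 2/L$ and (A2),(A3) hold with $\alpha=0$) together with the nonexpansiveness of the Euclidean projection $\Proj$ gives
\[
\|\bw_t-\bw_t'\|_2 \le \|\bw_{t-1}-\bw_{t-1}'\|_2.
\]
If instead $i_t=n$ or $i_{t-1}=n$, I will just use the triangle inequality, nonexpansiveness of $\Proj$, and the $G$-Lipschitz bound (A1) on both gradient terms to get
\[
\|\bw_t-\bw_t'\|_2 \le \|\bw_{t-1}-\bw_{t-1}'\|_2 + 2G\eta_t.
\]
These two cases can be combined into the single deterministic recursion
\[
\|\bw_t-\bw_t'\|_2 \le \|\bw_{t-1}-\bw_{t-1}'\|_2 + 2G\eta_t\,\ibb_{[i_t=n\text{ or }i_{t-1}=n]}.
\]

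Unrolling from $\bw_0=\bw_0'$ and taking expectation over $\acal$, a union bound gives $\pbb[i_j=n\text{ or }i_{j-1}=n]\le 2/n$, so
\[
\ebb_{\acal}[\|\bw_t-\bw_t'\|_2] \le 2G\sum_{j=1}^{t}\eta_j\cdot\frac{2}{n} = \frac{4G}{n}\sum_{j=1}^{t}\eta_j.
\]
Because $\bar{\bw}_t$ is a convex combination of the $\bw_{j-2}$'s, convexity of $\|\cdot\|_2$ yields the same bound for $\ebb_{\acal}[\|\bar{\bw}_t-\bar{\bw}_t'\|_2]$, establishing the claimed $\frac{4G}{n}\sum_{j=1}^{t}\eta_j$-uniform argument stability. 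Then (A1) upgrades this to $G\cdot\frac{4G}{n}\sum_{j=1}^{t}\eta_j$-uniform stability of the loss values, and Lemma \ref{lem:generalization-via-stability} immediately gives the generalization bound $\ebb_{S,\acal}[F(\bar{\bw}_t)-F_S(\bar{\bw}_t)]\le \frac{8G^2}{n}\sum_{j=1}^{t}\eta_j$.

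The step that requires the most care is the case $i_t\neq n,\ i_{t-1}\neq n$: one must check that the gradient actually uses the same loss on both sides (it does, because both $\bz_{i_t}$ and $\bz_{i_{t-1}}$ coincide in $S$ and $S'$) and that Lemma \ref{lem:nonexpansive} applies through the projection (which is also nonexpansive). Once this is in place, the coupling between $i_{t-1}$ and $\bw_{t-1}$—which forced the more delicate indicator-function argument in Theorem \ref{lem:est-nonsmooth}—becomes harmless, because the recursion is now purely additive in the indicator rather than multiplicative, so one can pass to expectation at the very end by linearity without any $(1+p)^{\sum \ibb}$ inflation.
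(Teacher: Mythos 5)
Your proposal does not prove the statement in question. The statement is Lemma \ref{lem:nonexpansive} itself: the nonexpansiveness of the single gradient map $\bw\mapsto\bw-\eta\nabla f(\bw;z,z')$ for a convex, $L$-smooth loss and $\eta\le 2/L$. What you have written is instead a proof of Theorem \ref{lem:est-smooth} (the argument-stability and generalization bound for smooth problems), and in carrying it out you explicitly \emph{invoke} Lemma \ref{lem:nonexpansive} in the case $i_t\neq n$ and $i_{t-1}\neq n$. Using the target statement as a hypothesis establishes nothing about it; the entire argument for the lemma is missing. (The paper does not reprove the lemma either — it cites \citet{hardt2016train} — but the task was to supply that proof, not to reuse the lemma downstream.)

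For the record, the missing argument is short. Since $\bw\mapsto f(\bw;z,z')$ is convex with $L$-Lipschitz gradient, the gradient is co-coercive: $\langle\nabla f(\bw;z,z')-\nabla f(\bw';z,z'),\bw-\bw'\rangle\ge\frac{1}{L}\|\nabla f(\bw;z,z')-\nabla f(\bw';z,z')\|_2^2$. Expanding the square then gives
\begin{align*}
\big\|\bw-\bw'-\eta\big(\nabla f(\bw;z,z')-\nabla f(\bw';z,z')\big)\big\|_2^2
&=\|\bw-\bw'\|_2^2-2\eta\big\langle\nabla f(\bw;z,z')-\nabla f(\bw';z,z'),\bw-\bw'\big\rangle\\
&\quad+\eta^2\big\|\nabla f(\bw;z,z')-\nabla f(\bw';z,z')\big\|_2^2\\
&\le\|\bw-\bw'\|_2^2-\eta\Big(\frac{2}{L}-\eta\Big)\big\|\nabla f(\bw;z,z')-\nabla f(\bw';z,z')\big\|_2^2,
\end{align*}
and the last term is nonpositive exactly when $\eta\le 2/L$, which yields the claim after taking square roots. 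The remainder of your write-up (the two-case recursion, the bound $\pbb[i_j=n\text{ or }i_{j-1}=n]\le 2/n$, and the passage to $\bar{\bw}_t$ by convexity of the norm) correctly reproduces the paper's proof of Theorem \ref{lem:est-smooth}, but that is a different statement from the one you were asked to prove.
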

\begin{proof}[{\bf Proof of Theorem \ref{lem:est-smooth}}]
Let $S'=\{z_1,\ldots,\bz_{n-1},\bz_{n}'\}$, where $z_n'$ is independently drawn from $\rho$.
Let $\{\bw_t'\}$ be produced by Algorithm \ref{alg:markov} w.r.t. $S'$.
We consider two cases. If $i_t\neq n$ and $i_{t-1}\neq n$, then it follows from Lemma \ref{lem:nonexpansive} that
\begin{align*}
\big\|\bw_t-\bw_t'\big\|_2 & \leq \big\|\bw_{t-1}-\eta_t\nabla f(\bw_{t-1};\bz_{i_t},\bz_{i_{t-1}})-\bw'_{t-1}+\eta_t\nabla f(\bw'_{t-1};z'_{i_t},z'_{i_{t-1}})\big\|_2 \\
& = \big\|\bw_{t-1}-\eta_t\nabla f(\bw_{t-1};\bz_{i_t},\bz_{i_{t-1}})-\bw'_{t-1}+\eta_t\nabla f(\bw'_{t-1};\bz_{i_t},\bz_{i_{t-1}})\big\|_2 \\
& \leq \|\bw_{t-1}-\bw'_{t-1}\|_2.
\end{align*}
Otherwise, we know
\begin{align*}
\big\|\bw_t-\bw_t'\big\|_2 & \leq \|\bw_{t-1}-\bw'_{t-1}\|_2+\eta_t\big\|\nabla f(\bw_{t-1};\bz_{i_t},\bz_{i_{t-1}})-\nabla f(\bw'_{t-1};z'_{i_t},z'_{i_{t-1}})\big\|_2 \\
& \leq \|\bw_{t-1}-\bw'_{t-1}\|_2+2\eta_tG.
\end{align*}
We can combine the above two cases together and derive the following inequality
\begin{align*}
\big\|\bw_t-\bw_t'\big\|_2 & \leq \|\bw_{t-1}-\bw'_{t-1}\|_2\ibb_{[i_t\neq n\text{ and }i_{t-1}\neq n]} + \big(\|\bw_{t-1}-\bw'_{t-1}\|_2+2\eta_tG\big)\ibb_{[i_t= n\text{ or }i_{t-1}= n]}\\
& = \|\bw_{t-1}-\bw'_{t-1}\|_2 + 2\eta_tG\ibb_{[i_t= n\text{ or }i_{t-1}= n]}.
\end{align*}
We can apply the above inequality recursively and get
\[
\big\|\bw_t-\bw_t'\big\|_2\leq2G\sum_{j=1}^{t}\eta_j\ibb_{[i_j= n\text{ or }i_{j-1}= n]}\leq 2G\sum_{j=1}^{t}\eta_j\big(\ibb_{[i_j= n]}+\ibb_{[i_{j-1}= n]}).
\]
Taking expectations over both sides gives
$
\ebb_{\acal}\big[\|\bw_t-\bw_t'\big\|_2\big]\leq \frac{4G}{n}\sum_{j=1}^{t}\eta_j.
$
It then follows from the convexity of $\|\cdot\|_2$ that
\[
\ebb_{\acal}\big[\|\bar{\wbf}_t-\bar{\wbf}_t'\big\|_2\big]\leq \frac{4G}{n}\sum_{j=1}^{t}\eta_j.
\]
This establishes the uniform argument stability of Algorithm \ref{alg:markov}.
Furthermore, it follows the Lipschitz condition that
\[
\sup_{z,z'}\ebb_{\acal}\big[f(\bar{\wbf}_t;z,z')-f(\bar{\wbf}_t';z,z')\big]\leq \frac{4G^2}{n}\sum_{j=1}^{t}\eta_j.
\]
The desired result then follows from Lemma \ref{lem:generalization-via-stability}. The proof for Theorem \ref{lem:est-smooth} is completed.
\end{proof}

Finally, we consider the generalization analysis for nonconvex problems under the PL condition.
To prove Theorem \ref{thm:exg-pl}, we first introduce a lemma motivated by the arguments in \citep{hardt2016train}.
\begin{lemma}\label{lem:stab-cond}
  Let $S=\{z_i\}_{i\in[n]}$ and $S'=\{z_i'\}_{i\in[n]}$ be neighboring datasets differing by a single example. 
  Let $\{\bw_t\}_t$ and $\{\bw_t'\}_t$ be produced by Algorithm \ref{alg:markov} w.r.t. $S$ and $S'$, respectively.
  Let Assumption (A1) hold and $\sup_{z,z'}f(\bw_i,z,z')\leq B$. Let $\triangle_t=\|\bw_t-\bw_t'\|_2$. Then { for every $z, z' \in \Z $ and every $t_0 \in [n]$, there holds}
  \[
  \ebb\big[|f(\bw_T;z,z')-f(\bw_T';z,z')|\big] \leq G\ebb\big[\triangle_T |\triangle_{t_0}=0\big]+\frac{Bt_0}{n}.
  \]
\end{lemma}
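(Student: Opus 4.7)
My plan is to use a standard divergence-time decomposition, adapted to the pairwise structure of Algorithm \ref{alg:markov}. The key observation is that when the two datasets $S, S'$ differ only at index $n$, the two trajectories $\{\bw_t\}$ and $\{\bw_t'\}$ coincide pathwise on the event that Algorithm \ref{alg:markov} has not yet ``touched'' index $n$; on that event we can control $|f(\bw_T;z,z')-f(\bw_T';z,z')|$ by the Lipschitz constant times $\triangle_T$, whereas off that event we can only fall back on the crude bound $B$ coming from the boundedness of $f$.

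Concretely, the first step is to split the expectation around the event $E := \{\triangle_{t_0}=0\}$, writing
\[
\ebb\big[|f(\bw_T;z,z')-f(\bw_T';z,z')|\big] \le \ebb\big[|f(\bw_T;z,z')-f(\bw_T';z,z')|\,\mbI_E\big] + B\,\pr[E^c],
\]
where on $E^c$ we use $0 \le f \le B$ to get $|f(\bw_T;\cdot,\cdot)-f(\bw_T';\cdot,\cdot)| \le B$. On $E$, the Lipschitz assumption (A1) gives $|f(\bw_T;z,z')-f(\bw_T';z,z')| \le G\triangle_T$, so the first term is at most $G\,\ebb[\triangle_T\,\mbI_E] \le G\,\ebb[\triangle_T \mid E]$, which matches the shape in the claim.

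The second step is to show $\pr[E^c] \le t_0/n$. This is the one spot where the pairwise update matters: since $S$ and $S'$ only differ at index $n$, as long as the SGD step at time $j$ uses neither $\bz_{i_j}=\bz_n$ nor $\bz_{i_{j-1}}=\bz_n$, the two runs see identical gradients and preserve $\bw_j = \bw_j'$ by induction. Hence $\{\triangle_{t_0} \neq 0\}$ forces $i_j = n$ for some $j \le t_0$, and a union bound over the $i_j$'s (each uniform on $[n]$) yields $\pr[E^c] \le t_0/n$. I expect this probabilistic step to be the only nontrivial part: in the pointwise SGD analysis of \cite{hardt2016train} each step samples one fresh index, whereas here each step uses two consecutive indices $i_j, i_{j-1}$, so one has to verify that this doubling does not inflate the rate beyond $t_0/n$ — which it does not, because the indices $i_0,\ldots,i_{t_0}$ that are ever used through step $t_0$ still form a set of size $t_0+1$, giving the claimed rate up to a harmless constant. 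Combining the two bounds delivers the inequality.
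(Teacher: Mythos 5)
Your proposal is correct and follows essentially the same route as the paper's proof: condition on the event $\{\triangle_{t_0}=0\}$, use the $G$-Lipschitz bound on that event and the crude bound $B$ on its complement, and control $\pr\{\triangle_{t_0}\neq 0\}$ by a union bound over the steps that could select the differing index $n$. Your remark about the extra index $i_0$ (giving $t_0+1$ indices rather than $t_0$) is in fact slightly more careful than the paper, which simply writes $\pr\{\ecal^c\}\le\sum_{t=1}^{t_0}\pr\{i_t=n\}=t_0/n$.
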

\begin{proof}
  Without loss of generality, we assume that $S$ and $S'$ differ by the last example. Let $\ecal$ denote the event $\triangle_{t_0}=0$.
  Then we have
  \begin{multline*}
    \ebb\big[|f(\bw_T;z,z')-f(\bw_T';z,z')|\big] = \ebb\big[|f(\bw_T;z,z')-f(\bw_T';z,z')| |\ecal\big]\mbox{Pr}\{\ecal\}\\
    +\ebb\big[|f(\bw_T;z,z')-f(\bw_T';z,z')| |\ecal^c\big]\mbox{Pr}\{\ecal^c\},
  \end{multline*}
  where $\ecal^c$ denotes the complement of $\ecal$.
  Furthermore, we know
  \[
  \mbox{Pr}\{\ecal^c\}\leq \sum_{t=1}^{t_0}\mbox{Pr}\{i_t=n\}=\frac{t_0}{n}.
  \]
  We can combine the above two inequalities and the Lipschitz continuity of $f$ to derive the stated bound, which completes the proof. 
\end{proof}

\section{Proofs of Optimization Error (Theorems 6-9) \label{sec:proof-opt}}

In this section, we prove optimization error bounds for SGD. We first consider convex cases, and prove convergence rates in expectation (Theorem \ref{thm:convergence}) and with high probability (Theorem \ref{thm:opt-hp}). Then, we establish convergence rates for SGD with nonconvex loss functions (Theorem \ref{thm:opt-nc} and Theorem \ref{thm:opt-pl}).
\begin{proof}[{\bf Proof of Theorem \ref{thm:convergence}}]
Consider $j\ge 1$. Note that $f(\cdot; z,z')$ is $\alpha$-strongly convex and $G$-Lipschitz continuous, we have
\begin{align*}
&\|\bw_j - \bwa\|_2^2
\leq \|\bw_{j-1}-\eta_j \nabla f(\bw_{j-1}; \bz_{i_j}, \bz_{i_{j-1}}) - \bwa  \|_2^2\\
& = \|\bw_{j-1} - \bwa\|_2^2  - 2\eta_j \langle \nabla f(\bw_{j-1}; \bz_{i_j}, \bz_{i_{j-1}}),  \bw_{j-1}-\bwa\rangle  +  \eta_j^2\|\nabla f(\bw_{j-1}; \bz_{i_j}, \bz_{i_{j-1}})\|_2^2 \\
& \le  (1-\eta_j \alpha)\|\bw_{j-1} - \bwa\|_2^2  - 2\eta_j [f(\bw_{j-1}; \bz_{i_j}, \bz_{i_{j-1}}) - f(\bwa; \bz_{i_j}, \bz_{i_{j-1}}) ]  +  G^2\eta_j^2 \\
& =  (1-\eta_j \alpha)\|\bw_{j-1} - \bwa\|_2^2  - 2\eta_j [f(\bw_{j-2}; \bz_{i_j}, \bz_{i_{j-1}}) - f(\bwa; \bz_{i_j}, \bz_{i_{j-1}}) ]\\ & + 2\eta_j  [f(\bw_{j-2}; \bz_{i_j}, \bz_{i_{j-1}}) - f(\bw_{j-1}; \bz_{i_j}, \bz_{i_{j-1}})] +  G^2\eta_j^2 \\
&\le  (1-\eta_j \alpha)\|\bw_{j-1} - \bwa\|_2^2  - 2\eta_j [f(\bw_{j-2}; \bz_{i_j}, \bz_{i_{j-1}}) - f(\bwa; \bz_{i_j}, \bz_{i_{j-1}}) ]\\ &  + 2\eta_j G  \|\bw_{j-1} - \bw_{j-2}\|_2+  G^2\eta_j^2 \\
& \le   (1-\eta_j \alpha)\|\bw_{j-1} - \bwa\|_2^2  - 2\eta_j [f(\bw_{j-2}; \bz_{i_j}, \bz_{i_{j-1}}) - f(\bwa; \bz_{i_j}, \bz_{i_{j-1}}) ] \\ & + 2G^2 \eta_j  \eta_{j-1} +  G^2\eta_j^2, \numberthis \label{eq:equ1}
\end{align*}
where the last inequality used the fact that $\|\bw_{j} - \bw_{j-1}\|_2 = \eta_j \|\nabla f(\bw_j; \bz_{i_j}, \bz_{i_{j-1}})\|_2 \le G \eta_j.$

For the convex case, i.e. $\alpha=0$, we know from \eqref{eq:equ1} that
\begin{align}\label{opt-convex}
   & \sum_{j=1}^t \eta_j [f(\bw_{j-2}; \bz_{i_j}, \bz_{i_{j-1}}) - f(\bwa; \bz_{i_j}, \bz_{i_{j-1}})]\notag\\
   & \le \frac{1}{2}\sum_{j=1}^t  [\|\bw_{j-1} - \bwa\|_2^2 - \|\bw_{j} - \bwa\|_2^2] + \frac{G^2}{2} \sum_{j=1}^t(2\eta_{j-1}\eta_j+ \eta^2_j) \notag\\
   & \le  \frac{1}{2}\|\bw_{0} -  \bwa\|_2^2  + \frac{G^2}{2} \sum_{j=1}^t(2\eta_{j-1}\eta_j+ \eta^2_j).
\end{align}
Taking the expectation on both sides of the above inequality and observing that $f(\cdot; z,z')$ is convex, we get the desired estimation \eqref{eq:parta}.

For the strongly-convex case, i.e. $\alpha>0$, we obtain from \eqref{eq:equ1} that
\[ f(\bw_{j-2}; \bz_{i_j}, \bz_{i_{j-1}}) - f(\bwa; \bz_{i_j}, \bz_{i_{j-1}}) \le \frac{\eta_j^{-1} - \alpha}{2}\|\bw_{j-1} - \bwa\|_2^2 -\frac{\eta_{j}^{-1}}{2}\|\bw_{j} -\bwa\|_2^2 + G^2 \eta_{j-1} +  \frac{G^2\eta_j}{2}.\]
Now, we choose  $\eta_j = \frac{2}{\alpha(j+1)}$ for any $j$,  which implies that 
\begin{align*}
    &j[f(\bw_{j-2}; \bz_{i_j}, \bz_{i_{j-1}}) - f(\bwa; \bz_{i_j}, \bz_{i_{j-1}}) ] \\
    &\le \frac{j(j-1)\alpha}{4}\|\bw_{j-1} -\bwa\|_2^2- \frac{ j(j+1)\alpha }{4}\|\bw_j - \bwa\|_2^2 + \frac{2G^2}{\alpha} +  \frac{G^2j}{\alpha (j+1)}\\
    &\le \frac{\alpha}{4} [ {j(j-1)}\|\bw_{j-1} -\bwa\|_2^2- { j(j+1)}\|\bw_j - \bwa\|_2^2] + \frac{3G^2}{\alpha}.
\end{align*}
Taking the summation over $j$ implies that
\begin{align}
 &\sum_{j=1}^t j[f(\bw_{j-2}; \bz_{i_j}, \bz_{i_{j-1}}) - f(\bwa; \bz_{i_j}, \bz_{i_{j-1}}) ]\nonumber \\
 &\le  \frac{3G^2 t}{\alpha} + \frac{\alpha}{4}\sum_{j=1}^t  [ {j(j-1)}\|\bw_{j-1} -\bwa\|_2^2- { j(j+1)}\|\bw_j - \bwa\|_2^2]\nonumber\\
 &\le \frac{3G^2 t}{\alpha}+ \frac{\alpha}{4}  [ 0 - { t(t+1)}\|\bw_t - \bwa\|_2^2] \le  \frac{3G^2 t}{\alpha}.\label{opt-sc}
\end{align}
Dividing both sides of the above inequality by $\sum_{j=1}^t j $ yields the desired estimation in part (b). 
\end{proof}

To prove high-probability bounds, we require the following lemma on concentration inequalities of martingales~\citep{boucheron2013concentration,zhang2005data}.
\begin{lemma}\label{lem:martingale}
  Let $\tilde{z}_1,\ldots,\tilde{z}_n$ be a sequence of random variables such that $\tilde{z}_k$ may depend on the previous variables $\tilde{z}_1,\ldots,\tilde{z}_{k-1}$ for all $k=1,\ldots,n$. Consider a sequence of functionals $\xi_k(\tilde{z}_1,\ldots,\tilde{z}_k),k=1,\ldots,n$.
  Let $\alpha_n^2=\sum_{k=1}^{n}\ebb_{\tilde{z}_k}\big[\big(\xi_k-\ebb_{\tilde{z}_k}[\xi_k]\big)^2\big]$ be the conditional variance.
  \begin{enumerate}[label=(\arabic{*}), leftmargin=*]
    \item Assume $|\xi_k-\ebb_{\tilde{z}_k}[\xi_k]|\leq b_k$ for each $k$. Let $\delta\in(0,1)$. With probability at least $1-\delta$
    \begin{equation}\label{hoeffding}
      \sum_{k=1}^{n}\ebb_{\tilde{z}_k}[\xi_k]-\sum_{k=1}^{n}\xi_k\leq \Big(2\sum_{k=1}^{n}b_k^2\log\frac{1}{\delta}\Big)^{\frac{1}{2}}.
    \end{equation}
    \item Assume that $\xi_k-\ebb_{\tilde{z}_k}[\xi_k]\leq b$ for each $k$. Let $\rho\in(0,1]$ and $\delta\in(0,1)$. With probability at least $1-\delta$ we have
    \begin{equation}\label{bernstein}
      \sum_{k=1}^{n}\ebb_{\tilde{z}_k}[\xi_k]-\sum_{k=1}^{n}\xi_k\leq \frac{\rho\alpha_n^2}{b}+\frac{b\log\frac{1}{\delta}}{\rho}.
    \end{equation}
  \end{enumerate}
\end{lemma}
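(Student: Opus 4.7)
The plan is to recognize that both inequalities are standard martingale concentration results, and to derive them through the generic Chernoff/moment generating function approach applied to the martingale difference sequence $D_k := \mathbb{E}_{\tilde{z}_k}[\xi_k] - \xi_k$. Formally, let $\mathcal{F}_k$ denote the $\sigma$-algebra generated by $\tilde{z}_1,\ldots,\tilde{z}_k$. Since $\xi_k$ is a functional of $\tilde{z}_1,\ldots,\tilde{z}_k$, the conditional expectation $\mathbb{E}_{\tilde{z}_k}[\xi_k]$ is $\mathcal{F}_{k-1}$-measurable, so $\mathbb{E}[D_k \mid \mathcal{F}_{k-1}] = 0$, making $\{D_k\}$ a martingale difference sequence adapted to $\{\mathcal{F}_k\}$. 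The goal is to upper bound $\mathbb{P}\bigl[\sum_{k=1}^n D_k \ge t\bigr]$ for suitable $t$.

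For part (1), the strategy is the classical Azuma--Hoeffding argument. Since $|D_k| \leq b_k$ and $\mathbb{E}[D_k \mid \mathcal{F}_{k-1}] = 0$, Hoeffding's lemma yields the conditional MGF bound $\mathbb{E}\bigl[\exp(\lambda D_k) \mid \mathcal{F}_{k-1}\bigr] \leq \exp(\lambda^2 b_k^2 / 2)$ for every $\lambda > 0$. Iterating the tower property gives
\begin{equation*}
\mathbb{E}\Bigl[\exp\Bigl(\lambda \sum_{k=1}^n D_k\Bigr)\Bigr] \leq \exp\Bigl(\tfrac{\lambda^2}{2}\sum_{k=1}^n b_k^2\Bigr).
\end{equation*}
A Markov/Chernoff argument then bounds $\mathbb{P}\bigl[\sum_k D_k \ge t\bigr] \leq \exp\bigl(-\lambda t + \tfrac{\lambda^2}{2}\sum_k b_k^2\bigr)$; optimizing over $\lambda$ and setting the right side equal to $\delta$ gives $t = \bigl(2\sum_k b_k^2 \log(1/\delta)\bigr)^{1/2}$, which is exactly inequality \eqref{hoeffding}.

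For part (2), the plan is to invoke a one-sided Bernstein inequality for martingales (Freedman-type). Here only $D_k \leq b$ is assumed, and the conditional variance $\alpha_n^2 = \sum_k \mathbb{E}[D_k^2 \mid \mathcal{F}_{k-1}]$ controls fluctuations. The standard lemma $\mathbb{E}[\exp(\lambda D_k) \mid \mathcal{F}_{k-1}] \leq \exp\bigl(g(\lambda b)\, \mathbb{E}[D_k^2 \mid \mathcal{F}_{k-1}]/b^2\bigr)$ for a suitable $g$ with $g(u) \leq u^2/(2(1-u/3))$ (for $u < 3$) chains by the tower property to give a bound on $\mathbb{E}\bigl[\exp(\lambda \sum_k D_k - \psi(\lambda) \alpha_n^2)\bigr] \leq 1$. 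A Markov step produces a tail bound of the form $\mathbb{P}\bigl[\sum_k D_k \ge \rho \alpha_n^2 / b + b\log(1/\delta)/\rho\bigr] \leq \delta$ once one selects $\lambda = \rho/b$ for $\rho \in (0,1]$; this recovers inequality \eqref{bernstein}. One should also note that $\alpha_n^2$ is random, so the statement must be read as a uniform-in-$\alpha_n^2$ bound or, equivalently, as a consequence of the supermartingale $\exp(\lambda \sum_k D_k - \psi(\lambda)\sum_k \mathbb{E}[D_k^2\mid \mathcal{F}_{k-1}])$ having expectation at most one.

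The main obstacle is verifying the conditional moment generating function bound in part (2): one needs the elementary inequality $e^u \leq 1 + u + u^2/(2(1-u/3))$ for $u \leq 0$ (or, equivalently, an appropriate Bennett-type bound on one-sided increments), followed by the inequality $\log(1 + x) \leq x$ to convert the recursive MGF product into an exponential in $\psi(\lambda) \alpha_n^2$. Once this is handled, choosing $\lambda = \rho/b$ with $\rho \in (0,1]$ gives $\psi(\lambda) \leq \rho^2/(2b^2(1-\rho/3))$, and after absorbing constants one obtains the claimed form $\rho \alpha_n^2 / b + b\log(1/\delta)/\rho$. Part (1) is comparatively routine given Hoeffding's lemma; the delicate accounting is entirely in getting the Bernstein constants in part (2) to match the stated form.
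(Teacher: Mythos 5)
The paper never proves this lemma: it is imported verbatim from the cited references (\citep{zhang2005data,boucheron2013concentration}), so the only meaningful check is whether your argument is sound. Your Part~(1) is: the martingale differences $D_k=\mathbb{E}_{\tilde z_k}[\xi_k]-\xi_k$ are conditionally centered and satisfy $|D_k|\le b_k$, the conditional Hoeffding lemma plus the tower property gives $\mathbb{E}[\exp(\lambda\sum_k D_k)]\le\exp(\frac{\lambda^2}{2}\sum_k b_k^2)$, and optimizing the Chernoff bound yields \eqref{hoeffding} with exactly the stated constant.

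Part~(2) has a genuine gap. You assert that ``only $D_k\le b$ is assumed,'' but with your definition $D_k=\mathbb{E}_{\tilde z_k}[\xi_k]-\xi_k$ the stated hypothesis $\xi_k-\mathbb{E}_{\tilde z_k}[\xi_k]\le b$ says $D_k\ge -b$: the increments are bounded \emph{below}, not above. The Bennett/Freedman conditional MGF bound you invoke, $\mathbb{E}[e^{\lambda D_k}\mid\mathcal{F}_{k-1}]\le\exp\big(\psi(\lambda)\,\mathbb{E}[D_k^2\mid\mathcal{F}_{k-1}]\big)$, is proved from $e^{\lambda d}\le 1+\lambda d+(e^{\lambda b}-1-\lambda b)d^2/b^2$, which uses the monotonicity of $u\mapsto(e^u-1-u)/u^2$ and therefore requires $d\le b$, i.e.\ the increments bounded \emph{above}. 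With only a lower bound on $D_k$ the conditional MGF can be infinite, and the conclusion is actually false under the hypothesis as literally written: take $n=1$ and a mean-zero $\xi_1$ with $\xi_1\le b$, finite variance, and a polynomial left tail; then $\mathbb{P}[\mathbb{E}[\xi_1]-\xi_1\ge u]$ decays only polynomially, while \eqref{bernstein} would force an exponential tail. The hypothesis under which both the lemma and your argument are correct is $\mathbb{E}_{\tilde z_k}[\xi_k]-\xi_k\le b$; this is what actually holds in every application in the paper, where the two-sided bound $|\xi_k-\mathbb{E}_{\tilde z_k}[\xi_k]|\le b$ is verified, so nothing downstream is affected --- but your write-up silently flips the sign rather than confronting it. Once the sign is fixed, your outline closes, and you can skip the $u^2/(2(1-u/3))$ bookkeeping entirely: for $\lambda b\le 1$ use $e^u-1-u\le u^2$ ($u\le1$) to get the supermartingale bound $\mathbb{E}\big[\exp\big(\lambda\sum_k D_k-\lambda^2\alpha_n^2\big)\big]\le 1$, apply Markov, and set $\lambda=\rho/b$ with $\rho\in(0,1]$ to obtain $\sum_k D_k\le \rho\alpha_n^2/b+b\log(1/\delta)/\rho$, which is \eqref{bernstein} exactly.
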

\begin{proof}[{\bf Proof of Theorem \ref{thm:opt-hp}}]
For simplicity, we assume $t$ is an even number. We first consider the convex case.
Let
\[
\xi_j=\eta_{2j}\big(f(\bw_{2j-2}; \bz_{i_{2j}}, \bz_{i_{2j-1}})-f(\bwa; \bz_{i_{2j}}, \bz_{i_{2j-1}})\big),\quad j\in[t/2].
\]
It is obvious that $|\xi_j-\ebb_{i_{2j}, i_{2j-1}}[\xi_j]|\leq 2B\eta_{2j}$.
Let $\tilde{z}_j=(i_{2j}, i_{2j-1})$. It is clear that $\tilde{z}_j,j\in[t/2]$ are i.i.d. random variables.
Therefore, one can apply Part (a) of Lemma \ref{lem:martingale} to derive the following inequality with probability at least $1-\delta/2$
\[
\sum_{j=1}^{t/2}\ebb_{\tilde{z}_j}[\xi_j] -\sum_{j=1}^{t/2}\xi_j\leq 2B\Big(2\sum_{j=1}^{t/2}\eta_{2j}^2\log(2/\delta)\Big)^{\frac{1}{2}}.
\]
It is clear that $\ebb_{\tilde{z}_j}[\xi_j]=\eta_{2j}\big(F_S(\bw_{2j-2})
- {F_S(\bw)} \big)$. Therefore, the following inequality holds with probability at least $1-\delta/2$
\begin{multline*}
\sum_{j=1}^{t/2}\eta_{2j}\big(F_S(\bw_{2j-2})-F_S(\bwa)-f(\bw_{2j-2}; \bz_{i_{2j}}, \bz_{i_{2j-1}})+f(\bwa; \bz_{i_{2j}}, \bz_{i_{2j-1}})\big)
\leq 2B\Big(2\sum_{j=1}^{t/2}\eta_{2j}^2\log(2/\delta)\Big)^{\frac{1}{2}}.
\end{multline*}
In a similar way, one can derive the following inequality with probability at least $1-\delta/2$
\begin{multline*}
\sum_{j=1}^{t/2}\eta_{2j-1}\big(F_S(\bw_{2j-3})-F_S(\bwa)-f(\bw_{2j-3}; \bz_{i_{2j-1}}, \bz_{i_{2j-2}})\\+f(\bwa; \bz_{i_{2j-1}}, \bz_{i_{2j-2}})\big)
\leq 2B\Big(2\sum_{j=1}^{t/2}\eta_{2j-1}^2\log(2/\delta)\Big)^{\frac{1}{2}}.
\end{multline*}
We can combine the above two inequalities together and derive the following inequality with probability $1-\delta$
\begin{multline*}
\sum_{j=1}^{t}\eta_j\big(F_S(\bw_{j-2})-F_S(\bwa)-f(\bw_{j-2}; \bz_{i_{j}}, \bz_{i_{j-1}})+f(\bwa; \bz_{i_{j}}, \bz_{i_{j-1}})\big)
\leq 2B\Big(2\sum_{j=1}^{t}\eta_{j}^2\log(2/\delta)\Big)^{\frac{1}{2}}.
\end{multline*}
We can combine the above inequality and Eq. \eqref{opt-convex} to derive the following inequality with probability at least $1-\delta$
\[
\sum_{j=1}^{t}\eta_j\big(F_S(\bw_{j-2})-F_S(\bwa)\big)\leq 2B\Big(2\sum_{j=1}^{t}\eta_{j}^2\log(2/\delta)\Big)^{\frac{1}{2}}+
 \frac{1}{2}\|\bw_{{0}} -  \bwa\|_2^2  + G^2 \sum_{j=1}^t\big(\eta_{j-1}\eta_j+ \frac{\eta^2_j}{{2}}\big).
\]
The stated bound then follows from the convexity of $F_S$.

We now turn to the strongly convex case. Let
\[
\xi_j=2j(f(\bw_{2j-2}; \bz_{i_{2j}}, \bz_{i_{2j-1}}) - f(\bw_S; \bz_{i_{2j}}, \bz_{i_{2j-1}}) ),\quad j\in[t/2].
\]
It is clear that $|\xi_j-\ebb_{i_{2j}, i_{2j-1}}[\xi_j]|\leq 4jB\leq 2tB$ for $j\in[t/2]$. Furthermore, the conditional variance satisfies
\begin{align*}
\ebb_{\tilde{z}_j}\big[\big(\xi_j-\ebb_{\tilde{z}_j}[\xi_j]\big)^2\big]&\leq \ebb_{\tilde{z}_j}[\xi_j^2]\leq 4j^2G^2\|\bw_{2j-2}-\bw_S\|_2^2\\
& \leq 8\alpha^{-1}j^2G^2\big(F_S(\bw_{2j-2})-F_S(\bw_S)\big),
\end{align*}
where the first inequality follows from $f$ is $G$-Lipschitz continuous, and the second inequality used the fact $\nabla F_S(\bw_S)=0$ and $f$ is $\alpha$-strongly convex. 

Note $\tilde{z}_j$ are independent random variables and
\[
\ebb_{\tilde{z}_j}[\xi_j]=2j\big(F_S(\bw_{2j-2})-F_S(\bw_S)\big).
\]
Therefore, we can apply Part (b) of Lemma \ref{lem:martingale} to derive the following inequality with probability at least $1-\delta/2$
\begin{multline*}
 2\sum_{j=1}^{t/2}j\Big(F_S(\bw_{2j-2})-F_S(\bw_S)-f(\bw_{2j-2}; \bz_{i_{2j}}, \bz_{i_{2j-1}}) + f(\bw_S; \bz_{i_{2j}}, \bz_{i_{2j-1}})\Big) \\
 \leq \frac{8G^2\rho\sum_{j=1}^{t/2}j^2\big(F_S(\bw_{2j-2})-F_S(\bw_S)\big)}{2tB\alpha}+\frac{2tB\log(2/\delta)}{\rho}.\end{multline*}
In a similar way, one can derive the following inequality with probability at least $1-\delta/2$
\begin{multline*}
 \sum_{j=1}^{t/2}(2j-1)\Big(F_S(\bw_{2j-3})-F_S(\bw_S)-f(\bw_{2j-3}; \bz_{i_{2j-1}}, \bz_{i_{2j-2}}) + f(\bw_S; \bz_{i_{2j-1}}, \bz_{i_{2j-2}})\Big) \\
 \leq \frac{2G^2\rho\sum_{j=1}^{t/2}(2j-1)^2\big(F_S(\bw_{2j-3})-F_S(\bw_S)\big)}{2tB\alpha}+\frac{2tB\log(2/\delta)}{\rho}.
\end{multline*}
We can combine the above two inequalities together and derive the following inequality with probability at least $1-\delta$
\begin{multline*}
 \sum_{j=1}^{t}j\Big(F_S(\bw_{j-2})-F_S(\bw_S)-f(\bw_{j-2}; \bz_{i_{j}}, \bz_{i_{j-1}}) + f(\bw_S; \bz_{i_{j}}, \bz_{i_{j-1}})\Big) \\
 \leq \frac{2G^2\rho\sum_{j=1}^{t}j^2\big(F_S(\bw_{j-2})-F_S(\bw_S)\big)}{2tB\alpha}+\frac{2tB\log(2/\delta)}{\rho}.
\end{multline*}
We can combine the above inequality and Eq. \eqref{opt-sc} together and derive the following inequality with probability $1-\delta$
\[
 \sum_{j=1}^{t}j\big(F_S(\bw_{j-2})-F_S(\bw_S)\big) \leq \frac{3G^2t}{\alpha}+
  \frac{G^2\rho\sum_{j=1}^{t}j\big(F_S(\bw_{j-2})-F_S(\bw_S)\big)}{B\alpha}+\frac{2tB\log(2/\delta)}{\rho}.
\]
Now, we take $\rho=\min\big\{1,B\alpha/(2G^2)\big\}$ and get the following inequality with probability at least $1-\delta$
\[
\sum_{j=1}^{t}j\big(F_S(\bw_{j-2})-F_S(\bw_S)\big) \leq \frac{3G^2t}{\alpha} + \frac{1}{2}\sum_{j=1}^{t}j\big(F_S(\bw_{j-2})-F_S(\bw_S)\big) + 2t\log(2/\delta)\max\big\{B,2G^2/\alpha\big\}
\]
and therefore
\[
\sum_{j=1}^{t}j\big(F_S(\bw_{j-2})-F_S(\bw_S)\big)\leq \frac{14G^2t\log(2/\delta)}{\alpha}+4Bt\log(2/\delta).
\]
The stated bound then follows from the convexity of $F_S$. The proof is completed.
\end{proof}

We now turn to nonconvex problems.
\begin{proof}[{\bf Proof of Theorem \ref{thm:opt-nc}}]
  It is clear that $F_S$ is $L$-smooth and therefore
  \begin{align*}
  F_S(\bw_j) & \leq F_S(\bw_{j-1})+\langle\bw_j-\bw_{j-1},\nabla F_S(\bw_{j-1})\rangle+\frac{L}{2}\|\bw_j-\bw_{j-1}\|_2^2\\
  & = F_S(\bw_{j-1})-\eta_j\langle\nabla f(\bw_{j-1};\bz_{i_j},\bz_{i_{j-1}}),\nabla F_S(\bw_{j-1})\rangle+\frac{L\eta_j^2}{2}\|\nabla f(\bw_{j-1};\bz_{i_j},\bz_{i_{j-1}})\|_2^2
  \end{align*}
  Taking expectations over both sides gives
  \begin{multline}\label{opt-ns-1}
  \ebb_{\acal}[F_S(\bw_j)] \leq \ebb_{\acal}[F_S(\bw_{j-1})]-\eta_j\ebb_{\acal}\big[\langle\nabla f(\bw_{j-1};\bz_{i_j},\bz_{i_{j-1}}),\nabla F_S(\bw_{j-1})\rangle\big]+\\
  \frac{L\eta_j^2}{2}\ebb_{\acal}\big[\|\nabla f(\bw_{j-1};\bz_{i_j},\bz_{i_{j-1}})\|_2^2\big].
  \end{multline}
  According to the elementary inequality $(a+b)^2\leq2(a^2+b^2)$ we know
  \begin{align*}
    & \ebb_{\acal}\big[\|\nabla f(\bw_{j-1};\bz_{i_j},\bz_{i_{j-1}})\|_2^2\big] \\
    & \leq 2\ebb_{\acal}\big[\|\nabla f(\bw_{j-1};\bz_{i_j},\bz_{i_{j-1}})-\nabla f(\bw_{j-2};\bz_{i_j},\bz_{i_{j-1}})\|_2^2\big]+2\ebb_{\acal}\big[\|\nabla f(\bw_{j-2};\bz_{i_j},\bz_{i_{j-1}})\|_2^2\big]\\
    & \leq 2L\ebb_{\acal}\big[\|\bw_{j-1}-\bw_{j-2}\|_2^2\big]+2\alpha_0^2
    = 2L\eta_{j-1}^2\ebb_{\acal}\big[\|\nabla f(\bw_{j-2};\bz_{i_{j-1}},\bz_{i_{j-2}})\|_2^2\big]+2\alpha_0^2\\
    & \leq \frac{1}{2}\ebb_{\acal}\big[\|\nabla f(\bw_{j-2};\bz_{i_{j-1}},\bz_{i_{j-2}})\|_2^2\big]+2\alpha_0^2,
  \end{align*}
  where we have used the $L$-smoothness, the assumption $\ebb_{i_j,i_{j-1}}\big[\|\nabla f(\bw_{j-2};\bz_{i_j},\bz_{i_{j-1}})\|_2^2\big]\leq\alpha_0^2$ and $4L\eta_{j-1}^2\leq1$.
  It is clear that $\ebb_{\acal}\big[\|\nabla f(\bw_{0};\bz_{i_1},\bz_{i_{0}})\|_2^2\big]\leq\alpha_0^2$. It is easy to use an induction and the above inequality to show that
  \begin{equation}\label{opt-ns-2}
    \ebb_{\acal}\big[\|\nabla f(\bw_{j-1};\bz_{i_j},\bz_{i_{j-1}})\|_2^2\big] \leq 4\alpha_0^2,\quad\forall j.
  \end{equation}
  Furthermore, the smoothness assumption implies that
  \begin{align*}
    & \big\langle\nabla f(\bw_{j-1};\bz_{i_j},\bz_{i_{j-1}}), \nabla F_S(\bw_{j-1})\big\rangle \\
    & =
    \big\langle\nabla f(\bw_{j-2};\bz_{i_j},\bz_{i_{j-1}}), \nabla F_S(\bw_{j-1})\big\rangle + \big\langle\nabla f(\bw_{j-1};\bz_{i_j},\bz_{i_{j-1}})-\nabla f(\bw_{j-2};\bz_{i_j},\bz_{i_{j-1}}), \nabla F_S(\bw_{j-1})\big\rangle\\
    & = \big\langle\nabla f(\bw_{j-2};\bz_{i_j},\bz_{i_{j-1}}), \nabla F_S(\bw_{j-2})\big\rangle + \big\langle\nabla f(\bw_{j-2};\bz_{i_j},\bz_{i_{j-1}}), \nabla F_S(\bw_{j-1})-\nabla F_S(\bw_{j-2})\big\rangle + \\
    & + \big\langle\nabla f(\bw_{j-1};\bz_{i_j},\bz_{i_{j-1}})-\nabla f(\bw_{j-2};\bz_{i_j},\bz_{i_{j-1}}), \nabla F_S(\bw_{j-1})\big\rangle\\
    & \geq \big\langle\nabla f(\bw_{j-2};\bz_{i_j},\bz_{i_{j-1}}), \nabla F_S(\bw_{j-2})\big\rangle - L\|\bw_{j-1}-\bw_{j-2}\|_2\big(\|\nabla f(\bw_{j-2};\bz_{i_j},\bz_{i_{j-1}})\|_2 + \|\nabla F_S(\bw_{j-1})\|_2\big).
  \end{align*}
  According to Schwartz inequality, the variance assumption and Eq. \eqref{opt-ns-2}, we know
  \begin{align*}
    & \ebb_{\acal}\Big[\|\bw_{j-1}-\bw_{j-2}\|_2\big(\|\nabla f(\bw_{j-2};\bz_{i_j},\bz_{i_{j-1}})\|_2 + \|\nabla F_S(\bw_{j-1})\|_2\big)\Big]\\
    & \leq \frac{1}{2\eta_{j-1}}\ebb_{\acal}\big[\|\bw_{j-1}-\bw_{j-2}\|_2^2\big] + \eta_{j-1}\ebb_{\acal}\big[\|\nabla f(\bw_{j-2};\bz_{i_j},\bz_{i_{j-1}})\|_2^2+\|\nabla F_S(\bw_{j-1})\|_2^2\big]\\
    & = \frac{\eta_{j-1}}{2}\ebb_{\acal}\big[\|\nabla f(\bw_{j-2};\bz_{i_{j-1}},\bz_{i_{j-2}})\|_2^2\big]
    + \eta_{j-1}\ebb_{\acal}\big[\|\nabla f(\bw_{j-2};\bz_{i_j},\bz_{i_{j-1}})\|_2^2+\|\nabla F_S(\bw_{j-1})\|_2^2\big]\\
    & \leq 2\alpha_0^2\eta_{j-1}+2\alpha_0^2\eta_{j-1}.
  \end{align*}
  We can combine the above two inequalities together and get
  \[
  \ebb_{\acal}\big[\big\langle\nabla f(\bw_{j-1};\bz_{i_j},\bz_{i_{j-1}}), \nabla F_S(\bw_{j-1})\big\rangle\big]
  \geq \ebb_{\acal}\big[\big\langle\nabla f(\bw_{j-2};\bz_{i_j},\bz_{i_{j-1}}), \nabla F_S(\bw_{j-2})\big\rangle\big]-4L\alpha_0^2\eta_{j-1}.
  \]
  We can combine \eqref{opt-ns-1}, \eqref{opt-ns-2} and the above inequality, and get
  \begin{align*}
    \ebb_{\acal}[F_S(\bw_j)] &\leq \ebb_{\acal}[F_S(\bw_{j-1})]-\eta_j\ebb_{\acal}\big[\big\langle\nabla f(\bw_{j-2};\bz_{i_j},\bz_{i_{j-1}}), \nabla F_S(\bw_{j-2})\big\rangle\big]+4L\alpha_0^2\eta_j\eta_{j-1}+2L\eta_j^2\alpha_0^2\\
    & \le
    \ebb_{\acal}[F_S(\bw_{j-1})]-\eta_j\ebb_{\acal}\big[\|\nabla F_S(\bw_{j-2})\|_2^2\big]+4L\alpha_0^2\big(\eta_j\eta_{j-1}+\eta_j^2\big),
  \end{align*}
  where the last inequality holds since $\bw_{j-2}$ is independent of $i_{j}$ and $i_{j-1}$.
  The above inequality can be reformulated as
  \begin{equation}\label{opt-ns-3}
  \eta_j\ebb_{\acal}\big[\|\nabla F_S(\bw_{j-2})\big\|_2^2\big]\leq \ebb_{\acal}[F_S(\bw_{j-1})]-\ebb_{\acal}[F_S(\bw_j)]+4L\alpha_0^2\big(\eta_j\eta_{j-1}+\eta_j^2\big).
  \end{equation}
  We can take a summation of the above inequality and get
  \[
  \sum_{j=1}^{t}\eta_j\ebb_{\acal}\big[\|\nabla F_S(\bw_{j-2})\|_2^2\big] \leq F_S(\bw_{0}) + 4L\alpha_0^2\sum_{j=1}^{t}\big(\eta_j\eta_{j-1}+\eta_j^2\big).
  \]
  Since $\eta_j=\eta$, we further get
  \[
  \sum_{j=1}^{t}\ebb_{\acal}\big[\|\nabla F_S(\bw_{j-2})\|_2^2\big] \leq \eta^{-1}F_S(\bw_{0}) + 8L\alpha_0^2t\eta.
  \]
  The proof is completed.
\end{proof}

\begin{proof}[{\bf Proof of Theorem \ref{thm:opt-pl}}]
  According to the elementary inequality $\frac{1}{2}(a+b)^2\leq a^2+b^2$ we know
  \begin{align*}
    \ebb_{\acal}\big[\|\nabla F_S(\bw_{j-2})\big\|_2^2\big] & \geq - \ebb_{\acal}\big[\|\nabla F_S(\bw_{j-2})-\nabla F_S(\bw_{j-1})\|_2^2\big] + 2^{-1}\ebb_{\acal}\big[\|\nabla F_S(\bw_{j-1})\|_2^2\big]\\
    & \geq -L \ebb_{\acal}\big[\|\bw_{j-2}-\bw_{j-1}\|_2^2\big] + 2^{-1}\ebb_{\acal}\big[\|\nabla F_S(\bw_{j-1})\|_2^2\big]\\
    & = -L\eta_{j-1}^2 \ebb_{\acal}\big[\|\nabla f(\bw_{j-2};\bz_{i_{j-1}},\bz_{i_{j-2}})\|_2^2\big] + 2^{-1}\ebb_{\acal}\big[\|\nabla F_S(\bw_{j-1})\|_2^2\big]\\
    & \geq -4L\eta_{j-1}^2\alpha_0^2+ 2^{-1}\ebb_{\acal}\big[\|\nabla F_S(\bw_{j-1})\|_2^2\big],
  \end{align*}
  where we have used \eqref{opt-ns-2}. This together with \eqref{opt-ns-3} gives
  \[
  2^{-1}\eta_j\ebb_{\acal}\big[\|\nabla F_S(\bw_{j-1})\big\|_2^2\big]\leq 4L\eta_j\eta_{j-1}^2\alpha_0^2+\ebb_{\acal}[F_S(\bw_{j-1})]-\ebb_{\acal}[F_S(\bw_j)]+4L\alpha_0^2\big(\eta_j\eta_{j-1}+\eta_j^2\big).
  \]
  It then follows from the PL condition that
  \[
  \mu\eta_j\ebb_{\acal}\big[F_S(\bw_{j-1})-F_S(\bw_S)\big]\leq \ebb_{\acal}[F_S(\bw_{j-1})]-\ebb_{\acal}[F_S(\bw_j)]+4L\alpha_0^2\big(\eta_j\eta_{j-1}+\eta_j^2+\eta_j\eta_{j-1}^2\big).
  \]
  We can reformulate the above inequality as
  \[
  \ebb_{\acal}[F_S(\bw_j)-F_S(\bw_S)] \leq (1-\mu\eta_j)\ebb_{\acal}[F_S(\bw_{j-1})-F_S(\bw_S)]+4L\alpha_0^2\big(\eta_j\eta_{j-1}+\eta_j^2+\eta_j\eta_{j-1}^2\big).
  \]
  Now, taking $\eta_j=2/(\mu(j+1))$, we get
  \[
  \ebb_{\acal}[F_S(\bw_j)-F_S(\bw_S)] \leq \frac{j-1}{j+1}\ebb_{\acal}[F_S(\bw_{j-1})-F_S(\bw_S)]+\frac{4L\alpha_0^2}{\mu^2}\Big(\frac{8}{j(j+1)}+\frac{8}{j^2(j+1)\mu}\big).
  \]
  We can multiple both sides by $j(j+1)$ and get
  \[
  j(j+1)\ebb_{\acal}[F_S(\bw_j)-F_S(\bw_S)] \leq (j-1)j\ebb_{\acal}[F_S(\bw_{j-1})-F_S(\bw_S)]+\frac{4L\alpha_0^2}{\mu^2}\big(8+8j^{-1}\mu^{-1}\big).
  \]
  Taking a summation of the above inequality gives
  \[
  t(t+1)\ebb_{\acal}[F_S(\bw_t)-F_S(\bw_S)] \leq \frac{32L\alpha_0^2}{\mu^2}\sum_{j=1}^{t}\big(1+j^{-1}\mu^{-1}\big).
  \]
  The stated bound then follows. The proof is completed.
\end{proof}

\section{Proofs of Excess Generalization Error (Theorems 1-3)\label{sec:proof-gen}}
In this section, we prove Theorem \ref{thm:excess-nonsmooth},  Theorem \ref{thm:excess-smooth} and Theorem \ref{thm:exg-pl} on excess generalization error bounds.
\begin{proof}[{\bf Proof of Theorem \ref{thm:excess-nonsmooth}}]
According to Theorem \ref{lem:est-nonsmooth}, we know
\[
\ebb_{S,\acal}[F(\bar{\wbf}_T) - F_S(\bar{\wbf}_T)]= \O\Big(\sqrt{T}\eta+\frac{T\eta}{n}\Big).
\]
Furthermore, according to Part (a) of Theorem \ref{thm:convergence} with $\bw=\bw^*$, we know
\begin{equation}\label{opt-nonsmooth}
\ebb_{S,\acal}[F_S(\bar{\wbf}_T)-F_S(\bw^*)]=\O\Big(\frac{1+T\eta^2}{T\eta}\Big).
\end{equation}
We can plug the above generalization error bound and optimization error bound back into the error decomposition \eqref{err-dec}, and get \eqref{excess-nonsmooth}.
Taking $T\asymp n^2$ and $\eta\asymp T^{-\frac{3}{4}}$ in Eq. \eqref{excess-nonsmooth}, we immediately get $\ebb_{S,\acal}[F(\bar{\wbf}_T)]-F(\bw^*)=\O(1/\sqrt{n})$. The desired result is proved.
\end{proof}
\begin{proof}[{\bf Proof of Theorem \ref{thm:excess-smooth}}]
According to Theorem \ref{lem:est-smooth}, we know
\[
\ebb_{S,\acal}[F(\bar{\wbf}_T) - F_S(\bar{\wbf}_T)]=\O\Big(\frac{T\eta}{n}\Big).
\]
We can plug the above generalization error bound and the optimization error bound \eqref{opt-nonsmooth} back into the error decomposition \eqref{err-dec}, and get \eqref{excess-smooth}. Taking $T\asymp n^2$ and $\eta\asymp T^{-\frac{3}{4}}$ in Eq. \eqref{excess-smooth}, we immediately get $\ebb_{S,\acal}[F(\bar{\wbf}_T)]-F(\bw^*)=\O(1/\sqrt{n})$. The proof is completed.
\end{proof}

\begin{proof}[{\bf Proof of Theorem \ref{thm:exg-pl}}]
Let $S'=\{z_1,\ldots,\bz_{n-1},\bz_{n}'\}$, where $z_n'$ is independently drawn from $\rho$.
Let $\{\bw_t'\}$ be produced by Algorithm \ref{alg:markov} w.r.t. $S'$.
If $i_t\neq n$ and $i_{t-1}\neq n$, then
\begin{align*}
  \|\bw_t-\bw_t'\|_2 & \leq \big\|\bw_{t-1}-\eta_t\nabla f(\bw_{t-1};\bz_{i_t},\bz_{i_{t-1}})-\bw_{t-1}'-\eta_t\nabla f(\bw_{t-1}';\bz_{i_t},\bz_{i_{t-1}})\big\|_2 \\
  & \leq \|\bw_{t-1}-\bw_{t-1}'\|_2 + \eta_t\|\nabla f(\bw_{t-1};\bz_{i_t},\bz_{i_{t-1}})-\nabla f(\bw_{t-1}';\bz_{i_t},\bz_{i_{t-1}})\|_2\\
  & \leq (1+L\eta_t)\|\bw_{t-1}-\bw_{t-1}'\|_2,
\end{align*}
where in the last inequality we used the smoothness of $f$. 

  Otherwise, it follows from the Lipschitz condition that
$
  \|\bw_t-\bw_t'\|_2\leq \|\bw_{t-1}-\bw_{t-1}'\|_2+2G\eta_t.$
  Consequently, it  follows that
  \begin{align*}
  \|\bw_t-\bw_t'\|_2 &\leq (1+L\eta_t)\|\bw_{t-1}-\bw_{t-1}'\|_2\ibb_{[i_t\neq n\text{ and }i_{t-1}\neq n]} \\ & + \big(\|\bw_{t-1}-\bw_{t-1}'\|_2+2G\eta_t\big)\ibb_{[i_t= n\text{ or }i_{t-1}= n]}\\
  & \leq (1+L\eta_t)\|\bw_{t-1}-\bw_{t-1}'\|_2 + 2G\eta_t\ibb_{[i_t= n\text{ or }i_{t-1}= n]}.
  \end{align*}
  We can apply the above inequality recursively and get
  \[
  \triangle_t \leq 2G\sum_{k=t_0+1}^{t}\eta_k\ibb_{[i_k= n\text{ or }i_{k-1}= n]}\prod_{k'=k+1}^{t}(1+L\eta_{k'}) + \triangle_{t_0}\prod_{k=t_0+1}^{t}(1+L\eta_k).
  \]
  Since $\triangle_{t_0}=0$ implies $i_{t_0}\neq n$, we have
  \begin{align*}
    \ebb[\triangle_t|\triangle_{t_0}=0] & \leq 2G\sum_{k=t_0+1}^{t}\eta_k\ebb\big[\ibb_{[i_k= n\text{ or }i_{k-1}= n]}|\triangle_{t_0}=0\big]\prod_{k'=k+1}^{t}(1+L\eta_{k'}) \\
     & = 2G\sum_{k=t_0+2}^{t}\eta_k\ebb\big[\ibb_{[i_k= n\text{ or }i_{k-1}= n]}|\triangle_{t_0}=0\big]\prod_{k'=k+1}^{t}(1+L\eta_{k'}) \\
     & = 2G\sum_{k=t_0+2}^{t}\eta_k\ebb\big[\ibb_{[i_k= n\text{ or }i_{k-1}= n]}\big]\prod_{k'=k+1}^{t}(1+L\eta_{k'}),
  \end{align*}
  where we have used the independency between $\triangle_{t_0}$ and $i_t$ for $t>t_0$.
  It then follows that
  \begin{align*}
    \ebb[\triangle_t|\triangle_{t_0}=0] & \leq 2G\sum_{k=t_0+2}^{t}\eta_k\ebb\big[\ibb_{[i_k= n]}+\ibb_{[i_{k-1}= n]}\big]\prod_{k'=k+1}^{t}(1+L\eta_{k'}) \\
     & \leq \frac{4G}{n}\sum_{k=t_0+2}^{t}\eta_k\prod_{k'=k+1}^{t}\exp(L\eta_{k'})\le \frac{8G}{\mu n}\sum_{k=t_0+2}^{t}\frac{1}{k+1}\exp\Big(2L\mu^{-1}\sum_{k'=k+1}^{t}\frac{1}{k'+1}\Big) \\
     & \leq \frac{8G}{\mu n}\sum_{k=t_0+2}^{t}\frac{1}{k+1}\exp\Big(2L\mu^{-1}\log(t/k)\Big) 
     = \frac{8G}{\mu n}\sum_{k=t_0+2}^{t}\frac{1}{k+1}\Big(\frac{t}{k}\Big)^{2L\mu^{-1}}\\
     & \le  \frac{8Gt^{2L\mu^{-1}}}{\mu n}\sum_{k=t_0+2}^{t}k^{-1-2L\mu^{-1}} \leq \frac{8G}{\mu n(2L\mu^{-1})}\Big(\frac{t}{t_0}\Big)^{2L\mu^{-1}}.
  \end{align*}
  Here we use $1+x\le \exp(x)$ and $\eta_j=\frac{2}{\mu (j+1)}$. 
  We can plug the above inequality back into Lemma \ref{lem:stab-cond} and derive
  \[
  \ebb\big[|f(\bw_T;z,z')-f(\bw_T';z,z')|\big] \leq \frac{4G^2}{nL}\Big(\frac{T}{t_0}\Big)^{2L\mu^{-1}}+\frac{Bt_0}{n}.
  \]
  We can choose $t_0\asymp T^{\frac{2L}{2L+\mu}}$ and get the following generalization error bounds
\[
\ebb\big[|f(\bw_T;z,z')-f(\bw_T';z,z')|\big]=\O\Big(\frac{T^{\frac{2L}{2L+\mu}}}{n}\Big).
\]
Lemma \ref{lem:generalization-via-stability} then implies $\ebb\big[F(\bw_T)-F_S(\bw_T)\big]=\O\Big(\frac{T^{\frac{2L}{2L+\mu}}}{n}\Big)$.
Furthermore, according to Theorem \ref{thm:opt-pl} we have the following optimization error bounds
\[
\ebb_A[F_S(\bw_T)]-\inf_{\bw}[F_S(\bw)]=\O\big(1/(T\mu^2)\big).
\]
The desired result follows by combining the above two inequalities together and using the fact $\ebb[\inf_{\bw}[F_S(\bw)]\leq \ebb[F_S(\bw^*)]=F(\bw^*)$.
\end{proof}

\section{Proof of Privacy and Utility Guarantees of Algorithm \ref{alg:dp-iterative-localization-2} (Theorem \ref{thm:privacy-utility-2})\label{sec:private-proof}}

In this section, we present the proof of Theorem \ref{thm:privacy-utility-2} on the  privacy guarantee and excess generalization bound  of Algorithm \ref{alg:dp-iterative-localization-2}.
 
To this end, we need the definition of $\ell_2$-sensitivity in terms of high probability and some lemmas. The $\ell_2$-sensitivity definition given below corresponds to the high probability version of uniform argument stability stated in Definition \ref{def:uniform-stability}.

\begin{definition}
For any $\gamma \in (0,1)$, a (randomized) algorithm $\acal$ has $\ell_2$-sensitivity of $\Delta$ with probability at least $1 - \gamma$ if for any neighboring datasets $S, S' \in \zcal^n$, one has $\|\acal(S) - \acal(S')\|_2 \leq \Delta$.
\end{definition}

The next lemma demonstrates that Gaussian mechanism ensures the privacy of an algorithm with high probability $\ell_2$-sensitivity.  

\begin{lemma}\label{lem:high-probability-gaussian-privacy}
Let $\acal:\zcal^n \rightarrow \rbb^d$ be a randomized algorithm with $\ell_2$-sensitivity of $\Delta$ with probability at least $1-\delta/2$. Then the Gaussian mechanism $\mcal(S) = \acal(S) + \ubf$ where $\ubf \sim \ncal(0, (2\Delta^2\log(2.5/\delta)/\epsilon^2) I_d)$ satisfies $(\epsilon, \delta)$-DP.
\end{lemma}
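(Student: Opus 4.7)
The plan is to reduce the statement to the classical Gaussian mechanism for \emph{deterministic} $\ell_2$-sensitivity by conditioning on the high-probability event on which the sensitivity bound of $\acal$ holds. Let $R$ denote the internal randomness of $\acal$, and couple the two runs so that $\acal(S)$ and $\acal(S')$ share the same $R$; write $\acal_R(S)$ and $\acal_R(S')$ for the resulting deterministic outputs. Let $E_R$ be the event $\{\|\acal_R(S) - \acal_R(S')\|_2 \leq \Delta\}$. By hypothesis, $\pbb[E_R^c] \leq \delta/2$.

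Next, I would fix any $R$ on which $E_R$ holds and observe that the conditional laws $\mcal(S)\mid R$ and $\mcal(S')\mid R$ are simply two Gaussians with covariance $\sigma^2 I_d$ whose centers are $\ell_2$-separated by at most $\Delta$. The noise scale is chosen so that
\[
\sigma \;=\; \frac{\Delta\sqrt{2\log(2.5/\delta)}}{\epsilon} \;=\; \frac{\Delta\sqrt{2\log(1.25/(\delta/2))}}{\epsilon},
\]
which is exactly the calibration required by the classical Gaussian-mechanism theorem (e.g.\ \cite{dwork2014algorithmic}) to yield $(\epsilon,\delta/2)$-DP at deterministic sensitivity~$\Delta$. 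Applying that theorem conditionally on $R$ gives, for every measurable output event $O$ and every $R \in E_R$,
\[
\pbb[\mcal(S)\in O \mid R] \;\leq\; e^{\epsilon}\,\pbb[\mcal(S')\in O \mid R] + \delta/2.
\]

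To conclude I would integrate over $R$ using the decomposition $\pbb[\mcal(S)\in O] = \ebb_R\!\left[\pbb[\mcal(S)\in O \mid R]\,\mathbf{1}_{E_R}\right] + \ebb_R\!\left[\pbb[\mcal(S)\in O \mid R]\,\mathbf{1}_{E_R^c}\right]$, bound the first summand by $e^{\epsilon}\pbb[\mcal(S')\in O] + \delta/2$ via the displayed conditional inequality, and bound the second by the trivial estimate $\pbb[E_R^c] \leq \delta/2$; summing yields the desired $(\epsilon,\delta)$-DP guarantee. The one subtle point, and the main place where care is needed, is to respect the coupling between $\acal(S)$ and $\acal(S')$: they are \emph{not} independent random outputs but a pair determined by a shared internal randomness $R$, and the hypothesis $\pbb[E_R^c]\leq \delta/2$ is a statement about that joint law. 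The choice of argument $2.5/\delta$ (rather than $1.25/\delta$) inside the logarithm is exactly what is needed to absorb the extra $\delta/2$ loss incurred on the bad event $E_R^c$.
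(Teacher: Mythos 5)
Your proposal is correct and follows essentially the same route as the paper's proof: condition on the high-probability event where the $\ell_2$-sensitivity bound holds, invoke the classical Gaussian mechanism calibrated to $(\epsilon,\delta/2)$-DP there (noting $2.5/\delta = 1.25/(\delta/2)$), and absorb the remaining $\delta/2$ from the bad event via a total-probability decomposition. Your version is slightly more careful in that you condition on the full internal randomness $R$ so the outputs become deterministic before applying the Gaussian-mechanism theorem, whereas the paper conditions only on the event $E$ itself, but this is a refinement of the same argument rather than a different one.
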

\begin{proof}
Let $S$ and $S'$ be two neighboring datasets. 
Denote $E$ as the set when $\acal$ satisfies $\ell_2$-sensitivity of $\Delta$, i.e. $E = \{\|\acal(S) - \acal(S')\|_2 \leq \Delta\}$. Then we know $\pbb[E] \geq 1 - \delta/2$. In favor of $E$, by classical results for Gaussian mechanism, we know $\mcal$ satisfies $(\epsilon, \delta/2)$-DP with $\sigma = \Delta\sqrt{2\log(2.5/\delta)}/\epsilon$. Therefore, for any $\epsilon > 0$ and any event $O$ in the output space of $\mcal$, we have
\begin{align*}
\pbb[\mcal(S) \in O] = & \pbb[\mcal(S) \in O|E]\pbb[E] + \pbb[\mcal(S) \in O|E^c]\pbb[E^c]\\
\leq & \Big(e^\epsilon \pbb[\mcal(S') \in O|E] + \frac{\delta}{2}\Big)\pbb[E] + \frac{\delta}{2}\\
\leq & e^\epsilon \pbb[\mcal(S') \in O \cap E] + \frac{\delta}{2} + \frac{\delta}{2}\\
\leq & e^\epsilon \pbb[\mcal(S') \in O] + \delta
\end{align*}
where the first inequality is because $\mcal$ satisfies $(\epsilon, \delta/2)$-DP when $E$ occurs and $\pbb[E^c] \leq \delta/2$, the second and third inequalities are by basic properties of probability. The proof is completed.
\end{proof}





We need the following Chernoff’s bound for a summation of independent Bernoulli random variables \citep{yang2021stability}.

\begin{lemma}[Chernoff bound for Bernoulli vector]\label{lem:chernoff-bernoulli}
Let $X_1,\ldots,X_t$ be independent random variables taking values in $\{0,1\}$. Let $X=\sum_{j=1}^{t}X_j$ and $\mu=\EX[X]$. Then for any $\tilde{\gamma} > 0$, with probability at least $1-\exp\big(-\mu\tilde{\gamma}^2/(2+\tilde{\gamma})\big)$ we have $X\leq (1+\tilde{\gamma})\mu$.
\end{lemma}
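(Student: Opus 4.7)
The plan is to establish this multiplicative Chernoff bound through the standard exponential moment method combined with a sharp elementary inequality at the end. Since the $X_j$'s are independent $\{0,1\}$-valued random variables with $p_j := \mathbb{E}[X_j]$ (so $\mu = \sum_{j=1}^t p_j$), I will bound the upper tail $\Pr[X \geq (1+\tilde\gamma)\mu]$, which is exactly the complement of the stated event.

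First I would apply Markov's inequality to $e^{\lambda X}$ for an arbitrary $\lambda > 0$, giving $\Pr[X \geq (1+\tilde\gamma)\mu] \leq e^{-\lambda(1+\tilde\gamma)\mu}\, \mathbb{E}[e^{\lambda X}]$. Using independence together with the Bernoulli MGF identity $\mathbb{E}[e^{\lambda X_j}] = 1 + p_j(e^\lambda - 1) \leq \exp(p_j(e^\lambda - 1))$ (via $1+x \leq e^x$), I obtain
\begin{equation*}
\Pr[X \geq (1+\tilde\gamma)\mu] \leq \exp\bigl(\mu(e^\lambda - 1) - \lambda(1+\tilde\gamma)\mu\bigr).
\end{equation*}
Optimizing the exponent over $\lambda > 0$ by setting $\lambda = \ln(1+\tilde\gamma)$ yields the classical form
\begin{equation*}
\Pr[X \geq (1+\tilde\gamma)\mu] \leq \exp\bigl(-\mu\bigl[(1+\tilde\gamma)\ln(1+\tilde\gamma) - \tilde\gamma\bigr]\bigr).
\end{equation*}

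The remaining step, which is the one that actually produces the specific form $\exp(-\mu\tilde\gamma^2/(2+\tilde\gamma))$ stated in the lemma, is the elementary inequality
\begin{equation*}
(1+\tilde\gamma)\ln(1+\tilde\gamma) - \tilde\gamma \;\geq\; \frac{\tilde\gamma^2}{2+\tilde\gamma}, \qquad \tilde\gamma > 0.
\end{equation*}
I would prove this by defining $\varphi(x) = (1+x)\ln(1+x) - x - \frac{x^2}{2+x}$, checking $\varphi(0)=0$, and verifying $\varphi'(x) \geq 0$ for $x > 0$. Differentiation gives $\varphi'(x) = \ln(1+x) - \frac{x(4+x)}{(2+x)^2}$; then a second differentiation shows $\varphi''(x) = \frac{x^2}{(1+x)(2+x)^3} \geq 0$ with $\varphi'(0)=0$, so $\varphi' \geq 0$ and hence $\varphi \geq 0$ on $(0,\infty)$. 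Combining this inequality with the previous display gives $\Pr[X \geq (1+\tilde\gamma)\mu] \leq \exp(-\mu\tilde\gamma^2/(2+\tilde\gamma))$, and the complementary event yields the lemma.

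The calculation is routine; the only mildly delicate step is the elementary inequality, which must hold uniformly for all $\tilde\gamma > 0$ (not just small $\tilde\gamma$, where a Taylor expansion $(1+x)\ln(1+x) - x \approx x^2/2$ would be tight against $x^2/(2+x)$, and not just large $\tilde\gamma$, where both sides grow like $x\ln x$ versus $x$). The monotonicity argument via the second derivative handles both regimes simultaneously, and the choice of denominator $2+\tilde\gamma$ is precisely what makes the inequality hold globally.
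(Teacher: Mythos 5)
Your proof is correct and complete. Note that the paper does not prove this lemma at all: it is imported verbatim with a citation to prior work (\citealt{yang2021stability}), so there is no in-paper argument to compare against. Your derivation is the standard one — Markov's inequality applied to $e^{\lambda X}$, the Bernoulli MGF bound $\mathbb{E}[e^{\lambda X_j}]\le \exp(p_j(e^\lambda-1))$, optimization at $\lambda=\ln(1+\tilde\gamma)$, and then the elementary inequality $(1+\tilde\gamma)\ln(1+\tilde\gamma)-\tilde\gamma\ge \tilde\gamma^2/(2+\tilde\gamma)$ — and it correctly produces the exact exponent stated in the lemma (which is slightly weaker than, and hence implied by, the more common Bennett-form denominator $2+2\tilde\gamma/3$). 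One small arithmetic slip: with $\varphi(x)=(1+x)\ln(1+x)-x-\frac{x^2}{2+x}$ one gets $\varphi''(x)=\frac{1}{1+x}-\frac{8}{(2+x)^3}=\frac{x(x^2+6x+4)}{(1+x)(2+x)^3}$, not $\frac{x^2}{(1+x)(2+x)^3}$; the numerator you wrote is wrong, but the correct expression is still nonnegative for $x\ge 0$ with $\varphi''(0)=\varphi'(0)=\varphi(0)=0$, so your monotonicity argument and the final conclusion are unaffected.
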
 

In order to prove the privacy guarantee and excess generalization bound  for Algorithm \ref{alg:dp-iterative-localization-2}, we also need the following high probability $\ell_2$-sensitivity of the output of Algorithm \ref{alg:markov}. 

\begin{lemma}\label{lem:high-prob-sensitivity}
Let $\{\bar{\wbf}_t\}$ and $\{\bar{\wbf}'_t\}$ be produced by Algorithm \ref{alg:markov} based on the neighboring datasets $S$ and $S'$, respectively. If $f$ is convex and $L$-smooth and $\eta_t = \eta \leq 2/L$, then with probability at least $1 - \gamma$ we have
\[
\|\bar{\wbf}_t-\bar{\wbf}_t'\big\|_2\leq 4G\eta\Big(\frac{t}{n}+ \log(2/\gamma) + \sqrt{\frac{t\log(2/\gamma)}{n}}\Big).
\]
\end{lemma}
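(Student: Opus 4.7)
The plan is to refine the expected-stability analysis behind Theorem \ref{lem:est-smooth} into a high-probability bound by isolating the randomness in a scalar sum of independent Bernoulli variables and then invoking Chernoff's inequality (Lemma \ref{lem:chernoff-bernoulli}).

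First I would repeat the per-iterate calculation from the proof of Theorem \ref{lem:est-smooth}. Under smoothness with $\eta \leq 2/L$, Lemma \ref{lem:nonexpansive} makes the gradient step non-expansive whenever both sampled indices $i_j,i_{j-1}$ differ from the perturbed coordinate $n$; otherwise the $G$-Lipschitz bound contributes at most $2G\eta$. Since $\bw_0 = \bw_0'$, unrolling this recursion yields the sample-path inequality
\begin{equation*}
\|\bw_j - \bw_j'\|_2 \,\leq\, 2G\eta \sum_{k=1}^{j} \ibb_{[i_k = n\,\text{or}\, i_{k-1} = n]} \,\leq\, 4G\eta \sum_{k=0}^{t}\ibb_{[i_k = n]},
\end{equation*}
valid for every $0 \leq j \leq t$, where the second step expands the disjunction and reindexes. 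Because $\barw_t$ is a convex combination of the iterates $\{\bw_{j-2}\}_{j=1}^{t}$, the same bound carries over to $\|\barw_t - \barw_t'\|_2$ by the triangle inequality. The crucial gain over the proof of Theorem \ref{lem:est-smooth} is that we delay the expectation, cleanly packaging the entire training randomness into the scalar $X := \sum_{k=0}^{t}\ibb_{[i_k = n]}$.

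Second, since the indices $i_0, i_1, \ldots, i_t$ are drawn independently and uniformly from $[n]$, the summands $\ibb_{[i_k = n]}$ are i.i.d.\ Bernoulli variables and $X$ has mean $\mu = (t+1)/n$. Setting $a := \log(2/\gamma)$ and applying Lemma \ref{lem:chernoff-bernoulli}, I solve the quadratic $\mu\tilde\gamma^2/(2+\tilde\gamma) = a$ explicitly for $\tilde\gamma$ and use $\sqrt{a^2 + 8a\mu} \leq a + 2\sqrt{2a\mu}$ to obtain $(1+\tilde\gamma)\mu \leq \mu + a + \sqrt{2a\mu}$. Hence with probability at least $1-\gamma$,
\begin{equation*}
X \,\leq\, \tfrac{t+1}{n} + \log(2/\gamma) + \sqrt{\tfrac{2(t+1)\log(2/\gamma)}{n}}.
\end{equation*}
Multiplying by $4G\eta$ and absorbing the $\sqrt{2}$ and the $+1/n$ slack into the leading constant delivers the claimed bound.

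The main obstacle I anticipate is that the indicators $\ibb_{[i_k = n\,\text{or}\, i_{k-1} = n]}$ arising naturally in the stability recursion are \emph{not} independent, because consecutive summands share $i_k$, so Chernoff cannot be applied directly to them. The key conceptual step is therefore the passage to $\ibb_{[i_k = n]}$, which restores independence at the mild cost of a factor of two that is absorbed into the $4G\eta$ prefactor. Beyond this decoupling move and the routine constant-tracking needed to match the stated form, the argument is elementary.
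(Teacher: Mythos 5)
Your proposal is correct and follows essentially the same route as the paper's proof: unroll the non-expansiveness recursion from Theorem \ref{lem:est-smooth} to get the sample-path bound $2G\eta\sum_{j}\ibb_{[i_j=n\text{ or }i_{j-1}=n]}$, decouple via $\ibb_{[i_j=n\text{ or }i_{j-1}=n]}\le \ibb_{[i_j=n]}+\ibb_{[i_{j-1}=n]}$ to reduce to a sum of independent Bernoulli variables, apply the Chernoff bound of Lemma \ref{lem:chernoff-bernoulli}, and pass to $\barw_t$ by convexity of the norm. Your explicit solution of the Chernoff quadratic even makes visible a small constant slack (the $\sqrt{2}$ and the $t+1$ versus $t$) that the paper's own proof silently absorbs.
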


\begin{proof}
Without loss of generality, we assume the different example between $S$ and $S'$ is the $n$-th item. By the proof of Theorem \ref{lem:est-smooth}, we know
\[
\big\|\bw_t-\bw_t'\big\|_2\leq2G\sum_{j=1}^{t}\eta_j\ibb_{[i_j= n\text{ or }i_{j-1}= n]}\leq 2G\sum_{j=1}^{t}\eta_j\big(\ibb_{[i_j= n]}+\ibb_{[i_{j-1}= n]}).
\]
Applying Lemma~\ref{lem:chernoff-bernoulli}, with probability at least $1-\gamma$ there holds
\[
\sum_{j=1}^{t}(\ibb_{[i_j=n]}+\ibb_{[i_{j-1}= n]}) \leq \frac{2t}{n}+ 2\log(2/\gamma) + 2\sqrt{\frac{t\log(2/\gamma)}{n}}.
\]
It then follows from the convexity of $\|\cdot\|_2$ that
\[
\|\bar{\wbf}_t-\bar{\wbf}_t'\big\|_2\leq 4G\eta\Big(\frac{t}{n}+ \log(2/\gamma) + \sqrt{\frac{t\log(2/\gamma)}{n}}\Big),
\]
which implies the desired result. 
\end{proof}
With the above preparations, we are now ready to prove Theorem \ref{thm:privacy-utility-2}.  

\begin{proof}[{\bf Proof of Theorem \ref{thm:privacy-utility-2}}]
We first consider the privacy guarantee of Algorithm \ref{alg:dp-iterative-localization-2}. Since we run Algorithm \ref{alg:markov} for $\lceil n_k\log(4/\delta) \rceil$ steps for each $k$, by Lemma \ref{lem:high-prob-sensitivity}, we know with probability $1 - \delta/2$
\[
\|\bar{\wbf}_k-\bar{\wbf}_k'\big\|_2\leq 12G\eta_k\log(4/\delta) .
\]
Therefore, by Lemma \ref{lem:high-probability-gaussian-privacy}, each iteration $k$ of Algorithm \ref{alg:dp-iterative-localization-2} is $(\epsilon, \delta)$-DP. Since the partition of the dataset $S$ is disjoint, and each iteration $k$ of Algorithm \ref{alg:dp-iterative-localization-2} we only use one subset, thus the whole process satisfies  $(\epsilon,\delta)$-DP.

Next we investigate the utility bound of Algorithm \ref{alg:dp-iterative-localization-2}. Let $\bar{\wbf}_0 = \wbf^*$ and $\ubf_0= \wbf_0- \wbf^*$, then 
\begin{align*}\label{eq:dp-sco-decomposition-2}
\ebb[F(\wbf_K) - F(\wbf^*)] = \sum_{k=1}^K \ebb[F(\bar{\wbf}_k) - F(\bar{\wbf}_{k-1})] + \ebb[F(\wbf_K) - F(\bar{\wbf}_K)] \numberthis
\end{align*}
Denote $F_{S_k}$ be the empirical objective based on sample $S_k$. For the first term on the RHS of \eqref{eq:dp-sco-decomposition-2}, we have 
\begin{align*}
& \ebb[F(\bar{\wbf}_k) - F(\bar{\wbf}_{k-1})]\\
&=  \ebb[F(\bar{\wbf}_k) - F_{S_k}(\bar{\wbf}_k)] + \ebb[F_{S_k}(\bar{\wbf}_k) - F_{S_k}(\bar{\wbf}_{k-1})] + \ebb[F_{S_k}(\bar{\wbf}_{k-1}) - F(\bar{\wbf}_{k-1})]\\
&=  \ebb[F(\bar{\wbf}_k) - F_{S_k}(\bar{\wbf}_k)] + \ebb[F_{S_k}(\bar{\wbf}_k) - F_{S_k}(\bar{\wbf}_{k-1})]\\
&\leq   8G^2\log(4/\delta)\eta_k + \Big(\frac{\ebb[\|\ubf_{k-1}\|_2^2]}{2\eta_k n_k} + \frac{3G^2\eta_k}{2}\Big) \le  \frac{\ebb[\|\ubf_{k-1}\|_2^2]}{2\eta_k n_k} +  18\log(4/\delta)  G^2\eta_k , 
\end{align*} 
where the second identity is because $\bar{\wbf}_{k-1}$ is independent of $S_k$ and the inequality follows from Theorem \ref{thm:convergence} Part (a) and Theorem \ref{lem:est-smooth}. Recall that by definition $\eta \leq \frac{D\epsilon}{12G\log(4/\delta)\sqrt{2d\log(2.5/\delta)}}$, so that for all $k \geq 0$, 
\[
\ebb[\|\ubf_k\|_2^2] = d\sigma_k^2 = d\Big(\frac{4^{-k}G\eta}{\epsilon}\Big)^2 \leq 16^{-k}D^2.
\]
Plugging the above estimate into \eqref{eq:dp-sco-decomposition-2} it follows
\begin{align*}
\ebb[F(\wbf_K) - F(\wbf^*)] \leq & 
\sum_{k=1}^K  \frac{8 \cdot 16^{-k} D^2}{ 4^{-k}  2^{-k} \eta  n   } + 18 \log(4/\delta) 4^{-k}  G^2\eta + 4^{-K} GD \\
\leq & \sum_{k=1}^K 2^{-k}\Big(\frac{8D^2}{\eta n} + 18 \log(4/\delta) G^2\eta\Big) + \frac{GD}{n^2}\\
= & \O\Big(GD \Big( \frac{1}{\sqrt{n}} + \frac{\sqrt{d} \log^{\frac{3}{2}}(1/\delta) }{\epsilon n} \Big)  \Big),
\end{align*}
where in the second inequality used $K=\lceil \log_2 n \rceil$, and the last inequality is due to $\eta = \frac{D}{G}\min\{\frac{\log(4/\delta)}{\sqrt{n}}, \frac{\epsilon}{12\log(4/\delta)\sqrt{2d\log(2.5/\delta)}}\}  $.  
The desired excess generalization error bound is proved.

Finally, we investigate the gradient complexity argument. Since we run Algorithm \ref{alg:markov} for $n_k$ at iteration $k$. Therefore, the total gradient complexity is  $\ocal\Big(\sum_{k=1}^K n_k\Big) = \ocal(n \log(1/\delta)).$  The proof is completed.
\end{proof}

\section{Additional Results: Localization-Based Algorithm to Improve Theorem 
\ref{thm:excess-nonsmooth}\label{sec:iterative-localization}}

In this section, we provide additional results on how to reduce the gradient complexity $\O(n^2)$ required in Theorem \ref{thm:excess-nonsmooth} to  $\O(n)$ for nonsmooth problems.  This improvement is attained by  Algorithm \ref{alg:iterative-localization} which is motivated by the iterative localization technique \citep{feldman2020private}.

\begin{algorithm}[ht!]
\caption{Localized SGD for Pairwise Learning\label{alg:iterative-localization}}
\begin{algorithmic}[1]
\STATE {\bf Inputs:} Dataset $S = \{\zbf_i: i=1,\ldots, n\}$, parameter $\zeta$, initial point $\wbf_0$
\STATE Set $K=\lceil\log_2 n\rceil$ and divide $S$ into $K$ disjoint subsets $\{S_1, \cdots, S_K\}$ such that $|S_k| = n_k =  2^{-k}n$
\FOR{$k=1$ to $K$}
\STATE Set $\zeta_k = 2^{-k}\zeta$
\STATE Compute $\bar{\wbf}_k \in \wcal$ by Algorithm \ref{alg:markov} with step sizes $\eta_j = \frac{\zeta_kn_k}{j+1}, j \in [T_k]$ and  $T_k \asymp n_k$ iterations based on the objective $F_k$ where \[F_k(\wbf; S_k) = \frac{1}{n_k(n_k-1)} \sum_{z,z'\in S_k:z\neq z'} f(\wbf; \zbf, \zbf') + \frac{1}{\zeta_kn_k}\|\wbf - \bar{\wbf}_{k-1}\|_2^2\] 
\ENDFOR
\STATE {\bf Outputs:} $\bar{\wbf}_K$
\end{algorithmic}
\end{algorithm}

The next theorem shows that the empirical risk minimization can imply models with good excess generalization error by Algorithm \ref{alg:iterative-localization}.

\begin{theorem}\label{thm:erm-to-sco}
Let (A1) and (A3) hold true with $\alpha=0$ and let $D$ be the diameter of   $\wcal$. Let $\{\bar{\wbf}_k: k \in [K]\}$ be produced by Algorithm \ref{alg:iterative-localization} with $\zeta = \frac{D}{G\sqrt{n}}$. Then we have the following excess generalization error bounds
\[
\ebb[F(\bar{\wbf}_K) - F(\wbf^*)] = \ocal\Big(\frac{GD}{\sqrt{n}}\Big)
\]
with gradient complexity $\ocal(n)$.
\end{theorem}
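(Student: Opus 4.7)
The plan is to use the per-stage quadratic regularizer to induce strong convexity of strength $\alpha_k := 2/(\zeta_k n_k)$, which exactly matches the step-size schedule $\eta_j = \zeta_k n_k/(j+1) = 2/(\alpha_k(j+1))$ required by Theorem~\ref{thm:convergence}(b). I would decompose the overall excess risk via the telescoping identity
\begin{equation*}
  \ebb[F(\bar{\wbf}_K) - F(\wbf^*)] = \ebb[F(\bar{\wbf}_1) - F(\wbf^*)] + \sum_{k=2}^{K} \ebb[F(\bar{\wbf}_k) - F(\bar{\wbf}_{k-1})],
\end{equation*}
and aim to show every summand is $\ocal(G^2\zeta_k)$, so that the geometric sum $\sum_{k}G^2\zeta_k = \ocal(G^2\zeta) = \ocal(GD/\sqrt n)$ gives the target rate under $\zeta = D/(G\sqrt n)$.

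For each $k \geq 2$, the independence of $\bar{\wbf}_{k-1}$ from the fresh sample $S_k$ yields the decomposition
\begin{equation*}
  \ebb[F(\bar{\wbf}_k) - F(\bar{\wbf}_{k-1})] = \underbrace{\ebb[F(\bar{\wbf}_k) - F_{S_k}(\bar{\wbf}_k)]}_{\mathrm{gen}_k} + \underbrace{\ebb[F_{S_k}(\bar{\wbf}_k) - F_{S_k}(\bar{\wbf}_{k-1})]}_{\mathrm{opt}_k}.
\end{equation*}
For $\mathrm{opt}_k$, I would first use the moving-average recursion $\bw_j - \bar{\wbf}_{k-1} = (1-\eta_j\alpha_k)(\bw_{j-1} - \bar{\wbf}_{k-1}) - \eta_j\nabla f(\bw_{j-1};\bz_{i_j},\bz_{i_{j-1}})$ to show that the iterates stay within distance $\ocal(G/\alpha_k)$ of $\bar{\wbf}_{k-1}$, which bounds the regularizer-gradient contribution along the trajectory by $\ocal(G)$. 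Applying Theorem~\ref{thm:convergence}(b) to $F_k$ with comparator $\bar{\wbf}_{k-1}$ (admissible by independence) then gives $\ebb[F_k(\bar{\wbf}_k) - F_k(\bar{\wbf}_{k-1})] = \ocal(G^2/(\alpha_k T_k)) = \ocal(G^2\zeta_k)$ for $T_k \asymp n_k$; since the regularizer vanishes at $\bar{\wbf}_{k-1}$ and is non-negative elsewhere, this upper bound transfers directly to $\mathrm{opt}_k$. For $\mathrm{gen}_k$, Lemma~\ref{lem:generalization-via-stability} reduces the task to proving $\ocal(G\zeta_k)$ uniform argument stability for Algorithm~\ref{alg:markov} run on the strongly convex $F_k$, which will give $\mathrm{gen}_k = \ocal(G^2\zeta_k)$ by Lipschitz continuity. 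The boundary term $k=1$ is handled by applying Theorem~\ref{thm:convergence}(b) with comparator $\wbf^*$; there, the regularizer does not vanish at $\wbf^*$ and contributes an extra $\tfrac{1}{\zeta_1 n_1}\|\wbf^*-\bar{\wbf}_0\|_2^2 \le D^2/(\zeta_1 n_1) = \ocal(GD/\sqrt n)$, already of the target order and therefore harmless.

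The main obstacle is the stability analysis in this nonsmooth strongly convex pairwise setting: unlike Theorem~\ref{lem:est-smooth}, nonexpansiveness of the gradient map is unavailable, and unlike Theorem~\ref{lem:est-nonsmooth}, the contraction from strong convexity must be exploited to suppress the $\sqrt{T_k}$ accumulation of stochastic noise. I would refine the neighboring-dataset recursion to
\begin{equation*}
  \ebb[\|\bw_t - \bw_t'\|_2^2] \le (1-\eta_t\alpha_k/2)\,\ebb[\|\bw_{t-1} - \bw_{t-1}'\|_2^2] + \ocal(G^2\eta_t^2) + \ocal(G^2\eta_t/\alpha_k)\,\pbb[i_t=n\text{ or } i_{t-1}=n],
\end{equation*}
where the contraction stems from the common deterministic regularizer term shared by the two runs (via Young's inequality in the ``differing-index'' case). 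Unrolling with $\eta_j\alpha_k/2 = 1/(j+1)$ yields the telescoping product $\prod_{j'=j+1}^{t}(1-1/(j'+1)) = (j+1)/(t+1)$, and combining with $\pbb[i_j=n\text{ or }i_{j-1}=n] = \ocal(1/n_k)$ should give $\ebb[\|\bw_{T_k} - \bw_{T_k}'\|_2^2] = \ocal(G^2\zeta_k^2)$ at $T_k \asymp n_k$, hence argument stability $\ocal(G\zeta_k)$ after Jensen's inequality. The coupling between $(i_t,i_{t-1})$ and $(i_{t-1},i_{t-2})$ is a further subtlety and will likely require the decoupling trick used in Theorem~\ref{thm:opt-hp} (splitting iterations into even and odd subsequences) or a Chernoff bound on hitting counts in the style of Lemma~\ref{lem:chernoff-bernoulli}. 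Finally, the gradient complexity is immediate from $\sum_{k=1}^{K} T_k \asymp \sum_{k=1}^{K} n_k = \ocal(n)$.
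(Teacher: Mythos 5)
Your overall scaffolding (telescope over stages, split each stage into a generalization and an optimization part, sum a geometric series in $\zeta_k$) is reasonable, but the load-bearing step fails: the $\ocal(G\zeta_k)$ uniform argument stability you need for Algorithm \ref{alg:markov} on the nonsmooth strongly convex objective $F_k$ with only $T_k\asymp n_k$ iterations is not attainable, and your own recursion shows why. Unrolling $\ebb[\|\bw_t-\bw_t'\|_2^2]\le(1-\eta_t\alpha_k/2)\ebb[\|\bw_{t-1}-\bw_{t-1}'\|_2^2]+\ocal(G^2\eta_t^2)+\ocal(G^2\eta_t/\alpha_k)\pbb[i_t=n\text{ or }i_{t-1}=n]$ with $\eta_j\alpha_k/2=1/(j+1)$ gives the damping factor $(j+1)/(t+1)$, so the same-index noise contributes $\sum_{j\le t}\frac{j+1}{t+1}\cdot\frac{G^2}{\alpha_k^2(j+1)^2}=\ocal\big(\frac{G^2\log t}{\alpha_k^2 t}\big)$ and the differing-index term contributes $\sum_{j\le t}\frac{j+1}{t+1}\cdot\frac{G^2}{\alpha_k^2(j+1)n_k}=\ocal\big(\frac{G^2}{\alpha_k^2 n_k}\big)$. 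At $t=T_k\asymp n_k$ and $\alpha_k=2/(\zeta_k n_k)$ both are $\ocal(G^2\zeta_k^2 n_k)$ (up to a log), not $\ocal(G^2\zeta_k^2)$; hence the argument stability is $\ocal(G\zeta_k\sqrt{n_k})$, a factor $\sqrt{n_k}$ too large. Summing $\mathrm{gen}_k=\ocal(G^2\zeta_k\sqrt{n_k})$ over $k$ yields $\ocal(G^2\zeta\sqrt{n})=\ocal(GD)$, a vacuous constant bound. The obstruction is intrinsic to the nonsmooth setting: the per-step expansion $\eta_t^2\|\nabla f(\bw_{t-1};\cdot)-\nabla f(\bw_{t-1}';\cdot)\|_2^2\le 4G^2\eta_t^2$ occurs even when the sampled indices agree (this is the same $\sqrt{t}\eta$ phenomenon as in Theorem \ref{lem:est-nonsmooth}), and strong convexity only damps, not removes, its accumulation. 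Suppressing it would require $T_k\asymp n_k^2$, which is exactly the regime of Algorithm \ref{alg:dp-iterative-localization} in Appendix \ref{sec:private-2} and destroys the advertised $\ocal(n)$ gradient complexity.

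The paper's proof avoids algorithmic stability of the SGD iterates altogether. It telescopes over the \emph{exact} regularized minimizers $\hat{\wbf}_k=\arg\min_{\wbf}F_k(\wbf;S_k)$ rather than over the SGD outputs: Lemma \ref{lem:function-distante} invokes the classical stability of regularized ERM (Theorem 6 of Shalev-Shwartz et al.) to get $\ebb[F(\hat{\wbf}_k)-F(\wbf)]\le\frac{\ebb[\|\bar{\wbf}_{k-1}-\wbf\|_2^2]}{\zeta_k n_k}+2G^2\zeta_k$ for any $\wbf$, with no $\sqrt{n_k}$ inflation and no smoothness. The optimization guarantee (Theorem \ref{thm:convergence}(b)) is then used only through strong convexity to bound the parameter distance $\ebb[\|\bar{\wbf}_k-\hat{\wbf}_k\|_2^2]=\ocal(G^2\zeta_k^2 n_k)$ (Lemma \ref{lem:parameter-distance}) --- note this is the same $n_k$-inflated quantity your stability analysis produces, but in the paper's decomposition it is harmless because it only ever enters the next stage divided by $\zeta_k n_k$, giving back $\ocal(G^2\zeta_k)$ per stage; the residual $\ebb[F(\bar{\wbf}_K)-F(\hat{\wbf}_K)]\le G\sqrt{\ebb[\|\bar{\wbf}_K-\hat{\wbf}_K\|_2^2]}$ is controlled because $\zeta_K\sqrt{n_K}$ is tiny. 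If you want to rescue your route, you must replace your stage-wise generalization step by this regularized-ERM comparison; as written, the proposal does not prove the theorem.
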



We provide two technical lemmas before we present the proof of Theorem \ref{thm:erm-to-sco}.
\begin{lemma}\label{lem:parameter-distance}
Let (A1) and (A3) hold true with $\alpha = 0$ and let $\hat{\wbf}_k = \arg\min_{\wbf} F_k(\wbf;S_k)$, then
\[
\ebb[\|\bar{\wbf}_k - \hat{\wbf}_k\|_2^2] = \ocal\big(G^2\zeta_k^2n_k\big).
\] 
\end{lemma}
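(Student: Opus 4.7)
The plan is to leverage the strong convexity endowed on $F_k(\cdot;S_k)$ by the quadratic regularizer, and then invoke the optimization rate in Theorem \ref{thm:convergence}(b) for strongly convex SGD. First, I would observe that the regularizer $\tfrac{1}{\zeta_k n_k}\|\bw-\bar{\bw}_{k-1}\|_2^2$ is $\tfrac{2}{\zeta_k n_k}$-strongly convex, while the pairwise loss is convex by (A3) with $\alpha=0$, so $F_k(\cdot;S_k)$ is $\alpha_k$-strongly convex with $\alpha_k=\tfrac{2}{\zeta_k n_k}$. Moreover, each augmented per-pair loss $f(\bw;z,z')+\tfrac{1}{\zeta_k n_k}\|\bw-\bar{\bw}_{k-1}\|_2^2$ is $G_k$-Lipschitz on $\wcal$ with $G_k\le G+\tfrac{2D}{\zeta_k n_k}$, combining (A1) with $\operatorname{diam}(\wcal)=D$.

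Next, the prescribed step sizes $\eta_j=\zeta_k n_k/(j+1)=2/(\alpha_k(j+1))$ match exactly the schedule required by Theorem \ref{thm:convergence}(b). Applying that result to the strongly convex empirical objective $F_k(\cdot;S_k)$ gives
$\ebb_\acal\bigl[F_k(\bar{\bw}_k;S_k)-F_k(\hat{\bw}_k;S_k)\bigr]=\ocal\bigl(G_k^2/(\alpha_k T_k)\bigr).$
The $\alpha_k$-strong convexity of $F_k$ together with the first-order optimality of $\hat{\bw}_k$ yields the standard conversion $\|\bar{\bw}_k-\hat{\bw}_k\|_2^2\le \tfrac{2}{\alpha_k}\bigl(F_k(\bar{\bw}_k;S_k)-F_k(\hat{\bw}_k;S_k)\bigr).$ Combining these two displays and substituting $T_k\asymp n_k$ and $\alpha_k=2/(\zeta_k n_k)$ produces $\ebb[\|\bar{\bw}_k-\hat{\bw}_k\|_2^2]=\ocal\bigl(G_k^2\,\zeta_k^2 n_k\bigr)$.

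The main obstacle is controlling the Lipschitz constant $G_k$ so that the additional $\tfrac{2D}{\zeta_k n_k}$ term coming from the regularizer does not inflate the final bound beyond $G^2\zeta_k^2 n_k$. I expect this to be handled by arguing that $G_k=\ocal(G)$ under the schedule $\zeta_k=2^{-k}\zeta$, $n_k=2^{-k}n$ employed by Algorithm \ref{alg:iterative-localization}. This can be done either by a direct verification that, in the regime of interest driving the final $\ocal(GD/\sqrt n)$ excess-risk bound, $D/(\zeta_k n_k)\lesssim G$; or more robustly by a trajectory-based argument exploiting that, since the iterates are initialized at $\bar{\bw}_{k-1}$ and strongly pulled back towards it by the regularizer, they stay within a ball of radius $\ocal(G\zeta_k n_k)$ around $\bar{\bw}_{k-1}$, which effectively replaces the worst-case diameter $D$ by $G\zeta_k n_k$ in the regularizer's gradient bound along the trajectory. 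Either route gives an effective Lipschitz constant of order $G$, and substituting back yields exactly the claimed bound $\ocal\bigl(G^2\zeta_k^2 n_k\bigr)$.
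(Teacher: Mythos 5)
Your proposal is correct and follows exactly the paper's proof: the paper likewise notes that $F_k(\cdot;S_k)$ is $\alpha_k=2/(\zeta_k n_k)$-strongly convex, invokes Theorem \ref{thm:convergence}(b) with $T_k\asymp n_k$ to get $\ebb[F_k(\bar{\wbf}_k;S_k)-F_k(\hat{\wbf}_k;S_k)]=\ocal(G^2/(\alpha_k n_k))$, and converts this to $\ebb[\|\bar{\wbf}_k-\hat{\wbf}_k\|_2^2]$ via strong convexity. The only difference is that you explicitly flag the Lipschitz constant $G_k$ of the regularized per-pair loss as an obstacle, whereas the paper silently applies Theorem \ref{thm:convergence}(b) with the constant $G$; since $\zeta_k n_k = 4^{-k}\zeta n$ makes the naive bound $D/(\zeta_k n_k)\lesssim G$ fail for large $k$, your trajectory-based argument (iterates remain within $\ocal(G\zeta_k n_k)$ of the initialization $\bar{\wbf}_{k-1}$, so the regularizer contributes only $\ocal(G)$ to the gradient) is the right way to close this, and makes your version, if anything, more careful than the paper's.
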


\begin{proof}
Note that $F_k$ is $\alpha_k = \frac{2}{\zeta_k n_k}$-strongly convex, by the convergence of Algorithm \ref{alg:markov} in Theorem \ref{thm:convergence} Part (b), we know that
\[
\frac{\alpha_k}{2}\ebb[\|\bar{\wbf}_k - \hat{\wbf}_k\|_2^2] \leq \ebb[F_k(\bar{\wbf}_k; S_k) - F_k(\hat{\wbf}_k; S_k)] = \ocal\Big(\frac{G^2}{\alpha_k n_k}\Big)
\]
which implies
\[
\ebb[\|\bar{\wbf}_k - \hat{\wbf}_k\|_2^2] = \ocal\big(G^2\zeta_k^2n_k\big).
\]
The proof is completed.
\end{proof}

\begin{lemma}\label{lem:function-distante}
Let (A1) and (A3) hold true with $\alpha = 0$. For any $\wbf \in \wcal$, we know that
\[
\ebb[F(\hat{\wbf}_k) - F(\wbf)] \leq \frac{\ebb[\|\bar{\wbf}_{k-1} - \wbf\|_2^2]}{\zeta_k n_k} + 2G^2\zeta_k.
\]
\end{lemma}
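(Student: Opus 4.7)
The plan is to combine two ingredients: the defining optimality of $\hat{\wbf}_k$, and a uniform stability bound for the regularized ERM. The structural observation I will exploit throughout is that $S_1,\ldots,S_K$ are disjoint, so $\bar{\wbf}_{k-1}$ is independent of $S_k$; and for the typical usage with $\wbf$ independent of $S_k$ (such as $\wbf=\wbf^*$), we therefore have $\ebb[F_{S_k}(\wbf)]=\ebb[F(\wbf)]$.

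In the first step, I invoke optimality: since $\hat{\wbf}_k=\arg\min_{\wbf'}F_k(\wbf';S_k)$, we have $F_k(\hat{\wbf}_k;S_k)\le F_k(\wbf;S_k)$. Expanding the definition of $F_k$ and dropping the nonnegative regularization term $\frac{1}{\zeta_k n_k}\|\hat{\wbf}_k-\bar{\wbf}_{k-1}\|_2^2$ on the left gives
\[
F_{S_k}(\hat{\wbf}_k)\le F_{S_k}(\wbf)+\frac{\|\wbf-\bar{\wbf}_{k-1}\|_2^2}{\zeta_k n_k}.
\]
Taking expectations, using the independence observation above, yields
\[
\ebb[F_{S_k}(\hat{\wbf}_k)]\le \ebb[F(\wbf)]+\frac{\ebb[\|\wbf-\bar{\wbf}_{k-1}\|_2^2]}{\zeta_k n_k}.
\]

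In the second step, I convert this empirical bound into a population bound via algorithmic stability. Because $F_k(\cdot;S_k)$ is $(2/(\zeta_k n_k))$-strongly convex, applying strong convexity at the two minimizers $\hat{\wbf}_k,\hat{\wbf}_k'$ of neighboring problems and adding the two resulting inequalities cancels the regularizer and yields
\[
\tfrac{2}{\zeta_k n_k}\|\hat{\wbf}_k-\hat{\wbf}_k'\|_2^2\le [F_{S_k}(\hat{\wbf}_k')-F_{S_k'}(\hat{\wbf}_k')]-[F_{S_k}(\hat{\wbf}_k)-F_{S_k'}(\hat{\wbf}_k)].
\]
Only $2(n_k-1)$ of the $n_k(n_k-1)$ ordered pairs are affected when a single example is replaced; by the Lipschitz assumption (A1), each contributes at most $O(G\|\hat{\wbf}_k-\hat{\wbf}_k'\|_2)$, making the right-hand side of order $G\|\hat{\wbf}_k-\hat{\wbf}_k'\|_2/n_k$. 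Cancelling one power of $\|\hat{\wbf}_k-\hat{\wbf}_k'\|_2$ gives $\|\hat{\wbf}_k-\hat{\wbf}_k'\|_2=O(G\zeta_k)$, and a further application of Lipschitz continuity together with Lemma~\ref{lem:generalization-via-stability} delivers $\ebb[F(\hat{\wbf}_k)-F_{S_k}(\hat{\wbf}_k)]\le 2G^2\zeta_k$. Combining with the display from step one produces the claimed inequality.

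The main technical obstacle is the stability analysis for pairwise regularized ERM: although $O(n_k)$ of the $n_k(n_k-1)$ pairwise summands are perturbed when one example is replaced, the regularization coefficient $\frac{1}{\zeta_k n_k}$—which is built into Algorithm~\ref{alg:iterative-localization} precisely for this purpose—is calibrated so that the induced strong convexity exactly absorbs this perturbation and yields stability of order $G^2\zeta_k$. This calibration is what allows the second $G^2\zeta_k$ term in the bound to remain of the right order while the first term reflects how close the warm-start $\bar{\wbf}_{k-1}$ already is to the target $\wbf$.
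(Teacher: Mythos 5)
Your argument is mathematically sound and is, at bottom, the same route the paper takes: the paper simply cites the proof of Theorem~6 of Shalev-Shwartz et al.\ as a black box for the regularized population risk $R(\wbf)=\ebb_{\zbf,\zbf'}[f(\wbf;\zbf,\zbf')]+\frac{1}{\zeta_k n_k}\|\wbf-\bar{\wbf}_{k-1}\|_2^2$, obtaining $\ebb[R(\hat{\wbf}_k)-R(\wbf)]\le 2G^2\zeta_k$ and then peeling off the regularizer, whereas you unpack that citation into its two constituent steps (empirical optimality of $\hat{\wbf}_k$, then stability of the strongly convex pairwise ERM) and carry them out explicitly. Your self-contained execution is arguably more transparent for the pairwise setting, and your explicit remark that $\ebb[F_{S_k}(\wbf)]=\ebb[F(\wbf)]$ requires $\wbf$ to be independent of $S_k$ is a point the lemma's ``for any $\wbf\in\wcal$'' glosses over (the paper's route via $R(\wbf_R^*)\le R(\wbf)$ sidesteps it; yours needs the independence, which does hold in the only place the lemma is used, namely $\wbf=\hat{\wbf}_{k-1}$ with disjoint blocks). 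One caveat on constants: your stability computation gives argument stability $\|\hat{\wbf}_k-\hat{\wbf}_k'\|_2\le 2G\zeta_k$, hence uniform stability $2G^2\zeta_k$, and Lemma~\ref{lem:generalization-via-stability} as stated then yields $\ebb[F(\hat{\wbf}_k)-F_{S_k}(\hat{\wbf}_k)]\le 4G^2\zeta_k$ rather than the $2G^2\zeta_k$ you assert, so your route literally proves the lemma with constant $4$ in place of $2$. This is harmless for every downstream use (all applications are inside $\ocal(\cdot)$ bounds), but if you want the stated constant you either need the one-sided, single-replacement version of the stability-to-generalization conversion or must track the pairwise factor more carefully than Lemma~\ref{lem:generalization-via-stability} allows.
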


\begin{proof}
Let $r(\wbf; \zbf, \zbf') = f(\wbf, \zbf, \zbf') + \frac{1}{\zeta_k n_k}\|\wbf - \bar{\wbf}_{k-1}\|_2^2, R(\wbf) = \ebb_{\zbf,\zbf'} [r(\wbf; \zbf, \zbf')]$ and $\wbf_R^* = \arg\min_{\wbf\in \wcal} R(\wbf)$. 
By the proof of Theorem 6  in \citet{shalev2009stochastic} , one has that 
\begin{align*}
&\ebb[F(\hat{\wbf}_k) + \frac{1}{\zeta_k n_k}\|\hat{\wbf}_k - \bar{\wbf}_{k-1}\|_2^2 - F(\wbf) - \frac{1}{\zeta_k n_k}\|\wbf - \bar{\wbf}_{k-1}\|_2^2] \\
&= \ebb[R(\hat{\wbf}_k) - R(\wbf)]  
\leq   \ebb[R(\hat{\wbf}_k) - R(\wbf_R^*)] \leq 2G^2\zeta_k,
\end{align*}
which implies that
\begin{align*}
\ebb[F(\hat{\wbf}_k) - F(\wbf)] \leq & 2G^2\zeta_k - \frac{1}{\zeta_k n_k}\ebb[\|\hat{\wbf}_k - \bar{\wbf}_{k-1}\|_2^2] + \frac{1}{\zeta_k n_k}\ebb[\|\wbf - \bar{\wbf}_{k-1}\|_2^2]\\
\leq & 2G^2\zeta_k + \frac{1}{\zeta_k n_k}\ebb[\|\wbf - \bar{\wbf}_{k-1}\|_2^2].
\end{align*}
The proof is completed.
\end{proof}

\begin{proof}[{\bf Proof of Theorem \ref{thm:erm-to-sco}}]
Let $\hat{\wbf}_0 = \wbf^*$, we have
\begin{align*}\label{eq:sco-decomposition}
\ebb[F(\bar{\wbf}_K)] - F(\wbf^*) = \sum_{k=1}^K \ebb[F(\hat{\wbf}_k) - F(\hat{\wbf}_{k-1})] + \ebb[F(\bar{\wbf}_K) - F(\hat{\wbf}_K)]. \numberthis
\end{align*}
For the first term we have
\begin{align*}\label{eq:sco-decomposition-1}
\sum_{k=1}^K \ebb[F(\hat{\wbf}_k) - F(\hat{\wbf}_{k-1})] \leq & \sum_{k=1}^K    \Big(\frac{\ebb[\|\bar{\wbf}_{k-1} - \hat{\wbf}_{k-1}\|_2^2]}{\zeta_k n_k} + 2G^2\zeta_k\Big)\\ 
= & \ocal\Big(\frac{D^2}{\zeta n}  + \sum_{k=2}^K G^2\zeta_k  + \sum_{k=1}^K 2^{-k} G^2 \zeta \Big)\\
= & \ocal\Big(\frac{D^2}{\zeta n} + G^2 \zeta\Big) \numberthis
\end{align*}
where the first inequality is by Lemma \ref{lem:function-distante}, the second inequality is by Lemma \ref{lem:parameter-distance} and $\zeta = \frac{D}{G\sqrt{n}}$. For the second term we have
\begin{align*}\label{eq:sco-decomposition-2}
\ebb[F(\bar{\wbf}_K) - F(\hat{\wbf}_K)] \leq &  G\ebb[\|\bar{\wbf}_K - \hat{\wbf}_K\|_2] \leq G\sqrt{\ebb[\|\bar{\wbf}_K - \hat{\wbf}_K\|_2^2]}   = \ocal\big(G^2\zeta_K\sqrt{n_K}\big)\\
= &  \ocal\Big(2^{-2K}G^2\zeta\sqrt{n}\Big) = \ocal\Big(G^2\zeta \Big) \numberthis
\end{align*}
where the first inequality is by $G$-Lipschitz continuity of $F$, the second inequality is by Jensen's inequality, the first identity is by Lemma \ref{lem:parameter-distance} and the second identity is by $n_k=2^{-k}n$.

Now putting \eqref{eq:sco-decomposition-1} and \eqref{eq:sco-decomposition-2} back to \eqref{eq:sco-decomposition} and using $\zeta = \frac{D}{G\sqrt{n}}$, we derive
\[
\ebb[F(\bar{\wbf}_K)] - F(\wbf^*) = \ocal\Big(\frac{GD}{\sqrt{n}}\Big).
\]

Finally we investigate the gradient complexity. Since $F_k$ is $\frac{2}{\zeta_kn_k}$-strongly convex, by Theorem \ref{thm:convergence} Part (b), we need to choose $T_k \asymp n_k$ so that Lemma \ref{lem:parameter-distance} holds. Therefore, in total, we require $
\ocal\Big(\sum_{k=1}^K n_k\Big) = \ocal(n)$ gradient complexity,  which yields the desired result. 
\end{proof}

\section{Additional Results: Differentially Private SGD for Pairwise Learning with Non-Smooth Losses\label{sec:private-2}}

In this section, we propose a differentially private algorithm based on iterative localization \citep{feldman2020private} for nonsmooth pairwise learning problems. The algorithm is presented as follows.

\begin{algorithm}[ht!]
\caption{Differentially Private Localized SGD for Pairwise Learning\label{alg:dp-iterative-localization}}
\begin{algorithmic}[1]
\STATE {\bf Inputs:} Dataset $S = \{\zbf_i: i\in[n]\}$, parameters $\epsilon, \delta$, and $\zeta$, initial points $\wbf_0$
\STATE Set $K\!=\!\lceil\log_2 n\rceil$ and divide $S$ into $K$ disjoint subsets $\{S_1, \cdots, S_K\}$ where $|S_k| \!=\! n_k \!=\! 2^{-k}n$.
\FOR{$k=1$ to $K$}
\STATE Set $\zeta_k = 4^{-k}\zeta$
\STATE Compute $\bar{\wbf}_k \in \wcal$ by Algorithm \ref{alg:markov} with step sizes $\eta_j = \frac{\zeta_kn_k}{j+1}$ on objective $F_k$ such that  with prob $1-\delta$, \[F_k(\bar{\wbf}_k; S_k) - \min_{\wbf \in \wcal}F_k(\wbf; S_k) \leq G^2\zeta_k/n_k\] where $F_k(\wbf; S_k) = \frac{1}{n_k(n_k-1)} \sum_{z,z'\in S_k:z\neq z'} f(\wbf; z,z') + \frac{1}{\zeta_kn_k}\|\wbf - \wbf_{k-1}\|_2^2$ 
\STATE Set $\wbf_k = \bar{\wbf}_k + \ubf_k$ where $\ubf_k \sim \ncal(0,\alpha_k^2I_d)$ with $\sigma_k = 4G\zeta_k\sqrt{\log(2.5/\delta)}/\epsilon$.
\ENDFOR
\STATE {\bf Outputs:} $\wbf_K$
\end{algorithmic}
\end{algorithm}

We are now ready to present the privacy guarantee and utility bound of Algorithm \ref{alg:dp-iterative-localization} in the following theorem. The proof differs from the iterative localization algorithm in pointwise learning \citep{feldman2020private} since we employ our high probability convergence results for non-smooth losses in pairwise learning.

\begin{theorem}\label{thm:privacy-utility}
Let (A1) and (A3) hold true with $\alpha=0$ and let $D$ be the diameter of $\wcal$. Let $\{\wbf_k: k \in [K]\}$ be produced by Algorithm \ref{alg:dp-iterative-localization} with $\zeta = \frac{D}{G}\min\{\frac{4}{\sqrt{n}}, \frac{\epsilon}{4\sqrt{d\log(1/\delta)}}\}$. Then Algorithm \ref{alg:dp-iterative-localization} satisfies $(\epsilon, \delta)$-DP. Furthermore we have the following excess generalization error bounds
\[
\ebb[F(\wbf_K) - F(\wbf^*)] = \ocal\Big(GD\Big(\frac{1}{\sqrt{n}} + \frac{\sqrt{d\log(1/\delta)}}{\epsilon n}\Big)\Big)
\]
with no more than $\ocal(n^2\log(1/\delta))$ stochastic gradient computations.
\end{theorem}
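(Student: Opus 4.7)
\textbf{Proof proposal for Theorem \ref{thm:privacy-utility}.} The plan is to combine the iterative localization framework behind Algorithm \ref{alg:iterative-localization} (which handles nonsmooth losses via strongly convex regularization) with the Gaussian mechanism argument used in the proof of Theorem \ref{thm:privacy-utility-2}. I will first establish the high-probability $\ell_2$-sensitivity of each inner call to Algorithm \ref{alg:markov} on the regularized objective $F_k$, then use it to certify DP via Lemma \ref{lem:high-probability-gaussian-privacy}, and finally decompose the excess risk along the localization sequence.

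For privacy, I first observe that $F_k(\,\cdot\,;S_k)$ is $\alpha_k$-strongly convex with $\alpha_k=2/(\zeta_k n_k)$ because of the regularization term $\frac{1}{\zeta_kn_k}\|\wbf-\wbf_{k-1}\|_2^2$. Let $\hat{\wbf}_k,\hat{\wbf}_k'$ denote the exact minimizers of $F_k(\,\cdot\,;S_k),F_k(\,\cdot\,;S_k')$ for neighboring sets $S_k,S_k'$. A classical stability argument for regularized ERM under Lipschitz losses yields $\|\hat{\wbf}_k-\hat{\wbf}_k'\|_2=\ocal(G\zeta_k)$. For the algorithmic output $\bar{\wbf}_k$, the high-probability guarantee $F_k(\bar{\wbf}_k;S_k)-\min_\wbf F_k(\wbf;S_k)\le G^2\zeta_k/n_k$ combined with strong convexity gives $\|\bar{\wbf}_k-\hat{\wbf}_k\|_2\le \ocal(G\zeta_k)$ with probability at least $1-\delta$ (achievable via Theorem \ref{thm:opt-hp}(b), which provides an $\ocal(G^2\log(1/\delta)/(\alpha_k t))$ optimization rate; choosing $t\asymp n_k^2\log(1/\delta)$ suffices). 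Triangle inequality then yields $\|\bar{\wbf}_k-\bar{\wbf}_k'\|_2=\ocal(G\zeta_k)$ with high probability. Plugging this into Lemma \ref{lem:high-probability-gaussian-privacy} with the prescribed noise scale $\sigma_k=4G\zeta_k\sqrt{\log(2.5/\delta)}/\epsilon$ makes each iteration $k$ satisfy $(\epsilon,\delta)$-DP. Since the subsets $S_k$ are disjoint, parallel composition gives $(\epsilon,\delta)$-DP overall.

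For utility, I follow the decomposition used for Theorem \ref{thm:erm-to-sco}: setting $\hat{\wbf}_0=\wbf^*$,
\begin{equation*}
\ebb[F(\wbf_K)-F(\wbf^*)]=\sum_{k=1}^K\ebb[F(\hat{\wbf}_k)-F(\hat{\wbf}_{k-1})]+\ebb[F(\wbf_K)-F(\hat{\wbf}_K)].
\end{equation*}
Lemma \ref{lem:function-distante} bounds each summand by $\ebb[\|\wbf_{k-1}-\hat{\wbf}_{k-1}\|_2^2]/(\zeta_k n_k)+2G^2\zeta_k$. The crucial change compared to the non-private case is that $\wbf_{k-1}=\bar{\wbf}_{k-1}+\ubf_{k-1}$ now carries Gaussian noise, so
\begin{equation*}
\ebb[\|\wbf_{k-1}-\hat{\wbf}_{k-1}\|_2^2]\le 2\ebb[\|\bar{\wbf}_{k-1}-\hat{\wbf}_{k-1}\|_2^2]+2d\sigma_{k-1}^2.
\end{equation*}
The first piece is $\ocal(G^2\zeta_{k-1}^2)$ by strong convexity and the optimization guarantee, while $d\sigma_{k-1}^2=\ocal(16^{-(k-1)}dG^2\zeta^2\log(1/\delta)/\epsilon^2)$. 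The geometric decay $\zeta_k=4^{-k}\zeta$ together with $n_k=2^{-k}n$ makes the ratios $\zeta_{k-1}^2/(\zeta_k n_k)$ and $d\sigma_{k-1}^2/(\zeta_k n_k)$ telescope, leaving only the dominant $k=1$ contribution plus $\sum_k G^2\zeta_k\le 2G^2\zeta$. Tuning $\zeta=\frac{D}{G}\min\{4/\sqrt{n},\epsilon/(4\sqrt{d\log(1/\delta)})\}$ balances the data-fit term $D^2/(\zeta n)$ against the noise/regularization term $G^2\zeta+G^2\zeta d\log(1/\delta)/\epsilon^2$ and delivers $\ocal\!\bigl(GD\bigl(1/\sqrt{n}+\sqrt{d\log(1/\delta)}/(\epsilon n)\bigr)\bigr)$. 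The residual $\ebb[F(\wbf_K)-F(\hat{\wbf}_K)]$ is controlled by Lipschitzness and the same bound on $\|\wbf_K-\hat{\wbf}_K\|_2$.

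The gradient complexity counts follow because each outer iteration $k$ requires $\ocal(n_k^2\log(1/\delta))$ steps of Algorithm \ref{alg:markov} on $F_k$ to meet the high-probability optimization target; summing $\sum_{k=1}^K n_k^2=\ocal(n^2)$ gives $\ocal(n^2\log(1/\delta))$ overall. The main obstacle I expect is the noise-plus-bias bookkeeping in the telescoping sum: one must argue that the Gaussian perturbation injected at level $k-1$ is \emph{smaller} in squared $\ell_2$-norm than the regularization radius at level $k$, so that the recursion does not accumulate. This hinges on the choice $\zeta_k=4^{-k}\zeta$ (rather than the $2^{-k}\zeta$ used in Algorithm \ref{alg:iterative-localization}) and is precisely why the private algorithm shrinks the localization radius geometrically faster than the non-private one.
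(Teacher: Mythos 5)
Your proposal is correct and follows essentially the same route as the paper's proof: the same strong-convexity sensitivity bound ($\|\hat{\wbf}_k-\hat{\wbf}_k'\|_2\le 2G\zeta_k$ plus $\|\bar{\wbf}_k-\hat{\wbf}_k\|_2\le G\zeta_k$) feeding Lemma \ref{lem:high-probability-gaussian-privacy} for privacy, the same localization decomposition with Lemma \ref{lem:function-distante} for utility, and the same $T_k\asymp n_k^2\log(1/\delta)$ iteration count from Theorem \ref{thm:opt-hp}(b). The only cosmetic difference is that you telescope along the exact minimizers $\hat{\wbf}_k$ and therefore need the extra split $\ebb[\|\wbf_{k-1}-\hat{\wbf}_{k-1}\|_2^2]\le 2\ebb[\|\bar{\wbf}_{k-1}-\hat{\wbf}_{k-1}\|_2^2]+2d\sigma_{k-1}^2$, whereas the paper telescopes along $\bar{\wbf}_k$ so that the distance term in Lemma \ref{lem:function-distante} is exactly $\ebb[\|\ubf_{k-1}\|_2^2]$; both versions yield the stated bound.
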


\begin{proof}[{\bf Proof of Theorem \ref{thm:privacy-utility}}]
We first consider the privacy guarantee of Algorithm \ref{alg:dp-iterative-localization}. 
For any neighboring datasets $S=\{ S_1,\ldots, S_K \}$ and $S'=\{ S'_1,\ldots, S'_K \}$ differing by one example, where $S'$ follows the same partition as $S$, and $S_i \cap S_j =  \emptyset$ if $i \ne j$. 
Let $\hat{\wbf}_k = \arg\min_{\wbf} F_k(\wbf; S_k)$ and $\hat{\wbf}'_k = \arg\min_{\wbf} F_k(\wbf; S'_k)$. We first investigate the $\ell_2$-sensitivity of $\hat{\wbf}_k$. Since $F_k$ is $\alpha_k = \frac{2}{\zeta_k n_k}$-strongly convex, by Theorem 6 in \citet{shalev2009stochastic} we have
\[
\|\hat{\wbf}_k - \hat{\wbf}'_k\|_2 \leq \frac{4G}{\alpha_k n_k} = 2G\zeta_k,
\]
where $\bar{\wbf}'_k$ is the return from Line 5 in Algorithm \ref{alg:dp-iterative-localization} based on $F_k(\wbf;S_k')$. By the strong convexity of $F_k$ again, we have with probability at least $1 - \delta$
\[
\frac{\alpha_k}{2}\|\bar{\wbf}_k - \hat{\wbf}_k\|_2^2 \leq F_k(\bar{\wbf}_k; S_k) - F_k(\hat{\wbf}_k; S_k) \leq \frac{G^2\zeta_k}{n_k}
\]
which implies $\|\bar{\wbf}_k - \hat{\wbf}_k\|_2 \leq G\zeta_k$. This further implies $\bar{\wbf}_k$ has $\ell_2$-sensitivity of $4G\zeta_k$ with probability $1 - \delta$. Therefore, by Lemma \ref{lem:high-probability-gaussian-privacy}, each iteration $k$ of Algorithm \ref{alg:dp-iterative-localization} is $(\epsilon, \delta)$-DP. Since the partition of the dataset $S$ is disjoint, and each iteration $k$ of Algorithm \ref{alg:dp-iterative-localization} we only use one subset, thus the whole process will still be $(\epsilon,\delta)$-DP.

Next we investigate the utility bound of Algorithm \ref{alg:dp-iterative-localization}. Firstly, for any fixed $\wbf$, 
\begin{align*}
\ebb[F(\bar{\wbf}_k) - F(\wbf)] = & \ebb[F(\hat{\wbf}_k) - F(\wbf)] + \ebb[F(\bar{\wbf}_k) - F(\hat{\wbf}_k)]\\
\leq & \frac{\ebb[\|\wbf_{k-1} - \wbf\|_2^2]}{\zeta_k n_k} + 3G^2\zeta_k
\end{align*}
where we used Lemma \ref{lem:function-distante} and $\|\bar{\wbf}_k - \hat{\wbf}_k\|_2 \leq G\zeta_k$. Denote $\bar{\wbf}_0 = \wbf^*$ and $\ubf_0 = \wbf_0 - \wbf^*$, we have
\begin{align*}\label{eq:dp-sco-decomposition}
\ebb[F(\wbf_K) - F(\wbf^*)] = & \sum_{k=1}^K \ebb[F(\bar{\wbf}_k) - F(\bar{\wbf}_{k-1})] + \ebb[F(\wbf_K) - F(\bar{\wbf}_K)]\\
\leq & \sum_{k=1}^K \Big(\frac{\ebb[\|\ubf_{k-1}\|_2^2]}{\zeta_k n_k} + 3G^2\zeta_k\Big) + G\ebb[\|\ubf_K\|_2]. \numberthis 
\end{align*}
Recall that by definition $\zeta \leq \frac{D\epsilon}{4G\sqrt{d\log(2.5/\delta)}}$, so that for all $k \geq 0$, there holds 
\[
\ebb[\|\ubf_k\|_2^2] = d\sigma_k^2 = d\Big(\frac{4^{-k}G\zeta}{\epsilon}\Big)^2 \leq 16^{-k}D^2.
\]
Plugging the above estimate into \eqref{eq:dp-sco-decomposition} it follows
\begin{align*}
\ebb[F(\wbf_K) - F(\wbf^*)] \leq & \sum_{k=1}^K 2^{-k}\Big(\frac{8D^2}{\zeta n} + 3G^2\zeta\Big) + 4^{-K}GD\\
\leq & \sum_{k=1}^K 2^{-k}GD\Big(\frac{8}{n}\max\Big\{\sqrt{n}, \frac{\sqrt{d\log(1/\delta)}}{\epsilon}\Big\} + \frac{1}{2\sqrt{n}}\Big) + \frac{GD}{n^2}\\
\leq & 9GD\Big(\frac{1}{\sqrt{n}} + \frac{\sqrt{d\log(1/\delta)}}{n\epsilon}\Big) + \frac{GD}{n^2}.
\end{align*}
This yields the desired utility bound.  

Finally, we investigate the gradient complexity argument. Since $F_k$ is $\frac{2}{\zeta_kn_k}$-strongly convex. We know from Theorem \ref{thm:opt-hp} Part (b), after $T_k \asymp n_k^2\log(1/\delta)$ iterations, we have with probability $1-\delta$
\[
F_k(\bar{\wbf}_k; S_k) - \min_\wbf F_k(\wbf; S_k) = \ocal\Big(\frac{G^2\zeta_kn_k\log(1/\delta)}{n_k^2\log(1/\delta)}\Big) = \ocal\Big(\frac{G^2\zeta_k}{n_k}\Big)
\]
which satisfies the requirement at Line 5 of Algorithm \ref{alg:dp-iterative-localization}. Therefore, in total the gradient complexity is of the form 
$\ocal\Big(\sum_{k=1}^K n_k^2\log(1/\delta)\Big) = \ocal\big(n^2\log(1/\delta)\big).
$ The proof is completed.
\end{proof}

\section{Additional Experiments\label{sec:more-exp}}

In this section, we provide the experimental details and additional experiments to support our theoretical findings. The datasets we used are from LIBSVM website \citep{CC01a}. The statistics of the data is included in Table \ref{tab:datasts}. For data with multiple classes, we convert the first half of class numbers to be the positive class and the second half of class numbers to be the negative class.

\begin{table*}[!ht]
\centering
\small
\caption{Data Statistics. $n$ is the number of samples and $d$ is the number of features.}
\begin{tabular}{c|c|c|c|c|c|c}
\hline
& \code{diabtes} & \code{german} & \code{ijcnn1}  & \code{letter}  & \code{mnist} & \code{usps} \\\hline\hline 
n & 768 & 1,000  & 49,990 & 15,000& 60,000 & 7,291 \\\hline
d & 8 & 24 & 22 & 161 & 780 & 256 \\\hline
\end{tabular}
\label{tab:datasts}
\vspace*{-3mm}
\end{table*}

\begin{table*}[!htb] 
\setlength{\tabcolsep}{2pt}
\centering
\small
\caption{Average AUC score $\pm$ standard deviation across multiple datasets. Our best results are highlighted in bold.}
\begin{tabular}{@{\hskip1pt}c@{\hskip1pt}|c|c|c|c|c|c}
\hline
 Algorithm & \code{diabetes} & \code{german} & \code{ijcnn1} & \code{letter} & \code{mnist} & \code{usps} \\\hline\hline 
\code{Our} & \bm{$.831 \pm .030$} &  $.793 \pm .021$ & \bm{$.934 \pm .002$} & $.810 \pm .007$  & \bm{$.932 \pm .001$} & \bm{$.926 \pm .006$} \\\hline
$\code{SGD}_{pair}$~\citep{lei2020sharper} & $.830 \pm .028$ &  $.794 \pm .023$ & $.934 \pm .003$ & $.811 \pm .008$  & $.932 \pm .001$ & $.925 \pm .006$ \\\hline
\code{OLP} \citep{Kar} & $.825 \pm .028$ & $.787 \pm .028$ & $.916 \pm .003$ & $.808 \pm .010$ & $.927 \pm .003$  & $.917 \pm .006$ \\\hline
$\code{OAM}_{gra}$ \citep{zhao2011online} & $.828 \pm .026$ & $.785 \pm .029$ & $.930 \pm .003$ & $.806 \pm .008$  & $.898 \pm .002$ & $.916 \pm .005$ \\\hline
\code{OLP-RS1} & $.736 \pm .074$ & $.630 \pm .065$ & $.668 \pm .026$ & $.683 \pm .033$ & $.749 \pm .045$ & $.737 \pm .056$ \\\hline
\code{OAM-RS1} & $.737 \pm .069$ & $.640 \pm .058$ & $.677 \pm .014$ & $.675 \pm .050$ & $.685 \pm .042$ & $.691 \pm .059$ \\\hline
\code{SPAUC} \citep{lei2021stochastic} & $.828 \pm .031$ & $.799 \pm .026$ & $.932 \pm .002$ & $.809 \pm .008$ & $.927 \pm .002$ & $.923 \pm .005$ \\\hline
\end{tabular}
\label{tab:gen-hinge-extension}
\vspace*{-2mm}
\end{table*}

For each dataset, we have used $80\%$ of the data for training and the remaining $20\%$ for testing. The results are based on 25 runs of random shuffling. The generalization performance is reported using the average AUC score and standard deviation on the test data. To determine proper hyper parameters, we conduct 5-fold cross
validation on the training sets: 1) for Algorithm \ref{alg:markov} and $\code{SGD}_{pair}$, we select step sizes $\eta_t = \eta \in 10^{[-3:3]}$ and $\wcal$ diameter $D \in 10^{[-3:3]}$; 2) for \code{OLP} we select step sizes $\eta_t = \eta / \sqrt{t}$ where $\eta \in 10^{[-3:3]}$ and $\wcal$ diameter $D \in 10^{[-3:3]}$; 3) for $\code{OAM}_{gra}$ we select learning rate parameter $C \in 10^{[-3:3]}$; 4) for \code{SPAUC} we select  step sizes $\eta_t = \eta / \sqrt{t}$ where $\eta \in 10^{[-3:3]}$. 

Firstly, Table \ref{tab:gen-hinge-extension} summarizes the generalization performance of different algorithms which contains more comparison results than Table \ref{tab:gen-hinge}.  In particular, two additional results are added for comparison, i.e.  \code{OLP-RS1} and  \code{OAM-RS1} which denote the \code{OLP} \citep{Kar} and $\code{OAM}_{gra}$ \citep{zhao2011online} with Reservoir sampling and the buffering set size $s=1$, respectively. We can see that \code{OLP-RS1} and \code{OAM-RS1} are inferior to other algorithms. This inferior performance  for \code{OLP} and \code{OAM} with a small buffering set was also observed in the experiments of \cite{Kar, zhao2011online}. 

\begin{figure}[ht!]
\begin{subfigure}{.33\textwidth}
\centering
\includegraphics[width=.95\linewidth]{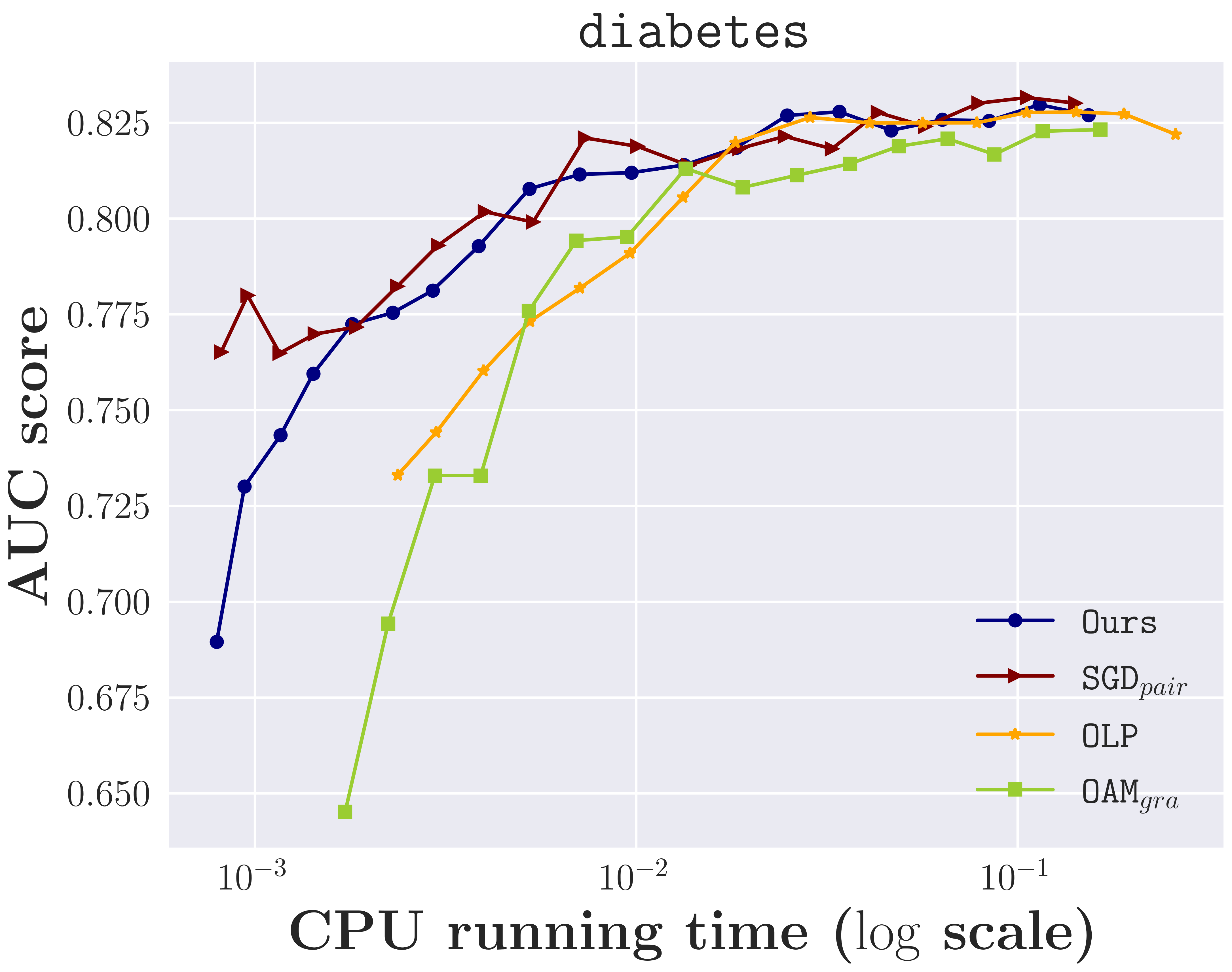}  
\end{subfigure}
\begin{subfigure}{.33\textwidth}
\centering
\includegraphics[width=.95\linewidth]{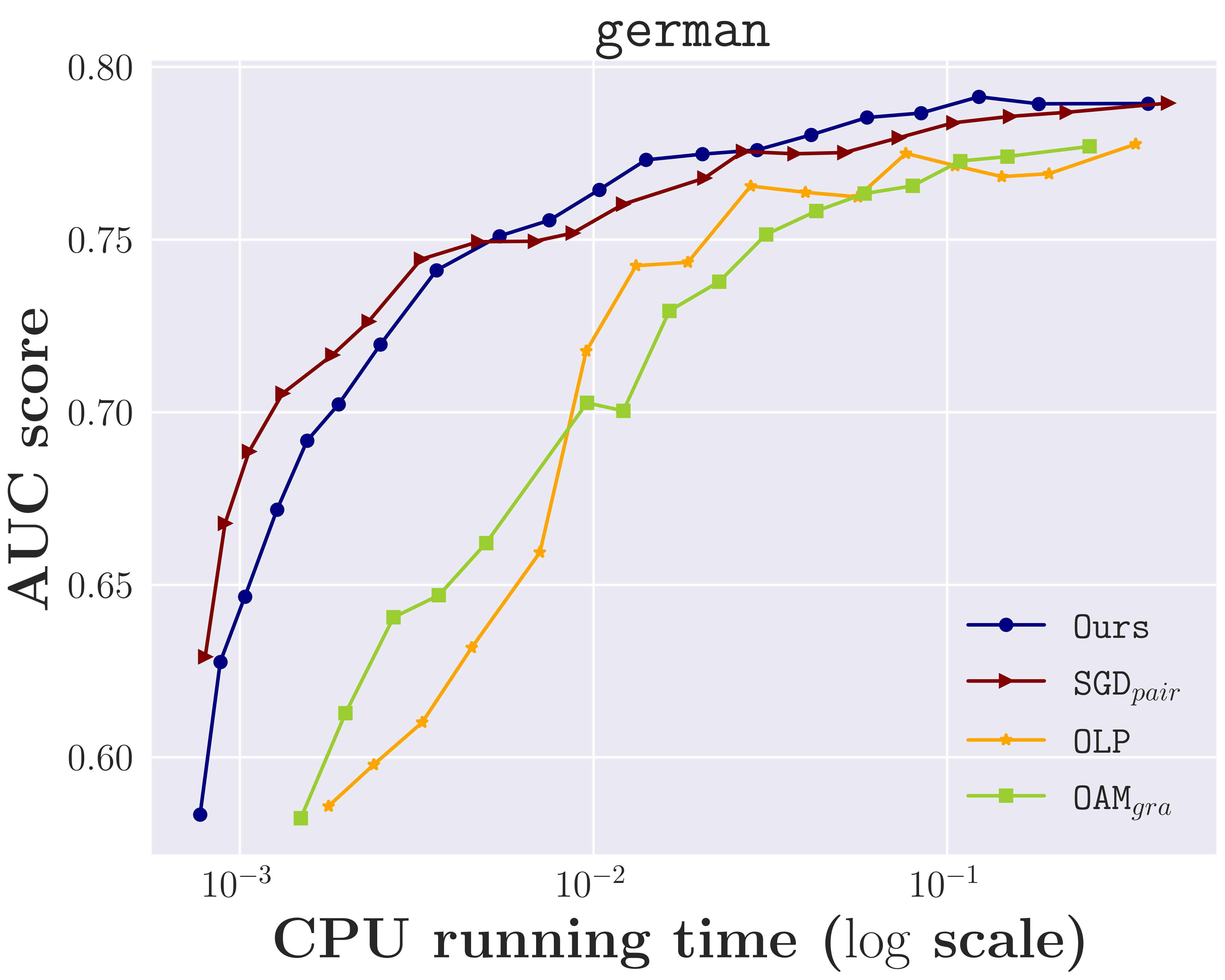}  
\end{subfigure}
\begin{subfigure}{.33\textwidth}
\centering
\includegraphics[width=.95\linewidth]{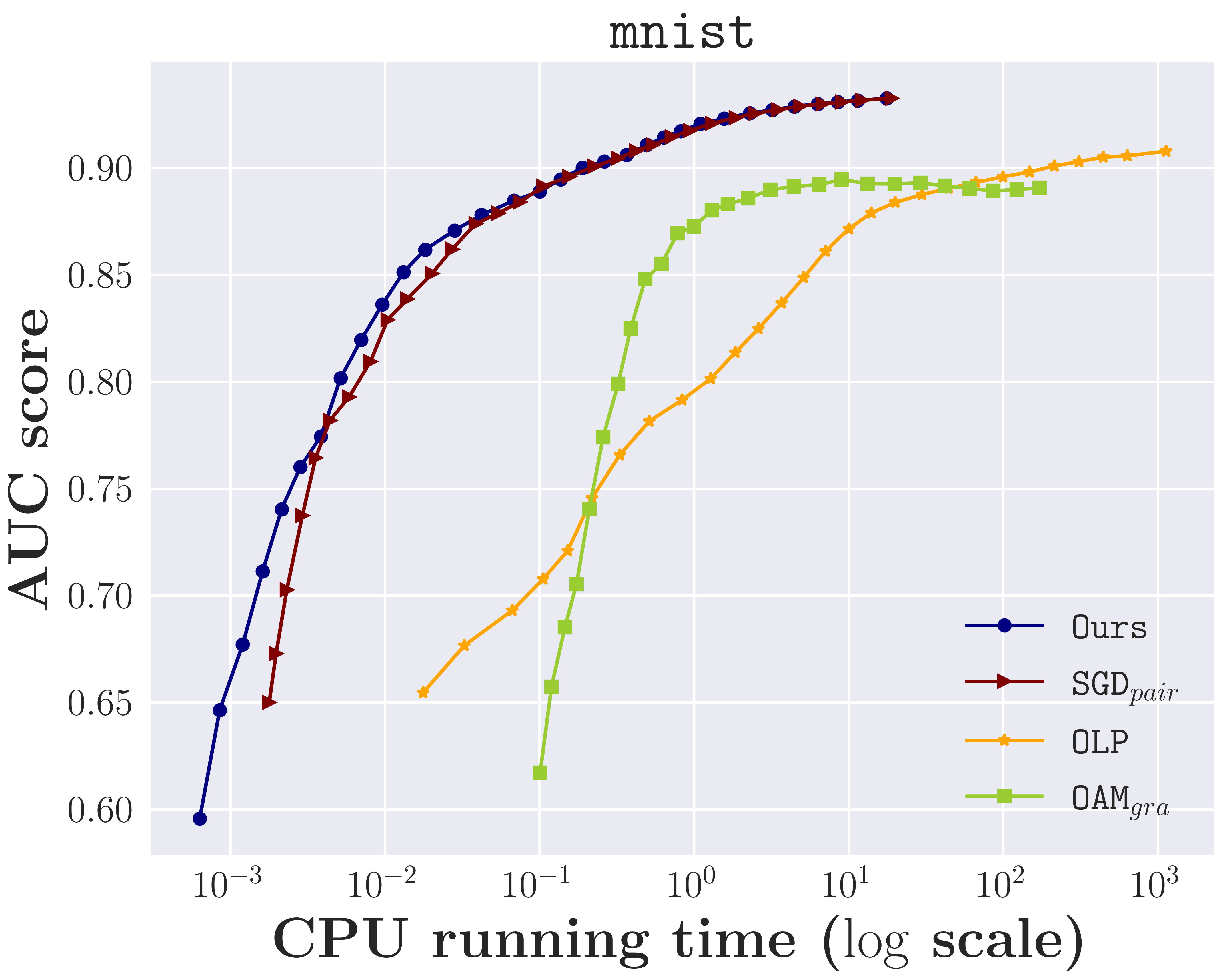}  
\end{subfigure}
\caption{More CPU running time against AUC score for the hinge loss\label{fig:iter-hinge-more}}
\vspace*{-5mm}
\end{figure}

Secondly, we also report more plots on CPU running time against the AUC score on different datasets. Figure \ref{fig:iter-hinge-more} contains more convergence plots for the hinge loss. For a fair comparison of Algorithm \ref{alg:markov} with \code{SPAUC}, the loss function is chosen as the least square loss for Algorithm \ref{alg:markov}, $\code{SGD}_{pair}$ and $\code{OLP}$. The results are shown in Figure \ref{fig:iter-square}.  We can see there that \code{SPAUC} performs very well among most of the datasets. However, this algorithm was designed very specifically for the AUC maximization problem with the least square loss while our algorithm can handle any loss functions and any pairwise learning problems.  We can also observe that 
our algorithm  and $\code{SGD}_{pair}$ converge in a similar CPU running time. In fact, Algorithm \ref{alg:markov} is slightly faster than $\code{SGD}_{pair}$ when the number of samples  gets larger. This is partly  due to different sampling schemes in Algorithm \ref{alg:markov} and  $\code{SGD}_{pair}$. Indeed, at each iteration $\code{SGD}_{pair}$ picks a random pair of examples from  $\binom{n}{2}$ pairs, while  Algorithm \ref{alg:markov} only needs to randomly pick one example from $n$ individual ones. Figure \ref{fig:sampling} depicts the CPU times of these two sampling schemes versus the the number of examples $n$. We can see that, when the sample size $n$ increases, the sampling scheme used in $\code{SGD}_{pair}$ needs significantly more time than our algorithm.

\begin{figure}[!htb]
\begin{subfigure}{.33\textwidth}
\centering
\includegraphics[width=.95\linewidth]{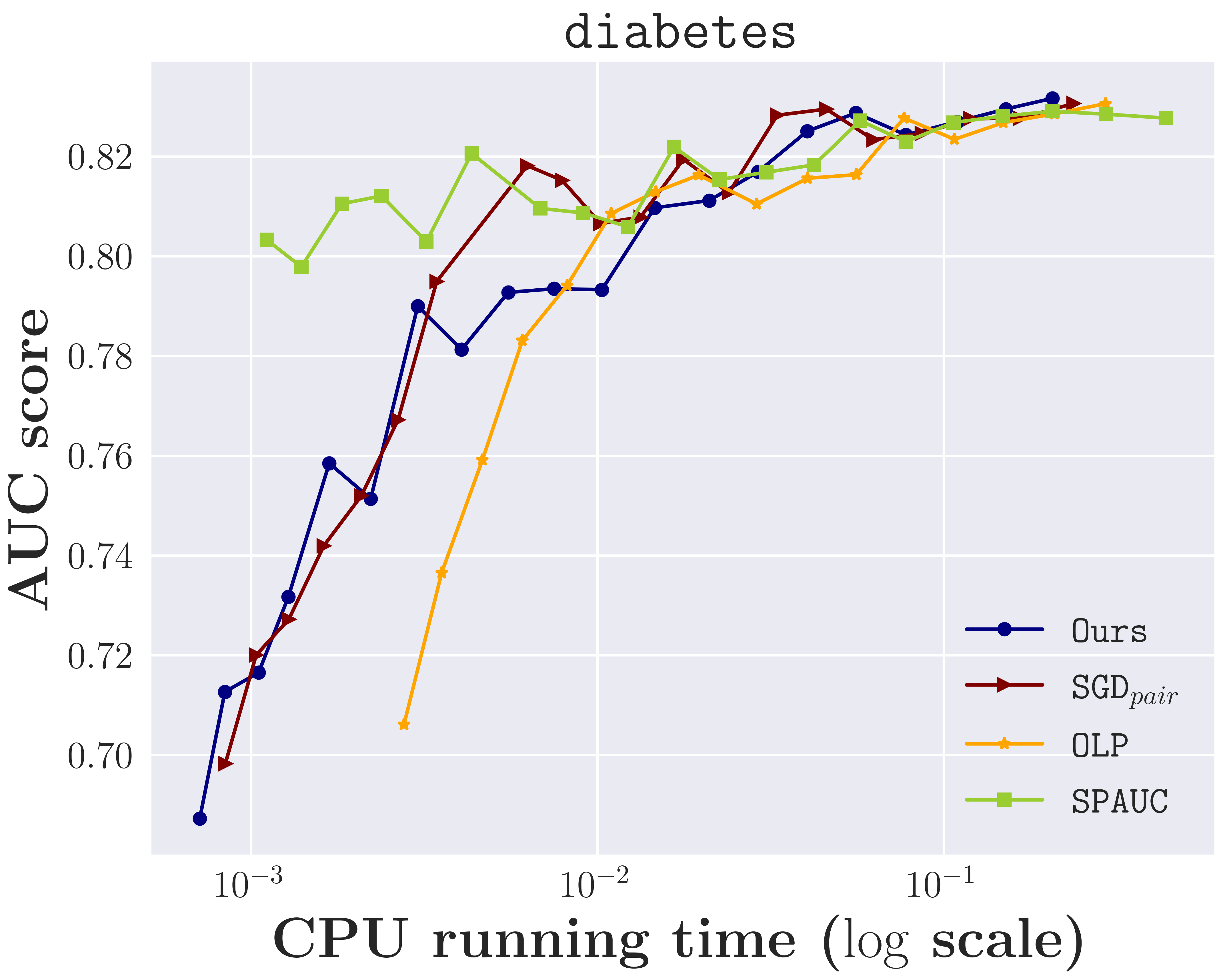}  
\end{subfigure}
\begin{subfigure}{.33\textwidth}
\centering
\includegraphics[width=.95\linewidth]{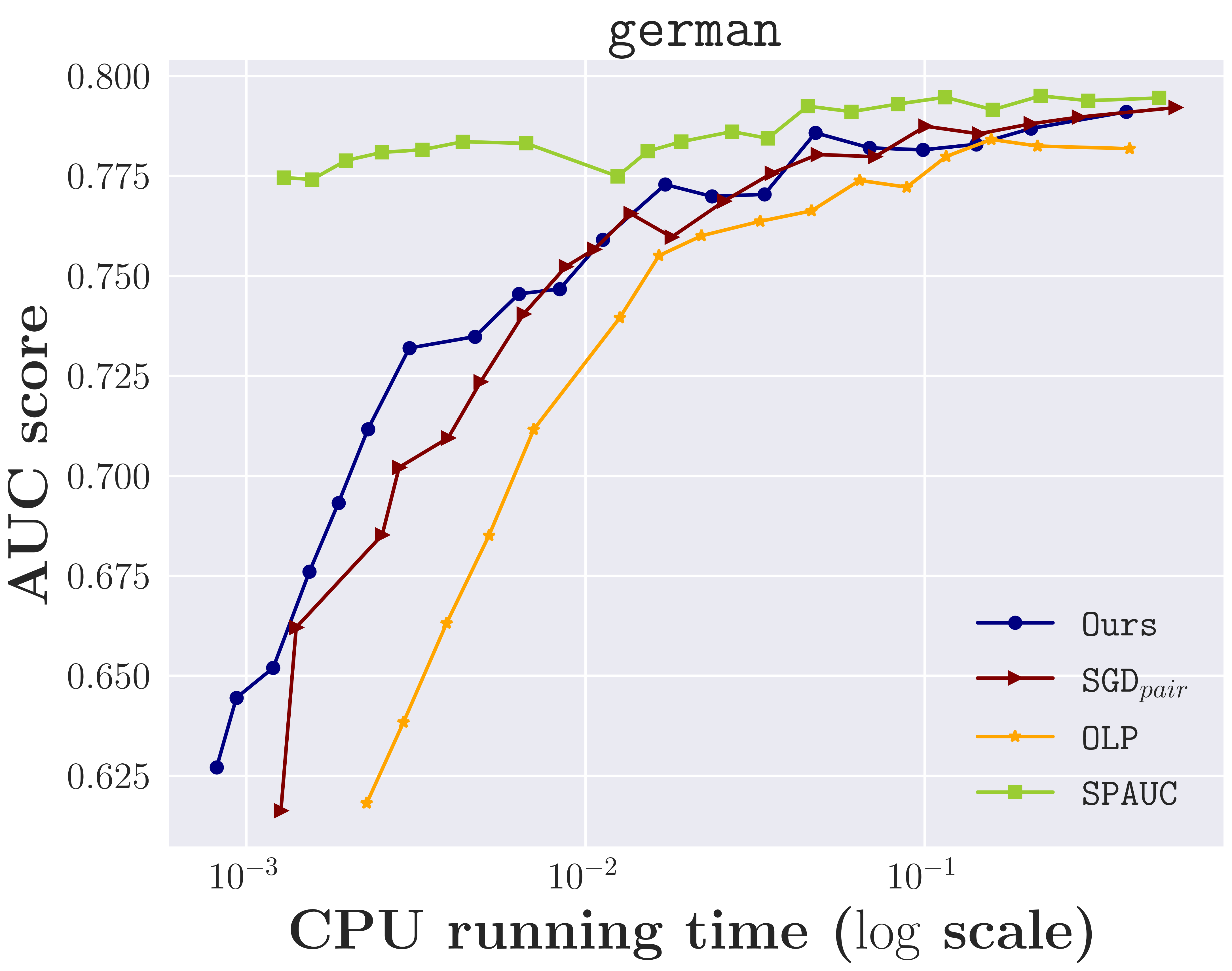}  
\end{subfigure}
\begin{subfigure}{.33\textwidth}
\centering
\includegraphics[width=.95\linewidth]{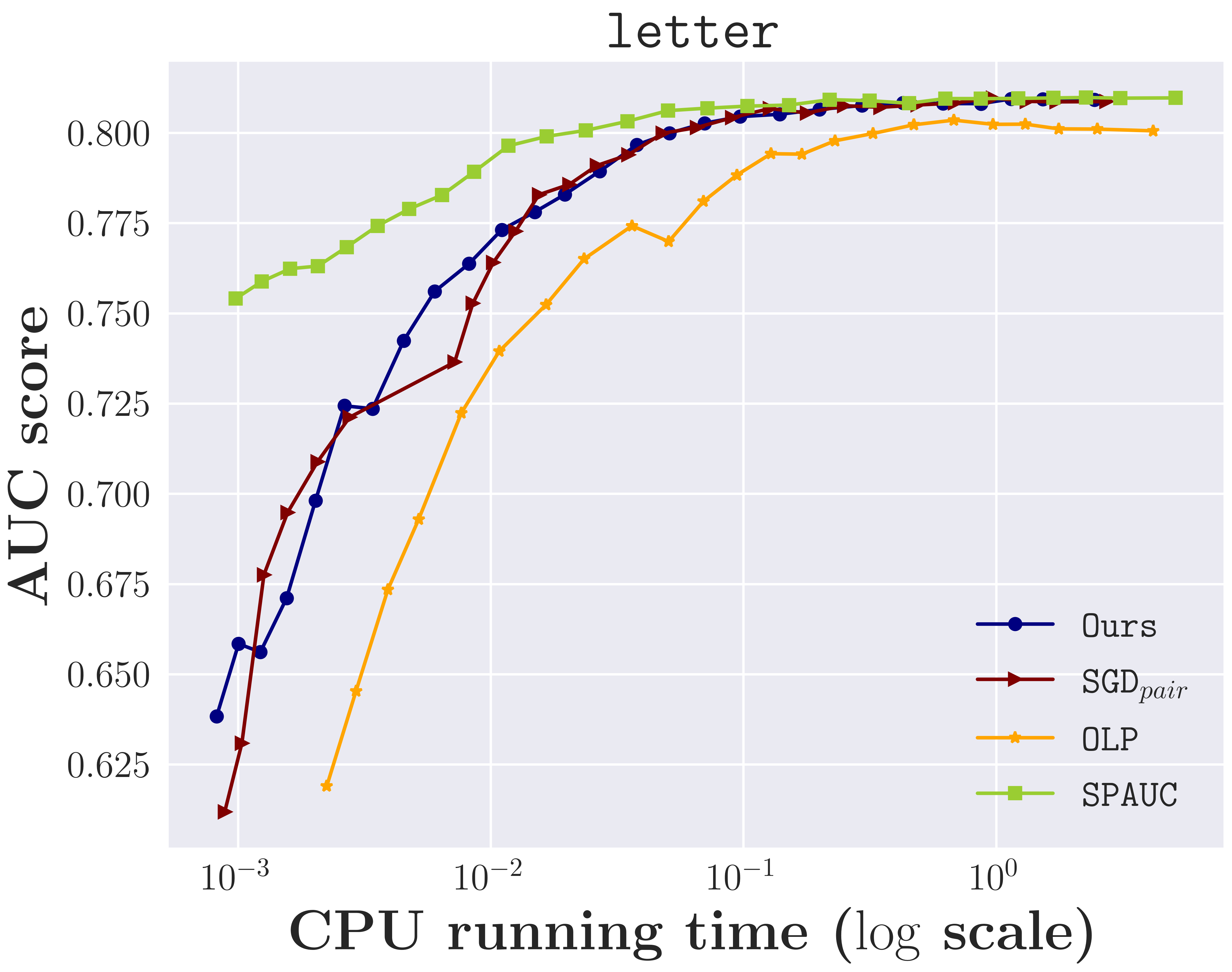}  
\end{subfigure}

\begin{subfigure}{.33\textwidth}
\centering
\includegraphics[width=.95\linewidth]{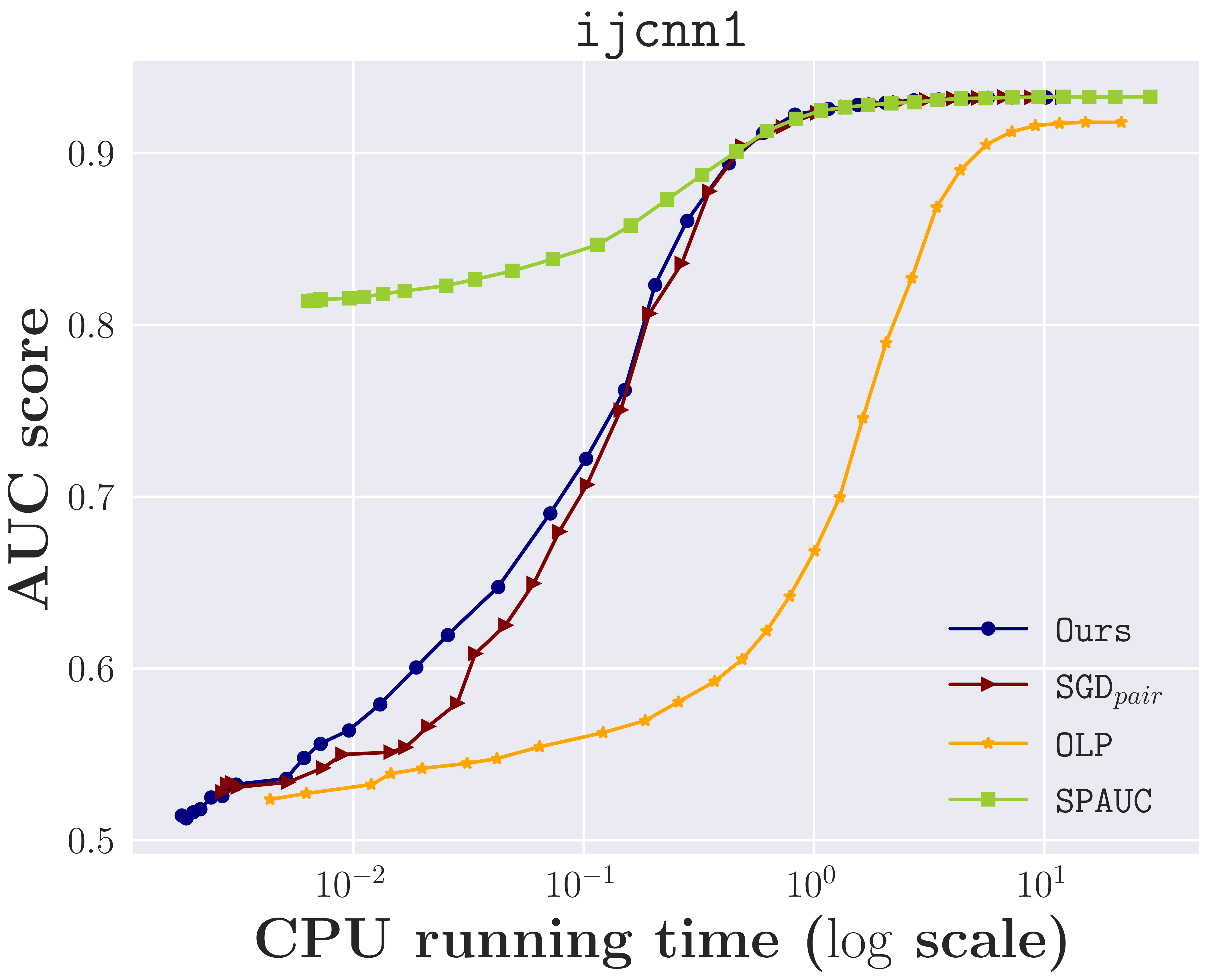}  
\end{subfigure}
\begin{subfigure}{.33\textwidth}
\centering
\includegraphics[width=.95\linewidth]{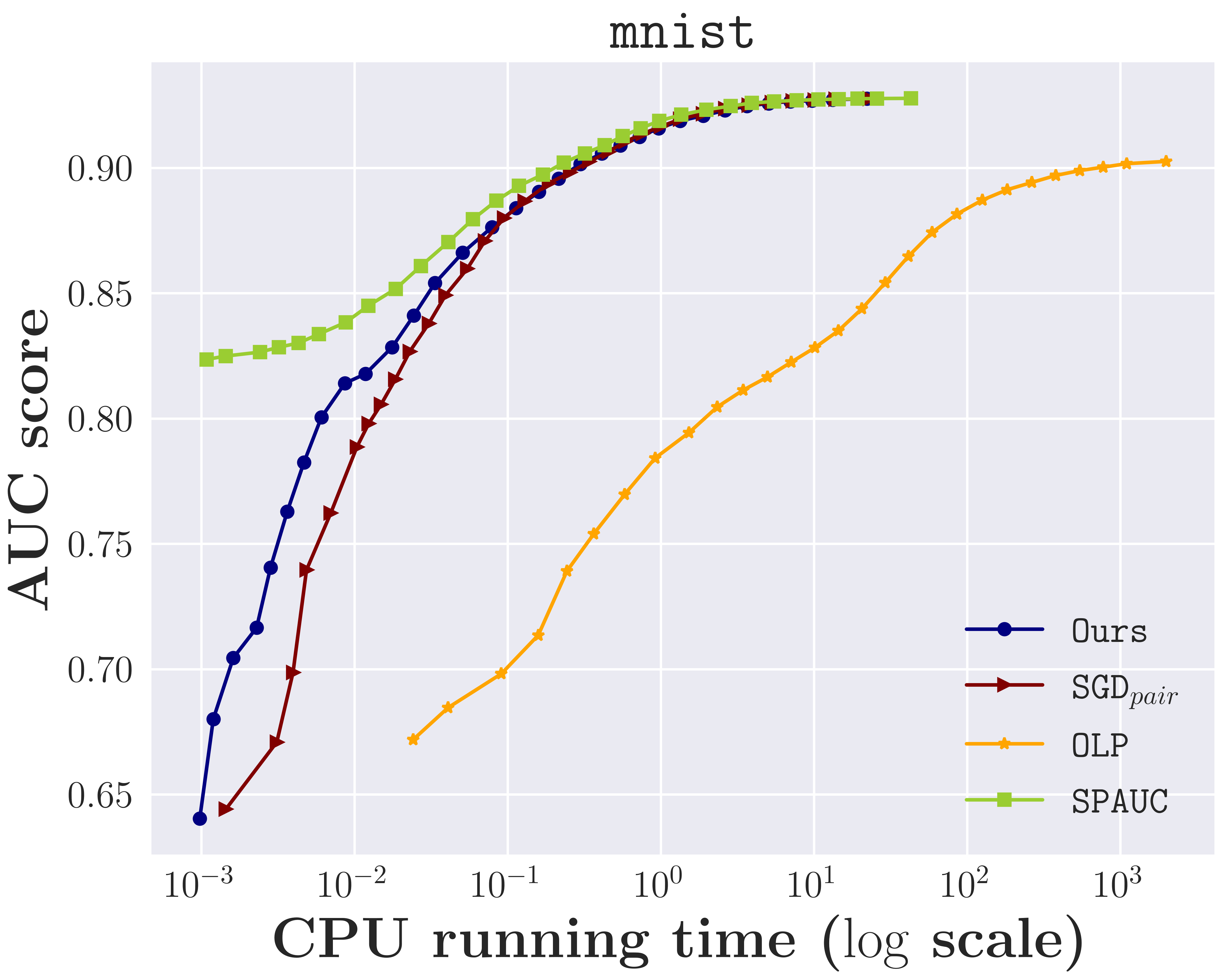}  
\end{subfigure}
\begin{subfigure}{.33\textwidth}
\centering
\includegraphics[width=.95\linewidth]{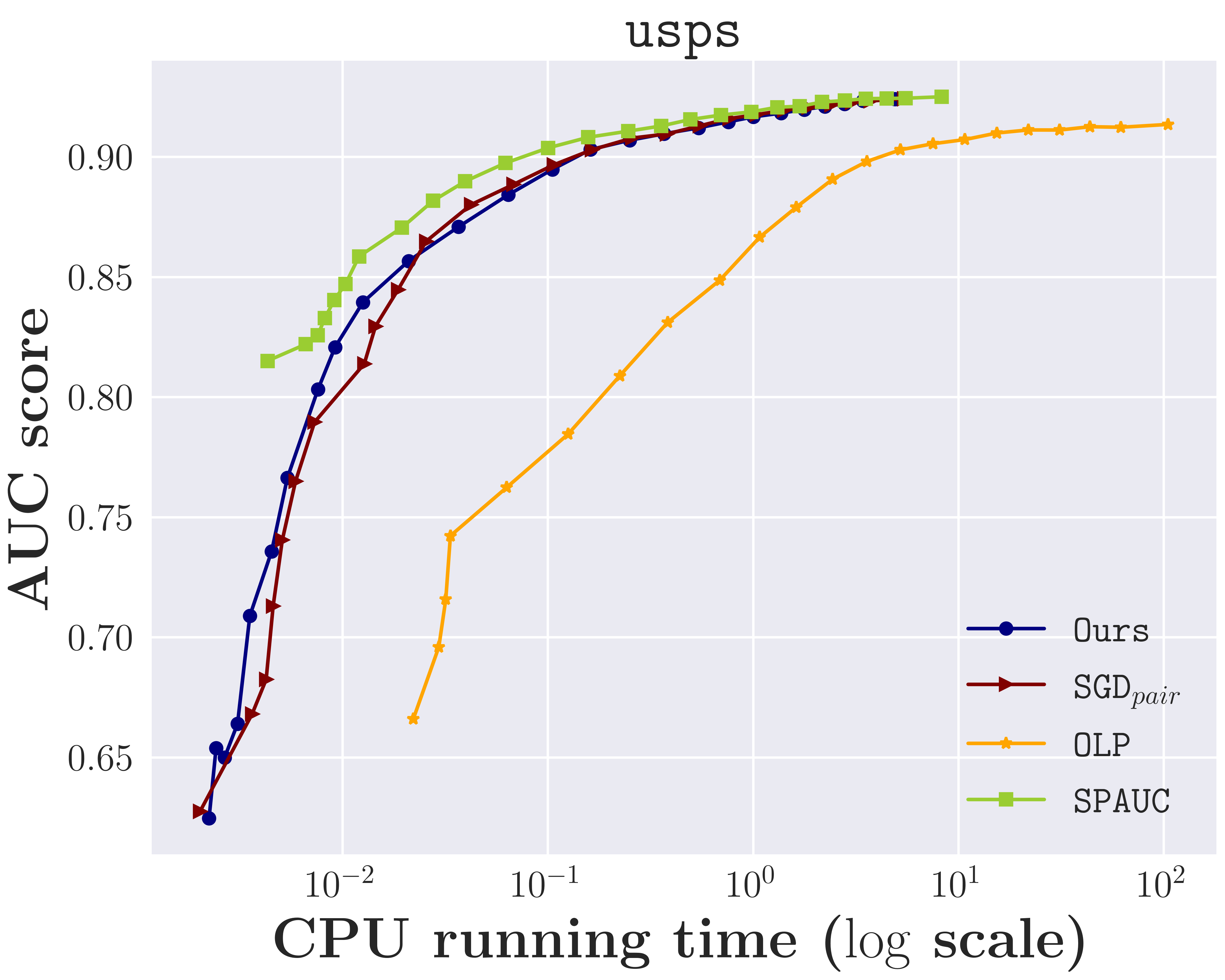}  
\end{subfigure}
\caption{AUC score against CPU running time for the square loss\label{fig:iter-square}}
\vspace*{-5mm}
\end{figure}

\begin{figure}[ht!]
\centering
\includegraphics[width=.33\linewidth]{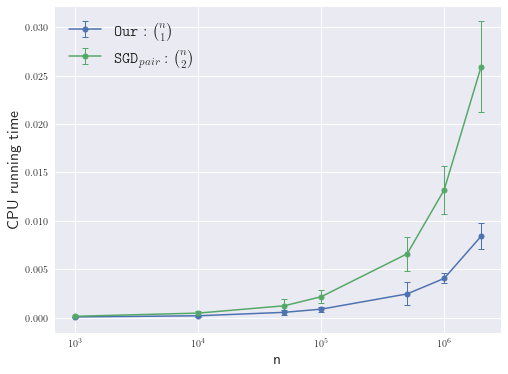} \caption{CPU running time of  different sampling schemes against the sample size $n$ \label{fig:sampling}}
\vspace*{-2mm}
\end{figure}

\begin{figure}[t]
\begin{subfigure}{.33\textwidth}
\centering
\includegraphics[width=.95\linewidth]{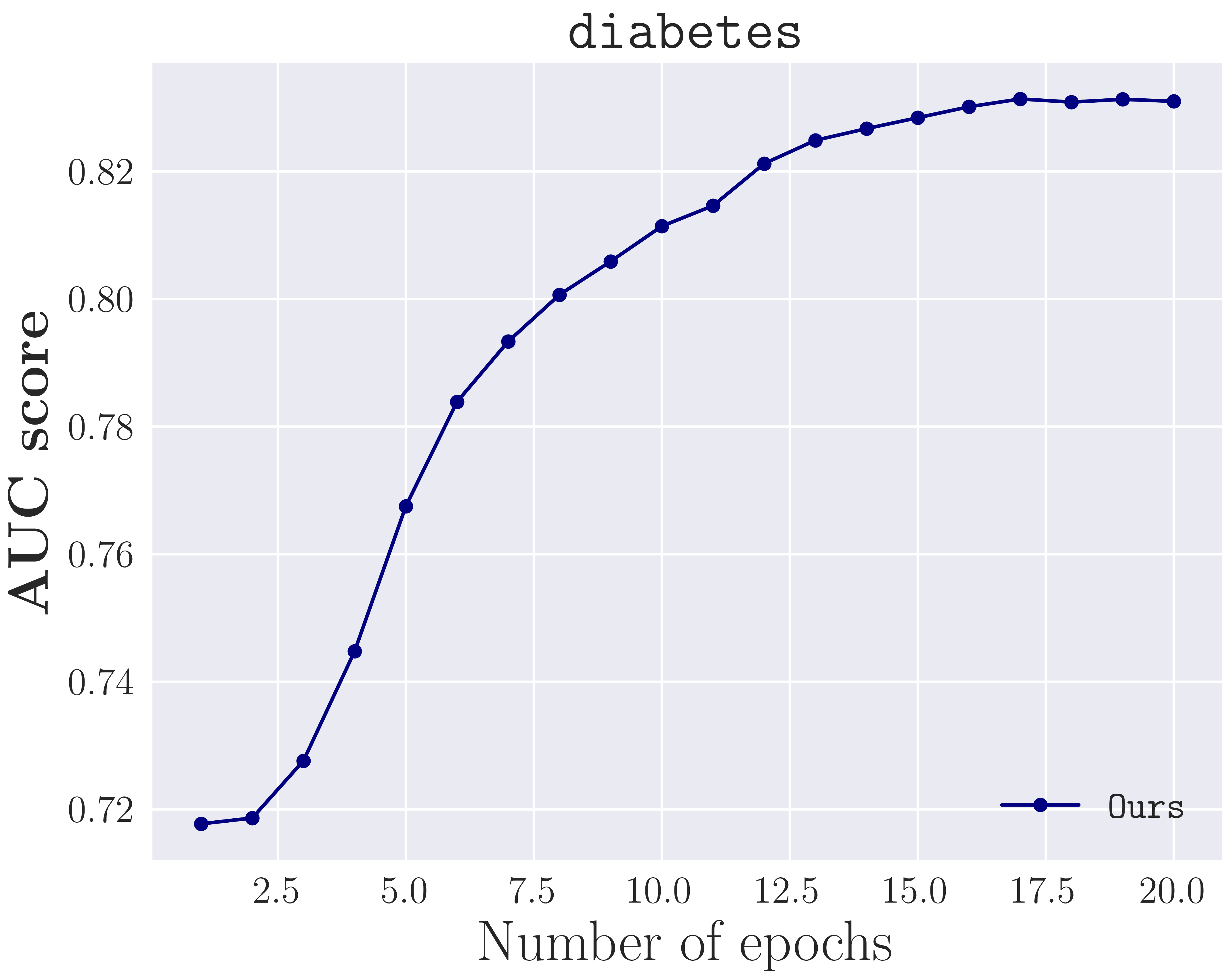}  
\end{subfigure}
\begin{subfigure}{.33\textwidth}
\centering
\includegraphics[width=.95\linewidth]{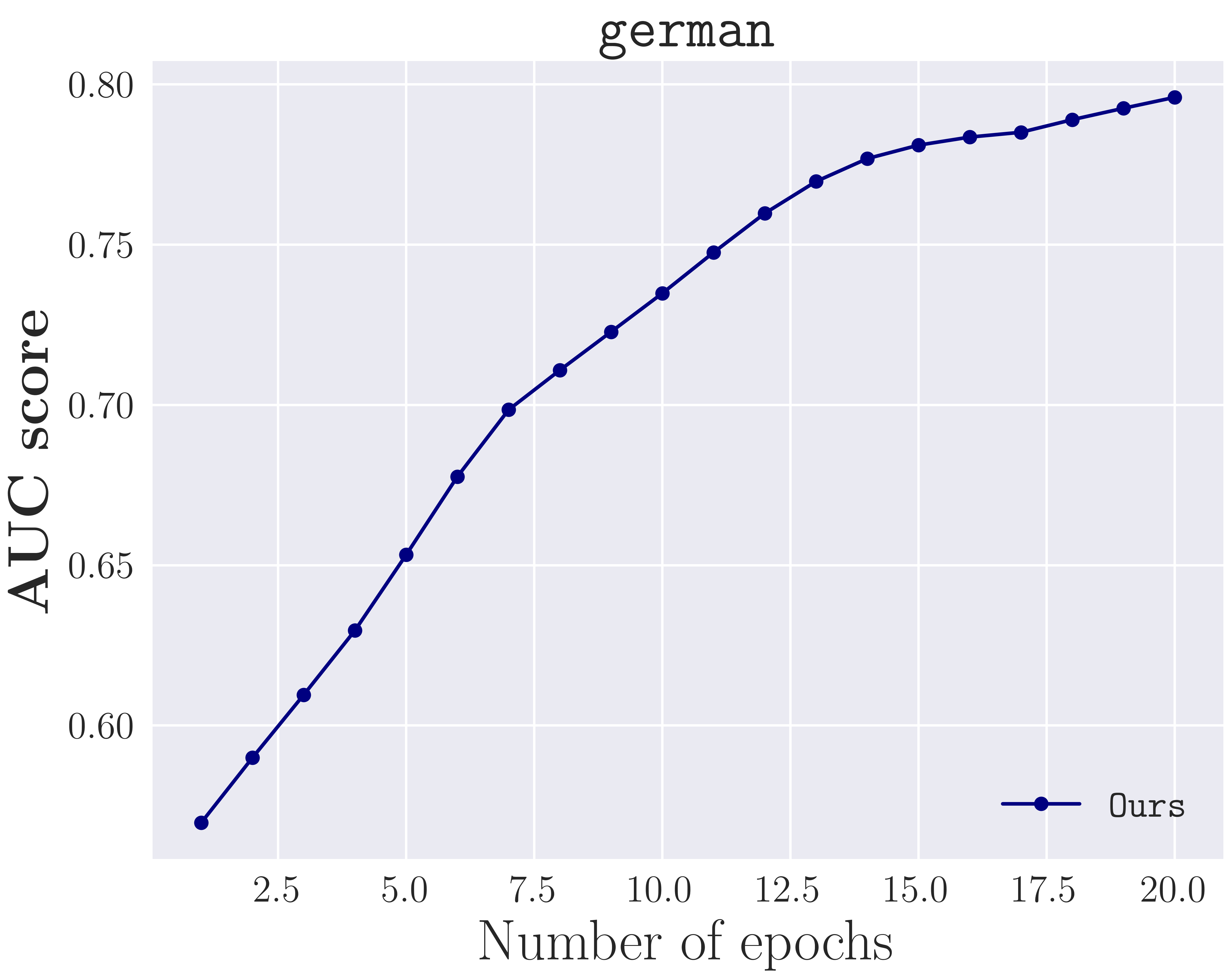}  
\end{subfigure}
\begin{subfigure}{.33\textwidth}
\centering
\includegraphics[width=.95\linewidth]{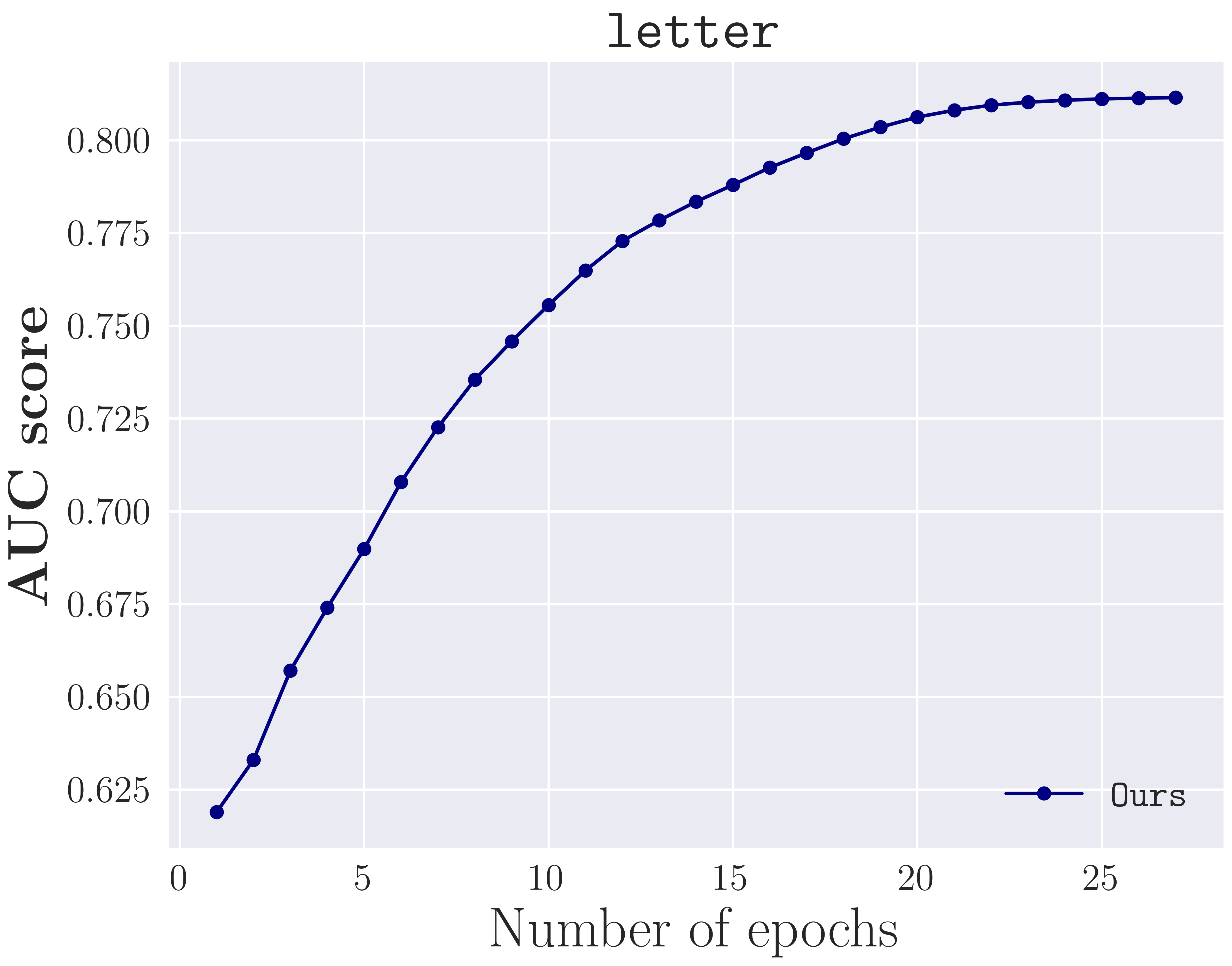}  
\end{subfigure}

\begin{subfigure}{.33\textwidth}
\centering
\includegraphics[width=.95\linewidth]{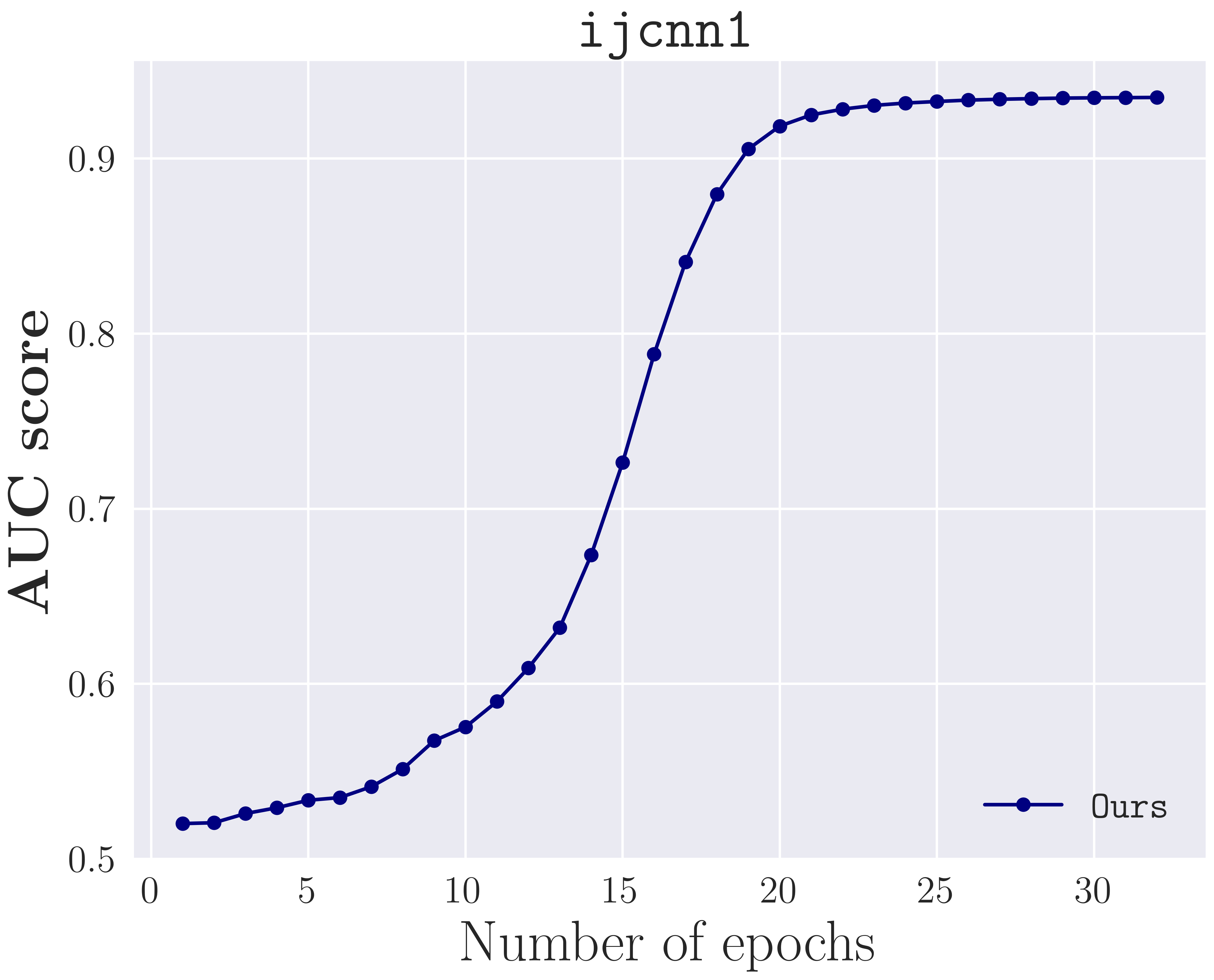}  
\end{subfigure}
\begin{subfigure}{.33\textwidth}
\centering
\includegraphics[width=.95\linewidth]{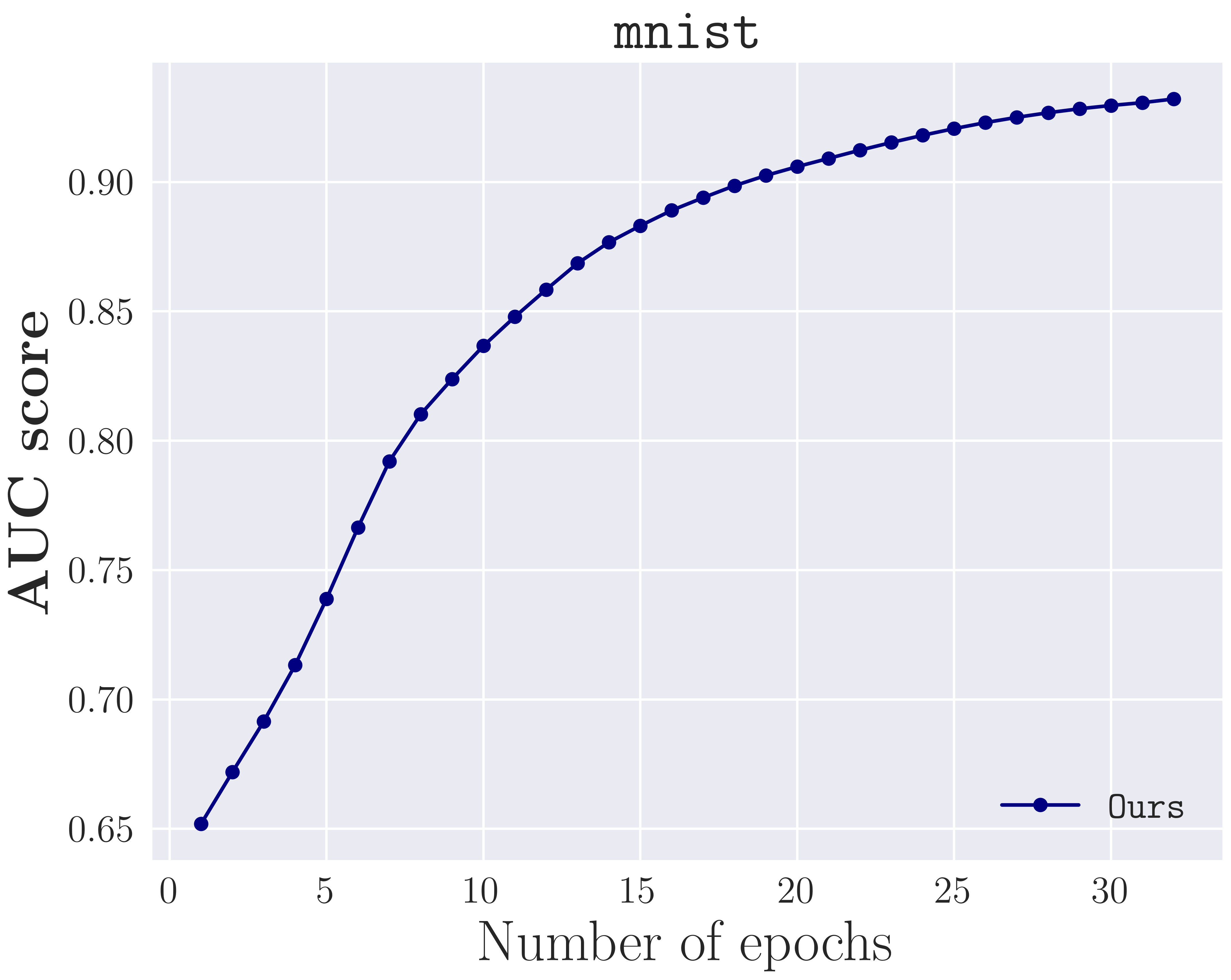}  
\end{subfigure}
\begin{subfigure}{.33\textwidth}
\centering
\includegraphics[width=.95\linewidth]{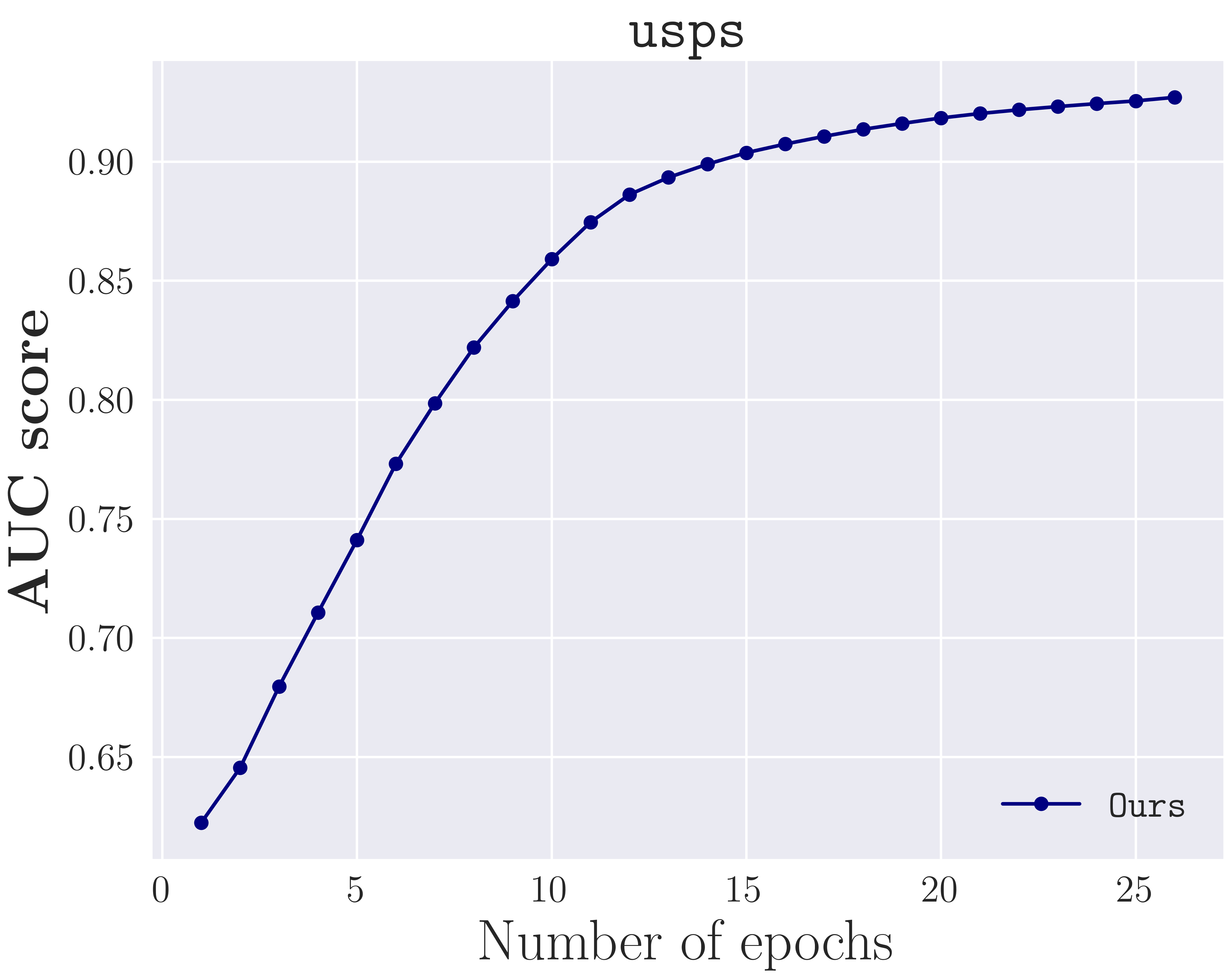}  
\end{subfigure}
\caption{Convergence of Algorithm \ref{alg:markov} for the generalized linear model\label{fig:iter-loglink}}
\vspace*{-5mm}
\end{figure}

Next, we  investigate our Algorithm \ref{alg:markov} in the non-convex setting. To this end,  we use the logistic link function $\text{logit}(t) = (1+\exp(-t))^{-1}$ and then the square loss surrogate function $\ell(t) = (1-t)^2.$   That is, the loss function for the problem of AUC maximization becomes $f(\wbf; (\xbf, y), (\xbf', y')) = (1 - \text{logit}(\wbf^\top(\xbf - \xbf'))^2\ibb_{[y=1\wedge y'=-1]}$. Although $f$ is non-convex, it was shown that it satisfies the PL condition \citep{foster2018uniform}. The results are reported in Figure \ref{fig:iter-loglink} which shows that Algorithm \ref{alg:markov} also converges very quickly in  this non-convex setting. 

Finally, we compare our differentially private algorithm for AUC maximization (i.e.  Algorithm \ref{alg:dp-iterative-localization-2}) with the  logistic loss  $\ell(t) = \log(1+\exp(-t))$  against the state-of-art algorithm \code{DPEGD} \citep{xue2021differentially}.    \code{DPEGD} used gradient descent and the localization technique to guarantee privacy.  Algorithm \ref{alg:markov} was used as non-private baseline, i.e. $\epsilon = 0$. Here,  $\delta = \frac{1}{n}$ as suggested in the previous work \citep{xue2021differentially}.  We consider the effect of different privacy budget $\epsilon$'s against the generalization ability. The implementation across all algorithms is based on fixed training size $256$. Average AUC scores over $25$ times repeated experiments are listed in Table \ref{tab:dp-logistic} and \ref{tab:dp-logistic-more} for the datasets of \texttt{diabetes} and \texttt{german}, respectively.  These results demonstrate Algorithm \ref{alg:dp-iterative-localization-2} achieves competitive  performance with \code{DPEGD} using full gradient descent. 


\begin{table*}[!ht]
\centering
\small
\caption{Average AUC $\pm$ standard deviation on \code{diabetes}. \code{Non-Private} result is $.813 \pm .016$.}
 \begin{tabular}{@{\hskip1pt}c@{\hskip1pt}|c|c|c|c|c|c}
\hline
 Algorithm & $\epsilon = 0.2$ & $\epsilon = 0.5$ & $\epsilon = 0.8$ & $\epsilon = 1.0$ & $\epsilon = 1.5$ & $\epsilon = 2.0$ \\\hline\hline 
\code{Our} & $.690 \pm .094$ & $.751 \pm .028$ &  $.771 \pm .016$ & $.783 \pm .024$ & $.784 \pm .018$ & $.789 \pm .018$  \\\hline
\code{DPEGD} \citep{xue2021differentially} & $.624 \pm .109$ &  $.727 \pm .055$ & $.768 \pm .027$ & $.796 \pm .011$ &  $.797 \pm .017$ &  $.792 \pm .016$ \\\hline
\end{tabular}
\label{tab:dp-logistic}
\vspace*{-3mm}
\end{table*}

\begin{table*}[!ht]
\centering
\small
\caption{Average AUC $\pm$ standard deviation on \code{german}. \code{Non-Private} result is $.763 \pm .016$.}
 \begin{tabular}{@{\hskip1pt}c@{\hskip1pt}|c|c|c|c|c|c}
\hline
 Algorithm & $\epsilon = 0.2$ & $\epsilon = 0.5$ & $\epsilon = 0.8$ & $\epsilon = 1.0$ & $\epsilon = 1.5$ & $\epsilon = 2.0$ \\\hline\hline 
\code{Our} &  $.614 \pm .035$ &  $.672 \pm .064$ & $.721 \pm .024$ &  $.725 \pm .032$ & $.747 \pm .019$ & $.749 \pm .021$ \\\hline
\code{DPEGD} \citep{xue2021differentially} & $.598 \pm .018$& $.703 \pm .039$ & $.723 \pm .029$ &  $.742 \pm .028$ &  $.753 \pm .017$ &  $.757 \pm .018$ \\\hline
\end{tabular}
\label{tab:dp-logistic-more}
\vspace*{-3mm}
\end{table*}

We also report the CPU running times of Algorithm \ref{alg:dp-iterative-localization-2} and $\code{DPEGD}.$ In this setting, we fix the privacy budget $\epsilon = 1$ and vary the training size $n$. The results are reported in Table \ref{tab:dp-logistic-time}. These results shows that Algorithm \ref{alg:dp-iterative-localization-2} can arrive competitive performance with $\code{DPEGD}$ with less CPU running time.

\begin{table*}[!ht]
\centering
\setlength{\tabcolsep}{4pt}
\small
\caption{Average AUC score and average CPU running time $\pm$ standard deviation.}
\label{tab:dp-logistic-time}
 \begin{tabular}{@{\hskip1pt}c@{\hskip1pt}|@{\hskip1pt}c@{\hskip1pt}|c|c|c|c|c|c}
\hline
\multirow{2}{*}{Algorithm} & & \multicolumn{3}{c|}{\code{diabetes}} & \multicolumn{3}{c}{\code{german}} \\\cline{3-8}
& & $n=100$ & $n=200$ & $n=300$ & $n=100$ & $n=200$ & $n=300$ \\\hline\hline
\multirow{2}{*}{\code{Our}} & AUC & $.709 \pm .051$ & $.788 \pm .019$ &  $.790 \pm .021$ &  $.681 \pm .029$ & $.692 \pm .032$ & $.734 \pm .022$  \\\cline{2-8}
& Time & $.046 \pm .010$ & $.096 \pm .019$ & $.135 \pm .026$ & $.377 \pm .097$ & $.637 \pm .136$ & $.767 \pm .151$  \\\hline
\multirow{2}{*}{\code{DPEGD} \citep{xue2021differentially}}  & AUC & $.705 \pm .070$ & $.772 \pm .017$ &  $.777 \pm .023$ &  $.687 \pm .033$ &  $.700 \pm .038$ &  $.755 \pm .019$ \\\cline{2-8}
& Time & $.421 \pm .067$ & $.885 \pm .158$ & $1.41 \pm .248$ & $1.00 \pm .185$ & $1.88 \pm .273$ & $2.57 \pm .421$  \\\hline
\end{tabular}
\end{table*}

\end{document}